\crefname{appsec}{Appendix}{Appendices}
\DeclareMathOperator*{\argmin}{argmin}
\DeclareMathOperator*{\arginf}{arginf}
\newcommand{\ud}{\mathrm d}
\DeclareMathOperator{\E}{\mathbb E}
\newcommand{\h}{\mathcal H}
\newcommand{\R}{\mathbb R}
\newcommand{\n}{\mathbb N}
\newcommand{\PP}{\mathbb P}
\DeclareMathOperator{\supp}{supp}
\newcommand{\x}{\mathcal X}
\newcommand{\y}{\mathcal Y}
\newcommand\given{\@ifstar{\mathrel{}\middle|\mathrel{}}{\mid}}
\DeclareRobustCommand{\abs}{\@ifstar\@abs\@@abs}
\newcommand{\@abs}[1]{\left\lvert #1 \right\rvert}
\newcommand{\@@abs}[1]{\lvert #1 \rvert}
\DeclareRobustCommand{\norm}{\@ifstar\@norm\@@norm}
\newcommand{\@norm}[1]{\left\lVert #1 \right\rVert}
\newcommand{\@@norm}[1]{\lVert #1 \rVert}
\DeclareRobustCommand{\inner}{\@ifstar\@inner\@@inner}
\newcommand{\@inner}[1]{\left\langle #1 \right\rangle}
\newcommand{\@@inner}[1]{\langle #1 \rangle}
\newtheorem{lemma}{Lemma}
\newtheorem{theorem}{Theorem}
\newlist{assumplist}{enumerate}{1}
\setlist[assumplist]{label=(\textbf{\Alph*})}
\Crefname{assumplisti}{Assumption}{Assumptions}
\newcommand\thetitle{Kernel Conditional Exponential Family}
\title{\thetitle}
\begin{document}

\begin{refsection}

\renewcommand*{\thefootnote}{\fnsymbol{footnote}}

\twocolumn[
\aistatstitle{\thetitle}

\aistatsauthor{Michael Arbel \And Arthur Gretton}
\aistatsaddress{%
  Gatsby Computational Neuroscience Unit, University College London\\
  \texttt{\{michael.n.arbel, arthur.gretton\}@gmail.com}}
]
\setlength\belowcaptionskip{-3ex}
\setcounter{footnote}{0}
\renewcommand*{\thefootnote}{\arabic{footnote}}
\begin{abstract}
	A nonparametric family of conditional distributions is introduced, which generalizes conditional exponential families  using functional parameters in a suitable RKHS. An algorithm is provided for learning the generalized natural parameter, and  consistency of the estimator is established in the well specified case. In experiments, the new method generally outperforms a competing approach with consistency guarantees, and is competitive with a deep conditional density model on datasets that exhibit abrupt transitions and heteroscedasticity.
\end{abstract}

\section{Introduction}

Distribution estimation is one of the most general problems in machine learning. Once an estimator for a distribution is learned, in principle, it allows to solve a variety of problems such as classification, regression, matrix completion and other prediction tasks.
With the increasing diversity and complexity of machine learning problems, regressing the conditional mean of $y$ knowing $x$ may not be sufficiently informative when the conditional density $p(y|x)$ is multimodal. In such cases, one would like to estimate the conditional distribution itself to get a richer characterization of the dependence between the two variables $y$ and $x$. In this paper, we address the problem of estimating conditional densities when $x$ and $y$ are continuous and multi-dimensional. 

Our conditional density model builds on a generalisation of the exponential family to infinite dimensions \citep{GuQiu93,Barron-91,Pistone-95,kernel-expfam,kenji:sieves}, where the natural parameter is a function in a reproducing kernel Hilbert space (RKHS): in this sense, like the Gaussian and Dirichlet processes, the kernel exponential family (KEF) is an infinite dimensional analogue of the finite dimensional case,  allowing to fit a much richer class of densities. 
While the maximum likelihood solution is ill-posed in infinite dimensions, \citet{Sriperumbudur:2013} have demonstrated that it is possible to fit the KEF via score matching \citep{Hyvarinen:2005}, which entails solving a linear system of size $n\times d$, where $n$ is the number of samples and $d$ is the problem dimension.
It is trivial to draw samples from such models using Hamiltonian Monte Carlo \citep{Neal:2012}, since they directly return the required potential energy \citep{Rasmussen2003,Strathmann:2015}.
In high dimensions, fitting a KEF model to samples becomes challenging, however: the computational cost rises as $d^3$, and complex interactions between dimensions can be difficult to model.

The complexity of the modelling task can be significantly reduced if a directed graphical model can be constructed over the variables, \citep{Pearl01,Jordan:1999}, where each variable depends on a subset of parent variables (ideally much smaller than the total, as in e.g.  a Markov chain).  In the present study, we extend the non-parametric family of \cite{Sriperumbudur:2013} to fit conditional distributions.
The natural parameter of the {\em conditional} infinite exponential family  is now an operator mapping the conditioning variable to a function space of features of the conditioned variable: for this reason, the score matching framework must be generalised to the vector-valued kernel regression setting of \cite{Micchelli:2005}.
We establish consistency in the well specified case by generalising the arguments of \cite{Sriperumbudur:2013} to the vector-valued RKHS. While our proof allows for general vector-valued RKHSs, we provide a practical implementation for a specific case, which takes the form of a linear system of size $n\times d$ \footnote{\texttt{The code can be found at: https://github.com/MichaelArbel/KCEF}}.

A number of alternative approaches have been proposed to the problem of conditional density estimation. \cite{Sugiyama:2010} introduced the
 Least-Square Conditional Density Estimation (LS-CDE) method, which provides an estimate  of a conditional density function $p(y|x)$ as a non-negative linear combination of basis functions. The method is proven to be consistent, and works well on reasonably complicated learning problems, although the optimal choice of basis functions for the method is an open question (in their paper, the authors use Gaussians centered on the samples).   Earlier non-parametric methods such as variants of Kernel Density Estimation (KDE) may also be used in conditional density estimation \citep{Fan:1996,Hall:1999}. These approaches also have consistency guarantees, however their performance degrades in high-dimensional settings \citep{Nagler:2016}. \cite{Sugiyama:2010} found that kernel density approaches performed less well in practice than LS-CDE.

 Kernel Quantile Regression (KQR), introduced by  \citep{Takeuchi:2006,Zhang:2016b}, allows to predict a percentile of the conditional distribution when $y$ is one-dimensional. KQR is formulated as a convex optimization problem with a unique global solution, and the entire solution path with respect to the percentile parameter can be computed efficiently \citep{Takeuchi:2009}. However, KQR applies only to one-dimensional outputs and, according to \cite{Sugiyama:2010}, the solution path tracking algorithm tends to be numerically unstable in practice.

 It is possible to represent and learn conditional probabilities without specifying probability densities, via conditional mean embeddings \citep{SonBooSidGoretal10,GruLevBalPatetal12}. These are conditional expectations of (potentially infinitely many) features in an RKHS, which can be used in obtaining conditional expectations of functions in this RKHS. Such expected features are complementary to the infinite dimensional exponential family, as they can be thought of as conditional expectations of an infinite dimensional sufficient statistic. This statistic can completely characterise the conditional distribution if the feature space is sufficiently rich \citep{SriGreFukSchetal10}, and has consistency guarantees under appropriate smoothness assumptions. Drawing samples given a conditional mean embedding can be challenging: this is possible via the Herding procedure \citep{CheWelSmol10,BacLacObo12}, as shown in \citep{KanNisGreFuk16}, but   requires a non-convex optimisation procedure to be solved for each sample.

 A powerful and recent deep learning approach to modelling conditional densities is the Neural Autoregressive Network (\cite{Uria:2013}, \cite{Raiko:2014} and \cite{Uria:2016}). These networks can be thought of as a generalization of the Mixture Density Network introduced by \cite{Bishop:2006}.  In brief, each variable is represented by a mixture of Gaussians, with means and variances depending on the parent variables through a deep neural network.  The network is  trained on  observed data  using stochastic gradient descent.
 Neural autoregressive networks  have shown their effectiveness for a variety of practical cases and learning problems.
Unlike the earlier methods cited, however, consistency is not guaranteed, and these methods require non-convex optimization, meaning that locally optimal solutions are found in practice.

We begin our presentation in  \cref{sec:expFamiliesScoreMatching}, where we  briefly present the Kernel Exponential Family. We generalise this model to the conditional case, in our first major contribution: this requires the introduction of vector-valued RKHSs and associated concepts. We then show that a generalisation of score matching may be used to fit the conditional density models for general vector valued RKHS, subject to appropriate conditions. We call this model the kernel conditional exponential family (KCEF).

Our second contribution, in \cref{sec:kernel_ridge_estimator}, is an empirical estimator for the \textit{natural parameter} of the KCEF (\cref{thm:main_estimator}), with convergence guarantees in the well-specified case (\cref{thm:main_rates}). In our experiments (\cref{sec:experiments}), we empirically validate the consistency of the estimator and compare it to other methods of conditional density estimation. Our method generally outperforms the leading alternative with consistency guarantees (LS-CDE). Compared with the deep approach (RNADE) which lacks consistency guarantees, our method has a clear advantage at small training sample sizes while being competitive at larger training sizes.

\section{Kernel Exponential Families}\label{sec:expFamiliesScoreMatching}
In this section we first present the kernel exponential family, which we then  extend  to a class of conditional exponential families. Finally, we provide a methodology for unnormalized density estimation within this class.
\subsection{Kernel Conditional Exponential Family}
\label{sec:KEF}
We consider the task of estimating the density $p(y)$ of a random variable $Y$ with support $\mathcal{Y}\subseteq\R^d$  from i.i.d samples $(Y_i)_{i=1}^n$.
 We propose to use a family of densities  parametrized by functions belonging to a Reproducing Kernel Hilbert Space (RKHS) $\h_{\y}$ with positive semi-definite kernel $k$ \citep{kernel-expfam,kenji:sieves,Sriperumbudur:2013}.
This exponential family of density functions takes the form
\begin{align}
	\left\{ p(y) := q_0(y) \frac{\exp{ \langle f,k(y,.)\rangle_{\h_{\y}} }}{Z(f)} \bigg\vert f \in \mathcal{F}   \right\},
	\label{eq:KEF}
\end{align}
where $q_0$ is a base density function on $\mathcal{Y}$ and  $ \mathcal{F}$ is the set of functions in the RKHS space $\h_{\y}$ such that $  Z(f) := \int_{\mathcal{Y}} \exp{ \langle f,k(y,.)\rangle_{\h_{\y}} }  q_0(y) dx < \infty$. In what follows, we call this family the \textit{kernel exponential family} (KEF) by analogy to classical exponential family. $f$ plays the role of the natural parameter while $k(y,.)$ is the sufficient statistic. Note that with an appropriate choice of the base distribution $q_0$ and a finite dimensional RKHS $\h_{\y}$, one can recover any finite dimensional exponential family. 
When $\h_{\mathcal{Y}}$ is infinite-dimensional, however, the family can approximate a much  broader class of densities on $\R^d$: under  mild conditions, it is shown in \cite{Sriperumbudur:2013} that the KEF approximates all densities of the form $\{ q_0(y)\exp{ (f(y) - A) }| f \in C_0(\mathcal{Y}) \}$, $A$ being the normalizing constant and $C_0(\mathcal{Y})$ the set of continuous functions with vanishing tails. 

Given two subsets $\mathcal{Y}$ and $\mathcal{X}$ of $\R^d $ and $\R^p $ respectively, we now propose to extend the KEF to a family of conditional densities $p(y|x)$. We  modify equation \cref{eq:KEF} by making the function $f$ depend on the conditioning variable $x$. The parameter $f$ is a function of two variables $x$ and $y$, $f: \x\times \y \rightarrow \R $ such that $y \mapsto f(x,y)$ belongs to the RKHS $\h_{\y}$ for all $x$ in $\x$. In all that follows, we will denote by $T$ the mapping
\[
T \::\: \mathcal{X} \rightarrow \h_{\y}
\qquad
x \mapsto T_x
\]
 such that $T_x(y) =  f(x,y)$ for all $y$ in $\y$

We next consider how to enforce a smoothness requirement on $T$ to make the conditional density estimation problem well-posed. To achieve this, we will require that the mapping $T$
belongs to a vector valued RKHS $\h$: we now briefly review the associated theory, following \citep{Micchelli:2005}. A Hilbert space $ (\h, \langle ., .\rangle_{\h} ) $ of functions $ T:\mathcal{X} \rightarrow \h_{\y} $ taking values in a vector space $\h_{\y}$ is said to be a vector valued RKHS if for all $x\in \mathcal{X}$ and $ h \in \h_{\y} $, the linear functional $T \mapsto \langle h, T_x \rangle_{\h_{\y}} $ is continuous. 
The reproducing property for vector-valued RKHSs follows from this definition. By the Riesz representation theorem, for each $x\in \mathcal{X}$ and $ h \in \h_{\y} $, there exists a linear operator $\Gamma_x $ from $\h_{\y}$ to $\h$ such that for all $T \in \h$,
\begin{align}
	\langle h, T_x \rangle_{\h_{\y}} = \langle T, \Gamma_x h \rangle_{\h}
		\label{eq:reproducing_op}
\end{align} 
Considering the dual operator $\Gamma_x^{*}  $ from $\h$ to $\h_{\y}$, we also get 
\begin{align}
	\Gamma_x^{*}T = T_x .
\end{align}

We can define a vector-valued reproducing kernel by composing the operator $\Gamma_x$ with its dual,
\begin{align}
	\Gamma(x,x') = \Gamma_x^{*}\Gamma_{x^{'}},
\end{align}
where for all $x$ and $x^{'}$, $ \Gamma(x,x^{'})$ is a bounded linear operator from $\h_{\y}$ to $\h_{\y}$, i.e.,  $\Gamma(x,x^{'}) \in   \mathcal{L}(\h_{\y} )$. The space $\h$ is said to be generated by an operator valued reproducing kernel $\Gamma$. One practical choice for $\Gamma$ is to define it as: 
\begin{align}
\label{eq:diagonal_op_kernel}
	\Gamma(x, x^{'}) = k_{\mathcal{X}}(x, x^{'})I_{\h_{\y}}  && \forall x, x^{'} \in \mathcal{X},
\end{align}
where $ I_{\h_{\y}} $ the identity operator on $\h_{\y}$ and $k_{\mathcal{X}}$ is now a real-valued kernel which generates a real valued RKHS $\h_{\mathcal{X}}$ on $\x$ \citep[as in the conditional mean embedding; see][]{GruLevBalPatetal12}. A simplified form of the estimator of $T$ will be presented in \cref{sec:kernel_ridge_estimator} for this particular choice for $\Gamma$ and will be used in the experimental setup in \cref{sec:experiments}.

We will now express $T_x(y)$ in a convenient form that will allow to extend the KEF. 
For a given $x$, recalling that $T_x$ belongs to $\h_{\y}$, one can write $T_x(y) = \langle T_x, k(y,.)\rangle_{\h_{\y}}$
for all $y$ in $\mathcal{Y}$.
Using the reproducing property in \cref{eq:reproducing_op}, one further gets $T_x(y) = \langle T, \Gamma_x k(y,.)\rangle_{\h}$.
By considering the subset $\mathcal{T}$ of $\h$ such that for all $x$ in $\x$ the  integral 
\begin{align}
	Z(T_x) := \int_{\y} q_0(y)\exp(\langle T, \Gamma_x k(y,.)\rangle_{\h})\ud y < \infty
\end{align}
is finite, we define the \textit{kernel conditional exponential family} (KCEF) as the set of conditional densities
\begin{align}
\label{cond_exp_fam}
	\left\{ p_{T}(y|x):= q_0(y)\frac{\exp{ \langle T, \Gamma_x k(y,.)\rangle_{\h} }}{ Z(T_x) } \bigg\vert T\in \mathcal{T}  \right\}	.
\end{align}
Here $T$ plays the role of the natural parameter while  $ \Gamma_x k(y,.) $ is the sufficient statistic. When $T$ is restricted to be constant with respect to $x$, we recover the \textit{kernel exponential family} (KEF). The KCEF is therefore an extension of the KEF introduced in \cite{Sriperumbudur:2013}. It is also a special case of the family introduced in \cite{kernel-expfam}. In the latter, the inner product is given by $\langle T, \phi(x,y)\rangle_{\h}$ where $\phi$ is a general feature of $x$ and $y$. In the present work, $\phi$ has the particular form $\phi(x,y) = \Gamma_x k(y,.)$. This allows to further express $p_T(y|x)$ for a given $x$ as an element in a KEF with sufficient statistic $k(y,.)$, by using the identity  $\langle T,\Gamma_x k(y,.)\rangle_{\h} = \langle T_x, k(y,.)\rangle_{\h_{\y}}$.
This is desirable since $p_T(y|x)$ remains in the same KEF family as $x$ varies and only the natural parameter $T_x$ changes.
\subsection{Unnormalized density estimation}
\label{sec:score_matching}
Given i.i.d samples $(X_i, Y_i)_{i=1}^n$ in $ \x\times \y $ following a joint distribution $\pi(x) p_0(y|x)$, where $\pi$ defines a marginal distribution over $\x$ and $p_0(y|x) $ is a conditional density function, we are interested in estimating $p_0$ from the samples  $(X_i, Y_i)_{i=1}^n$. Our goal is to find the optimal conditional density $p_{T}$ in the KCEF that best approximates $p_0$. 
The intractability of the normalizing constant  $Z(T_x)$ makes maximum likelihood estimation difficult.  \cite{Sriperumbudur:2013} used a score-matching approach  (see \cite{Hyvarinen:2005}) to avoid this normalizing constant; in the case of the KCEF, however,  the score function between $\pi(x) p_0(y|x)$ and $\pi(x) p_T(y|x)$ contains additional terms that involve the derivatives of the $\log$-partition function $ \log Z(T_x) $ with respect to $x$. Instead, we now propose a different approach with a modified version of the score-matching objective.

We define the expected conditional score between two conditional densities $ p_0(y|x) $ and $q(y|x)$ under a marginal density $\pi$ on $x$ to be:

\begin{align}
	J(p_0|q) := \int_{\x} \pi(x)  \mathcal{J}(p_0(.|x) || q(.|x)) dx
\end{align}
where:
\begin{align}
\mathcal{J}(p_0(.|x) \Vert q(.|x)) = \frac{1}{2}
\int_{\mathcal{Y}} p_0(y|x) \left\Vert \nabla_y \log \frac{p_0(y|x)}{q(y|x)} \right\Vert^2 dy
.\end{align}

For a fixed value $x$ in $\x$,  $\mathcal{J}(p_0(.|x) || q(.|x))$ is the score-matching function between $p_0(.|x) $ and $q(.|x) $ as defined by \cite{Hyvarinen:2005}. We further take the expectation over $x$ to define a divergence over conditional densities. 
The normalizing constant of  $q(y|x)$, which is a function of $x$,  is never involved in this formulation, as we take the gradient of the log-densities over $y$ only.
For a conditional density $p_0(y|x)$ that is supported on the whole domain $\y$ for all $x$ in $\x$, the expected conditional score is well behaved in the sense that $ J(p_0|q)$ is always non-negative, and reaches $0$ if and only if the two conditional distributions $p_0(y|x)$ and $q(y|x)$ are equal  for $\pi$-almost all $x$. More discussion can be found in \cref{app:failure_case}, for the case when this condition fails to hold.
The goal is then to find a conditional distribution $p_T$ in the KCEF for a given $T\in \mathcal{T}$ that minimizes this score over the whole family.

Under mild regularity conditions on the densities \citep[see][and below]{Hyvarinen:2005,Sriperumbudur:2013}, the score can be rewritten
\begin{align}
\label{score_2}
J(p_0||p_T) =&\E\Big[ \sum_{i=1}^d  \partial_i^2 T_x(y) + \frac{1}{2} (\partial_i T_x(y))^2 \Big] \\
+& \E\Big[ \sum_{i=1}^d \partial_i T_x(y) \partial_i\log q_0(y) \Big] + J(p_0||q_0)
\end{align}
where $J(p_0 || q_0)$ is a constant term for the optimization problem and the expectation is taken over $ \pi(x)p_0(y|x)$. All derivatives are with respect to $y$, and we used the notation $\partial_i f(y) =  \frac{\partial}{\partial y_i}f(y) $.
In the case of KCEF,  conditions to obtain this expression are satisfied  under assumptions in \cref{app:Assumptions}, as proved in \cref{thm:score_match} of \cref{app:score_match} .
The expression is further simplified using the reproducing property for the derivatives of functions in an RKHS (\cref{lemm:reprodcing_derivatives} of \cref{app:auxiliary}),
\begin{align}
	\partial_i T_x(y) &=  \langle T , \Gamma_{x}\partial_i k(y,.) \rangle_{\h}\\
	\partial_i^2 T_x(y) &=   \langle T , \Gamma_{x}\partial_i^2 k(y,.) \rangle_{\h}
\end{align}
which leads to:
 \begin{align}
 \label{score_final}
 	J(T) =& \E\Big[\sum_{i=1}^d  \frac{1}{2} \big\langle T , \Gamma_{x}\partial_i k(y,.) \big\rangle^2_{\h} + \big\langle T , \xi_i(x,y) \big\rangle_{\h} \Big]
 \end{align}
 with:
  \begin{align}
  \label{eq:xi_i}
 \label{xi}
\xi_i(x,y) =  \Gamma_{x}(\partial_i^2 k(y,.) + \partial_i\log q_0(y)\partial_i k(y,.)).
 \end{align}
  We introduced the notation  $J(T) := J(p_0||p_T) - J(p_0||q_0)$ for convenience.
 This formulation depends on $p_0(y|x)$ only through an expectation, therefore a Monte Carlo estimator of the score can be derived as a quadratic functional of $T$ in the RKHS  $ \h $,
 \begin{align}
 	\hat{J}(T) =& \frac{1}{n}\sum_{\substack{ b\in [n]\\i\in[d]}} \frac{1}{2} \big\langle T , \Gamma_{X_b}\partial_i k(Y_b,.) \big\rangle^2_{\h}  + \langle T , \xi_i(X_b, Y_b) \big\rangle_{\h}.
 \end{align}
Note that the objective functions $J(T)$ and $\hat{J}(T)$ can be defined over the whole space $\h$, whereas $J(p_0 | p_T)$ is meaningful only if $T$ belongs to $\mathcal{T}$.

\section{Empirical KCEF and consistency}
\label{sec:kernel_ridge_estimator}
In this section,  we will first estimate the optimal $T^{*} =  \argmin_{T\in\h} J(T) $ over the whole space $\h$ by minimizing a regularized version of the quadratic form in equation $\hat{J}(T)$, then we will state conditions under which all of the obtained solutions belong to $\mathcal{T}$ defining therefore conditional densities in the KCEF. 

Following \cite{Sriperumbudur:2013}, we define the kernel ridge estimator to be $ T_{n,\lambda} = \argmin_{T\in \h} \hat{J}(T) + \frac{\lambda}{2} \Vert T \Vert^2_{\h} $ where $ \Vert T \Vert_{\h} $ is the RKHS norm of $T$. $T_{n,\lambda}$ is then obtained by solving a linear system of $nd$ variables as shown in the next theorem:
\begin{theorem}
\label{thm:main_estimator}
Under assumptions listed in \cref{app:Assumptions}, and in particular if $\norm{\Gamma(x,x)}_{Op}$ is uniformly bounded on $\x$ for the operator norm, then the minimizer $T_{n,\lambda}$ exists, is unique, and is given by
	\begin{align}
		T_{n,\lambda} = -\frac{1}{\lambda} \hat{\xi} + \sum_{\substack{ b\in [n];i\in[d]}} \beta_{(b,i)} \Gamma_{X_b}\partial_i k(Y_b,\cdot),
	\end{align}
where
	\begin{align}
		\hat{\xi} = \frac{1}{n} \sum_{\substack{ b\in [n];i\in[d]}} \xi_i(X_b,Y_b),
	\end{align}
	and $\xi_i$ are given by \cref{eq:xi_i}.
	 $\beta_{(b,i)}$ denotes the $(b-1)d+i$ entry of a vector $\beta $ in $\R^{nd}$, obtained by solving the linear system
	\begin{align}
	(G + n\lambda I )\beta = \frac{h}{\lambda}	,
	\end{align}
where $G$ is an $nd$ by $nd$ Gram matrix, and $h$ is a vector in $\R^{nd}$,
\begin{align}
		(G)_{(a,i), (b,j)} 
		=&
		\inner{\Gamma_{X_a}\partial_i k(Y_a,\cdot),  \Gamma_{X_b}\partial_j k(Y_b,\cdot)}_{\h}\\
		(h)_{(b,i)} 
			 =&
			  \langle \hat{\xi}, \Gamma_{X_b}\partial_i k(Y_a,\cdot)  \rangle_{\h}
.\end{align}
\end{theorem}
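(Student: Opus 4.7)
The plan is to recognize that the regularized objective $\hat{J}(T) + \tfrac{\lambda}{2}\norm{T}_{\h}^2$ is a strictly convex quadratic functional on $\h$, then compute its Fréchet derivative, solve the first-order condition abstractly, and finally project onto a finite-dimensional system. Throughout I will write $\eta_{(b,i)} := \Gamma_{X_b}\partial_i k(Y_b,\cdot)$ for brevity. The uniform bound on $\norm{\Gamma(x,x)}_{\mathrm{Op}}$ together with the derivative-reproducing identity (Lemma~\ref{lemm:reprodcing_derivatives}) is what guarantees that each $\eta_{(b,i)}$ and the average $\hat\xi$ actually lie in $\h$ with finite norm; without this, the inner products in the objective would not be defined.

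First I would rewrite the regularized objective as
\begin{align*}
F(T) := \frac{1}{2n}\sum_{b,i}\inner{T,\eta_{(b,i)}}_{\h}^2 + \inner{T,\hat\xi}_{\h} + \frac{\lambda}{2}\norm{T}_{\h}^2,
\end{align*}
noting that the first sum is a non-negative quadratic form and the last term is strictly convex and coercive for $\lambda > 0$. Hence $F$ admits a unique minimizer $T_{n,\lambda}\in\h$. Computing the Fréchet derivative and setting it to zero yields the stationarity equation
\begin{align*}
\lambda T_{n,\lambda} + \hat\xi + \frac{1}{n}\sum_{b,i}\inner{T_{n,\lambda},\eta_{(b,i)}}_{\h}\,\eta_{(b,i)} = 0.
\end{align*}

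Solving for $T_{n,\lambda}$ and defining $\beta_{(b,i)} := -\tfrac{1}{n\lambda}\inner{T_{n,\lambda},\eta_{(b,i)}}_{\h}$ immediately gives the claimed representation
\begin{align*}
T_{n,\lambda} = -\frac{1}{\lambda}\hat\xi + \sum_{b,i}\beta_{(b,i)}\,\eta_{(b,i)},
\end{align*}
which is a representer-type statement adapted to the vector-valued RKHS setting. To close the loop and recover the linear system, I would take the inner product of both sides with $\eta_{(a,j)}$, use the definition of $\beta_{(a,j)}$ on the left, and recognize the Gram matrix $G$ on the right. This yields
\begin{align*}
-n\lambda\,\beta_{(a,j)} = -\frac{1}{\lambda}(h)_{(a,j)} + (G\beta)_{(a,j)},
\end{align*}
which rearranges to $(G + n\lambda I)\beta = h/\lambda$.

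The main technical subtlety, rather than a hard obstacle, is verifying that $\hat\xi$ and each $\eta_{(b,i)}$ actually belong to $\h$ and that differentiation under the pairing is justified: this is where the assumptions from \cref{app:Assumptions}, specifically the uniform operator-norm bound on $\Gamma(x,x)$ and the smoothness of $k$ needed for Lemma~\ref{lemm:reprodcing_derivatives}, do the real work. Once these are in hand, the rest is a routine first-order computation for a coercive quadratic plus strictly convex regularizer.
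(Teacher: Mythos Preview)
Your proof is correct and follows essentially the same approach as the paper. The only cosmetic difference is that the paper packages the first-order computation into a separate ``generalized representer theorem'' (\cref{lemm:representer}), treating $\hat\xi$ as one additional feature vector $\phi_{nd+1}$ and then reading off $\delta=-1/\lambda$ and the system for $\beta$ from the abstract condition $\lambda\alpha + \nabla V(K\alpha)=0$; you instead unwind that same stationarity argument directly, which is slightly more elementary but identical in substance.
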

The result is proved in \cref{thm:estimator_T} of \cref{app:Estimator}.

For the particular choice of $\Gamma$ in \cref{eq:diagonal_op_kernel}, the estimator takes a simplified form
\begin{align}
	T_{n,\lambda}(x,y) =&  -\frac{1}{\lambda} \hat{\xi}(x,y) +  \sum_{\substack{ b\in [n]\\i\in[d]}}  \beta_{(b,i)} k_{\x}(X_b,x)\partial_i k(Y_b,y),
\end{align}
with
	\begin{align}
		\hat{\xi}(x,y) =& \frac{1}{n} \sum_{ b\in [n];i\in[d]} k_{\x}(X_b,x) \partial_i^2 k(Y_b,y)  \\
		&+ \frac{1}{n} \sum_{ b\in [n];i\in[d]} k_{\x}(X_b,x)\partial_i\log q_0(Y_b) \partial_i k(Y_b,y)
	\end{align}

The coefficients  $\beta$ are obtained by solving the same system $ (G + n\lambda I )\beta = \frac{h}{\lambda}	
 $, where $G$ and $h$ reduce to
\begin{align}
		(G)_{(a,i), (b,j)}
		=&
		 k_{\x}(X_a, X_b) \partial_{i}\partial_{j+d}k(Y_a, Y_b),\\
		(h)_{(b,i)}
			 =&
			 \partial_i\hat{\xi}(X_b,Y_b),
\end{align}
and all derivatives are taken with respect to $y$.

The above estimator generalizes the estimator in \cite{Sriperumbudur:2013} to conditional densities. In fact, if one choses the kernel $k_{\x}$ to be a constant kernel, then one exactly recovers the setting of the KEF.

This linear system has a complexity of $\mathcal{O}(n^3 d^3)$ in time and  $\mathcal{O}(n^2d^2)$ in memory, which can be problematic for higher dimensions $d$ as $n$ grows. However, in practice, if the goal is to estimate a density of the form $ p(x_1,...,x_d)$, one can use the general chain rule for distributions,
$p(x_1)p(x_2|x_1) .... p(x_d|x_1,...,x_{d-1})$, and estimate each conditional density $p(x_i|x_1,...,x_{i-1})$ using the  KCEF in \cref{cond_exp_fam}. This reduces the complexity of the algorithm to $\mathcal{O}(n^3 d)$. A reduction to the cubic complexity in the number of data points $n$ could be managed via a Nyström-like approximation  \citep{Sutherland:2017}.   

In the well-specified case where the true conditional density $p_0(y|x)$ is assumed to be in \cref{cond_exp_fam} (i.e. $p_0(y|x) = p_{T_0}(y|x)$), we analyze the parameter convergence of the estimator $T_{n,\lambda}$ to $T_0$ and the convergence of the corresponding density $p_{T_{n,\lambda}}(y|x)$ to the true density $p_0(y|x)$.
First, we consider the covariance operator $C$ of the joint feature $\Gamma_x k(y,\cdot)$ under the joint distribution of $x$ and $y$, as introduced in \cref{thm:score_match} of \cref{app:score_match}, and we denote by $ \mathcal{R}(C^{\gamma}) $ the range space of the operator $ C^{\gamma}$. We then have the following consistency result:

\begin{theorem}
	\label{thm:main_rates}
	 Let $\gamma > 0$ be a positive parameter  and define $\alpha = \max(\frac{1}{2(\gamma + 1)}, \frac{1}{4})\in (\frac{1}{4}, \frac{1}{2} ) $. Under the conditions  in \cref{app:Assumptions}, for $ \lambda = n^{-\alpha} $,
and if  $T_0 \in \mathcal{R}(C^{\gamma})$, then
	\begin{align}
	 \Vert T_{n,\lambda} - T_0 \Vert = \mathcal{O}_{p_0}( n^{-\frac{1}{2} + \alpha} ).
	\end{align}	
	Furthermore, if  $ \sup_{y\in\y}k(y,y) < \infty$, then
	\begin{align}
			KL(p_0|| p_{T_{n,\lambda}}) = \mathcal{O}_{p_0}( n^{-1 + 2\alpha} ).
	\end{align}
\end{theorem}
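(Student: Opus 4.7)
The proof plan generalises the argument of \cite{Sriperumbudur:2013} for the kernel exponential family to the vector-valued RKHS setting, with the KL rate following from a second-order expansion once the parameter rate is established. First I would rewrite the population and empirical score objectives as quadratic forms. From \cref{score_final}, define the covariance operator $C = \E\bigl[\sum_i (\Gamma_x\partial_i k(y,\cdot))\otimes(\Gamma_x\partial_i k(y,\cdot))\bigr]$ on $\h$ together with the mean element $\xi = \E\bigl[\sum_i \xi_i(x,y)\bigr]$, so that $J(T) = \tfrac12\inner{T,CT}_\h + \inner{T,\xi}_\h$, and analogously $\hat J(T) = \tfrac12\inner{T,\hat C T}_\h + \inner{T,\hat\xi}_\h$ for the empirical versions. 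Since $p_0=p_{T_0}$, the first-order optimality condition for $J$ gives $C T_0 + \xi = 0$, while $T_{n,\lambda}$ satisfies $(\hat C + \lambda I) T_{n,\lambda} + \hat\xi = 0$. Subtracting the two identities yields the standard decomposition
\begin{align*}
T_{n,\lambda} - T_0 \;=\; -(\hat C+\lambda I)^{-1}\bigl[(\hat C - C)T_0 + (\hat\xi-\xi)\bigr] \;-\; \lambda(\hat C+\lambda I)^{-1} T_0,
\end{align*}
which splits the error into a stochastic part and a deterministic regularisation bias.

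Next I would bound each piece. The bias $\lambda(\hat C+\lambda I)^{-1} T_0$ is controlled via the source condition $T_0 = C^{\gamma} g$: after replacing $\hat C$ by $C$ (the perturbation being itself a lower-order term that can be handled with another concentration step), spectral calculus on $\lambda(C+\lambda I)^{-1}C^{\gamma}$ gives a bound of order $\lambda^{\min(\gamma,1)}$. For the stochastic part, the trivial operator bound $\norm{(\hat C+\lambda I)^{-1}}_{Op}\le 1/\lambda$ combined with Hilbert-space Bernstein inequalities applied to the i.i.d.\ sums $\hat C - C$ and $\hat\xi-\xi$ yields $O_{p_0}(n^{-1/2}/\lambda)$; the assumptions in \cref{app:Assumptions} (uniform bound on $\norm{\Gamma(x,x)}_{Op}$, boundedness of $k$ and its first two derivatives) are exactly what is needed to certify the boundedness of the summands. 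Balancing $\lambda^{\min(\gamma,1)}$ against $n^{-1/2}\lambda^{-1/2}$ produces the stated $\lambda = n^{-\alpha}$ with $\alpha = \max\bigl(\tfrac{1}{2(\gamma+1)},\tfrac14\bigr)$ and the parameter rate $\norm{T_{n,\lambda} - T_0}_\h = O_{p_0}(n^{-1/2+\alpha})$.

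For the KL bound I would Taylor-expand $T\mapsto KL(p_0\Vert p_T)$ around $T_0$. In the well-specified case the gradient vanishes at $T_0$, and the Hessian at $T_0$ is $\E_x\bigl[\Gamma_x^{*}\,\mathrm{Cov}_{y\sim p_{T_0}(\cdot\mid x)}(k(y,\cdot))\,\Gamma_x\bigr]$, which is a bounded operator on $\h$ once $\sup_y k(y,y)<\infty$ and $\norm{\Gamma(x,x)}_{Op}$ is uniformly bounded. Hence $KL(p_0\Vert p_T) \lesssim \norm{T-T_0}_\h^{2}$ in a neighbourhood of $T_0$, and squaring the parameter rate gives $O_{p_0}(n^{-1+2\alpha})$.

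The main obstacle will be twofold. The first is verifying the moment/boundedness conditions needed to apply a vector-valued Bernstein inequality to $\hat C - C$ and $\hat\xi - \xi$ in the operator-valued RKHS, since the summands involve products of $\Gamma_x$ and derivatives of $k$; the uniform bounds in \cref{app:Assumptions} are there precisely to make this step tight. The second is justifying the Taylor expansion of the KL uniformly: one must ensure that with probability tending to one $T_{n,\lambda}$ lies in a fixed neighbourhood of $T_0$ on which $Z(T_x)$ and its derivatives are uniformly controlled, so that the remainder of the expansion is genuinely of second order in $\norm{T_{n,\lambda}-T_0}_\h$.
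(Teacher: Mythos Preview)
Your plan for the parameter rate is essentially the paper's argument: the same error decomposition (the paper routes it through the population minimiser $T_\lambda=(C+\lambda I)^{-1}CT_0$, which gives three terms $S_1,S_2,S_3$ plus the bias $\mathcal A_0(\lambda)=\norm{T_\lambda-T_0}$, but this is algebraically equivalent to your display), the same source-condition bound $\mathcal A_0(\lambda)\lesssim\lambda^{\min(\gamma,1)}$, and the same balancing. Two small points. First, the paper does \emph{not} use Bernstein; under the stated assumptions the summands are not bounded (only $\norm{\partial_i k(y,\cdot)}\in L^{2\epsilon}$ and $\norm{\partial_i^2 k(y,\cdot)}\in L^{\epsilon}$ are assumed, with $\epsilon=2$ in the proof), so the paper controls $\hat C-C$ and $\hat\xi-\xi$ via second moments and Chebyshev. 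Your Bernstein step would need extra boundedness hypotheses that are not in \cref{app:Assumptions}. Second, your balancing line contains a slip: you wrote $n^{-1/2}\lambda^{-1/2}$ for the stochastic term, but the bound you derived (and the correct one) is $n^{-1/2}\lambda^{-1}$; balancing $\lambda^{\min(\gamma,1)}$ against $n^{-1/2}\lambda^{-1}$ is what actually yields $\alpha=\max\bigl(\tfrac{1}{2(\gamma+1)},\tfrac14\bigr)$.

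For the KL rate your route is genuinely different from the paper's. The paper avoids any Taylor expansion and instead uses a pointwise inequality (Lemma~3.1 of van der Vaart \& van Zanten) of the form
\[
KL(p_f\Vert p_g)\le \norm{f-g}_\infty^2\, e^{\norm{f-g}_\infty}\,(1+\norm{f-g}_\infty),
\]
applied at each fixed $x$ to $f=T_0(x,\cdot)$ and $g=T_{n,\lambda}(x,\cdot)$, and then bounds $\norm{T_0(x,\cdot)-T_{n,\lambda}(x,\cdot)}_\infty\le \kappa\,\norm{k}_\infty^{1/2}\,\norm{T_0-T_{n,\lambda}}_\h$ using the reproducing property and $\sup_y k(y,y)<\infty$. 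This gives the quadratic dependence on $\norm{T_0-T_{n,\lambda}}_\h$ directly and uniformly in $x$, with no need to localise or control remainders. Your Hessian argument also works and identifies the second-order operator explicitly, but it is strictly more effort: you must show the remainder is uniformly $o(\norm{T-T_0}_\h^2)$ over $x$, which is precisely the difficulty you flag at the end. The paper's sup-norm lemma sidesteps that entirely.
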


These asymptotic rates match those obtained for the unconditional density estimator in \cite{Sriperumbudur:2013}. The smoother the parameter $T_0$, the closer $\alpha$ gets to $\frac{1}{4}$, which in turns leads to a convergence rate in KL divergence of the order of $ \frac{1}{\sqrt{n}}$. The worst case scenario happens when the range-space parameter $\gamma$ gets closer to $0$, in which case convergence in KL divergence happens at a rate close to $ \frac{1}{n^{\gamma}}$.
A more technical formulation of this theorem  along with a proof is presented in \cref{app:consistency}  (see \cref{thm:rates_appendix,thm:KL_rates}).  

The regularity of the conditional density $p(y|x)$ with respect to $x$ is captured by the boundedness assumption on the operator valued kernel $ \Gamma $; i.e.,  $  \norm{\Gamma(x,x)}_{op} \leq \kappa $ for all $x \in \x$ in   \cref{assumption_gamma}. This assumption allows to control the variations of the conditional distribution $p(y|x)$  as $x$ changes. Roughly speaking, we may estimate the conditional density $p(y|x_0)$ at a given point $x_0$ from samples $(Y_i, X_i)$ whenever there are $X_i$  sufficiently close to $x_0$. 
The uniformly bounded kernel $\Gamma$ allows to express the objective function $J(T)$ as a quadratic form $ J(T) = \frac{1}{2} \inner{T, C T}_{\h} + \inner{T, \xi}_{\h} + c_0  $ for constant $c_0$, where $C$ is the covariance operator introduced in  \cref{thm:score_match}. Furthermore, this boundedness assumption ensures that $C$ is a  "well-behaved" operator, namely a positive semi-definite trace-class operator. The population solution of the regularized score objective is then  given by $ T_{\lambda} = (C+\lambda I)^{-1}CT_0 $ while the estimator  is given by: $\hat{T}_{\lambda, n} = - (\hat{C} + \lambda I)^{-1}\hat{\xi} $ where $\hat{C}$ and $\hat{\xi}$ are empirical estimators for $ C$ and $\xi$.

The proof of consistency makes use of ideas from \cite{Sriperumbudur:2013,Caponnetto2007}, exploiting the properties of trace-class operators. The main idea is to first control the error $\norm{T_0 - \hat{T}_{\lambda, n}}_{\h} $ by introducing the population solution $T_{\lambda}$,
\begin{align}
	\norm{T_0 - \hat{T}_{\lambda, n}}_{\h} \leq \norm{T_0 - T_{\lambda}}_{\h} + \norm{T_{\lambda}- \hat{T}_{\lambda, n}}_{\h} 
\end{align}
The first term $\norm{T_0 - T_{\lambda}}_{\h}$ represents the regularization error which is introduced by adding a regularization term $\lambda I$ to the operator $C$. This term doesn't depend on $n$, and can be shown to decrease as the amount of regularization goes to $0$ with a rate $\lambda^{min(1, \gamma)} $. 
The second term represents the estimation error due to the finite number of samples $n$. This term decreases as $n \rightarrow 0 $ but  also increases when $ \lambda \rightarrow 0 $, therefore a trade-off needs to be made between decreasing the first term $\norm{T_0 - T_{\lambda}}_{\h}$ by setting $\lambda \rightarrow 0 $ and keeping the term $\norm{T_{\lambda} - \hat{T}_{\lambda, n}}_{\h}$  small enough. Using decompositions similar to those of \cite{Sriperumbudur:2013,Caponnetto2007}, we apply concentration inequalities on the general Hilbert space $\h$ to get a probabilistic bound on the estimation error of order $ \mathcal{O}(\frac{1}{\lambda \sqrt{n}}) $.

Concerning the convergence in KL divergence, the requirement that the real-valued kernel $ k $ is bounded implies that $ \mathcal{T} $ is in fact equal to $\h$. Therefore, minimizing the expected score $J(p_{T_0}||p_T)$ is equivalent to minimizing the quadratic form $J(T)$ over the whole RKHS $\h$. Finally, the rates in $KL$ divergence are obtained from the error rate of $\hat{T}_{\lambda, n}$.
\section{Experiments}\label{sec:experiments}
We perform a diverse set of experiments, on both synthetic and real data, in order to validate our model empirically.
In all experiments, the data are centered and rescaled such that the standard deviation for every dimension is equal to $1$. Given $(X_1^{(n)}, ...,X_d^{(n)} )_{n=1}^N$ i.i.d. samples of dimension $d$ we are interested in approximating the joint distribution $p_0(X_1,...,X_d)$ of data using different methods:

$\bullet$ The \textbf{KEF} model from \cite{Sriperumbudur:2013} approximates $p$ by a distribution $p_f$ that belongs to the KEF \cref{eq:KEF} by minimizing the score loss between $p$ and $p_f$ to find the optimal parameter  $f$. 

$\bullet$ The \textbf{KCEF} model of \cref{thm:main_estimator} approximates $p$ by a distribution $\hat{p}$  that is assumed to factorize according to some Directed Acyclic Graphical model (DAG): $\hat{p}(X_1,...,X_d) = \Pi_{i=1}^d \hat{p}(x_i|x_{\pi(i)}) $ where  $\pi(i)$ are the parent nodes of $i$. Note that we do not necessarily make independence assumptions, as the graph can be fully connected. We will consider in particular two graphs, the Full graph (\textbf{F}) of the form $\hat{p}(X_1, ..., X_d)  =  \hat{p}(X_1)\Pi_{i=2}^d\hat{p}(X_i|X_1, ..., X_{i-1})$ and  the Markov graph  (\textbf{M}) of the form $\hat{p}(X_1, ..., X_d)  =  \hat{p}(X_1)\Pi_{i=2}^d\hat{p}(X_i| X_{i-1})$.
 	Each of the factors is assumed to belong to the KCEF in \cref{cond_exp_fam}, and is estimated independently from the others by minimizing the empirical loss $\hat{J}(T)$ to find the optimal operator $T_i$ such that $\hat{p}(X_i|X_{\pi(i)}) = p_{T_i}(X_i|X_{\pi(i)})$. 
 	
$\bullet$ The \textbf{Orderless RNADE} model in \cite{Uria:2016}, where we train a 2 Layer Neural Autoregressive model with $100$ units per layer.
	The model consists of a product of conditional densities of the form $ \Pi_{i=1}^d p( X_{o_i} | X_{o_{<i}},\theta, o ) $, where $ o $ is a permutation of the dimensions $ [1,...,d]$ and $\theta$ is a set of parameters that are shared across the factors regardless of the chosen permutation $o$. RNADE is trained by minimizing the empirical  expected  negative log-likelihood, where the expectation is taken over all possible permutations and data,
		      \begin{align}
		      	\mathcal{L}(\theta) = \E_{o\in D} \E_{X\in\R^d} \Big[- \log p( X_{o_i} | X_{o_{<i}},\theta, o )\Big].
		      \end{align}
$\bullet$ The \textbf{LSCDE} model in \cite{Sugiyama:2010} where we also used the 2 factorizations of the joint distribution (\textbf{F}, \textbf{M}) and solve a least-squares problem to estimate each  of the conditional densities. The approximate densities are of the form $ \alpha^T \phi(X_i, X_{\pi(i)})  $ where $\phi$ is a vector of $m$ known non-negative functions and $\alpha $ is obtained by minimizing the squared error between $ p(X_i, X_{\pi(i)})$ and $\alpha^T \phi(X_i, X_{\pi(i)}) $. Only the non-negative component of the solution   $\alpha$ is used. 

                      For all variants of our model, we take the base density $q_0$ to be a centered gaussian with a standard deviation of $2$. The kernel function used for both predicted variable  $y$ and conditioning variable $x$ is the anisotropic radial basis function (RBF) with per-dimension bandwidths.
                      		The bandwidths and the regularization parameter $\lambda$ are tuned by gradient descent on the  cross validated score.

{\bf Synthetic data:} We consider the 'grid' dataset, which is a $d$-dimensional distribution with a tractable density that factorizes in the  form
\begin{align}
	p(x_i|x_{i-1}) = C_i(1+\sin (2\pi w_i^{a} x_i )\sin(2\pi w_i^{b} x_{i-1} ) )
\end{align}
for all $i\in [d]$. $C_i$ is a tractable normalizing constant. Samples are generated using rejection sampling for each dimension. To study the effect of sample size on the estimator, we generate $n$ training points with $n$ varying from $200$ to $2000$ and $d = 3$, and estimate the log-likelihood on $2000$ newly generated points. To compare the effect of dimension, we generate $2000$ datapoints of dimension $d$ varying from $2$ to $20$, and estimate the log-likelihood on $2000$ test points. 
Unlike in \citep{Sriperumbudur:2013,Sutherland:2017}, the score function $\hat{J}(T)$ cannot be used as a metric to compare different factorizations of the estimated distribution, as it is dependent on the specific factorization of the joint distribution.  
Instead, we estimated the log-likelihood for our proposed model \textbf{KCEF}, where the normalizing constants are computed using importance sampling. We discarded the \textbf{KEF} in this experiment,  since estimating the normalizing constant in high dimensions becomes impractical.  

In \cref{fig:exp_wave}(left), we plot the log-likelihood as the number of samples increases.  Both variants of \textbf{KCEF} (\textbf{F}, \textbf{M}) performed slightly better than the other methods in terms of speed of convergence as sample size increases. The variants that exploit the Markov structure of data  \textbf{M} lead to the best performance for both  \textbf{KCEF} and \textbf{LSCDE} as expected. The \textbf{NADE} method has comparable performance for large sample sizes, but the performance drops significantly for small sample sizes. This behaviour will also be observed in
subsequent experiments on real data.
The figure on the right shows the evolution of the log-likelihood per dimension as the dimension increases. 
 In the  \textbf{F} case, our approach is comparable to \textbf{LSCDE} with an advantage in small dimensions. The  \textbf{F} approaches both use an anisotropic RBF kernel with tuned per-dimension bandwidth which end up performing a kind of automatic relevance determination. This helps getting comparable performance to the \textbf{M} methods. A drastic drop in performance can happen when an isotropic kernel is used instead as confirmed  by  \cref{fig:old_exp_wave} of  \cref{app:additional_exp}. Finally, \textbf{NADE}, which is also agnostic to the Markov structure of data, seems to achieve comparable performance to the \textbf{F} methods with a slight disadvantage in higher dimensions. 
  \begin{figure*}
\begin{minipage}{0.5\textwidth}
\includegraphics[width=\linewidth]{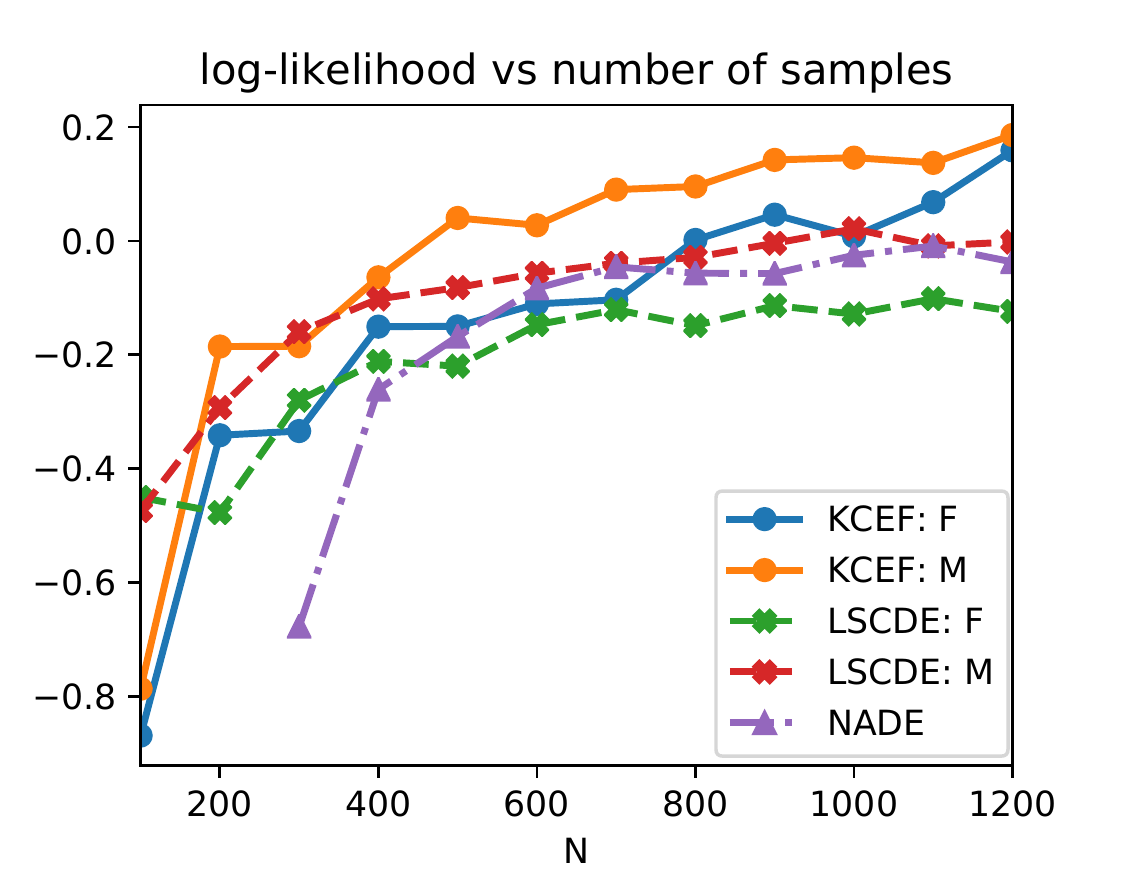}
\end{minipage}
\begin{minipage}{0.5\textwidth}
\includegraphics[width=\linewidth]{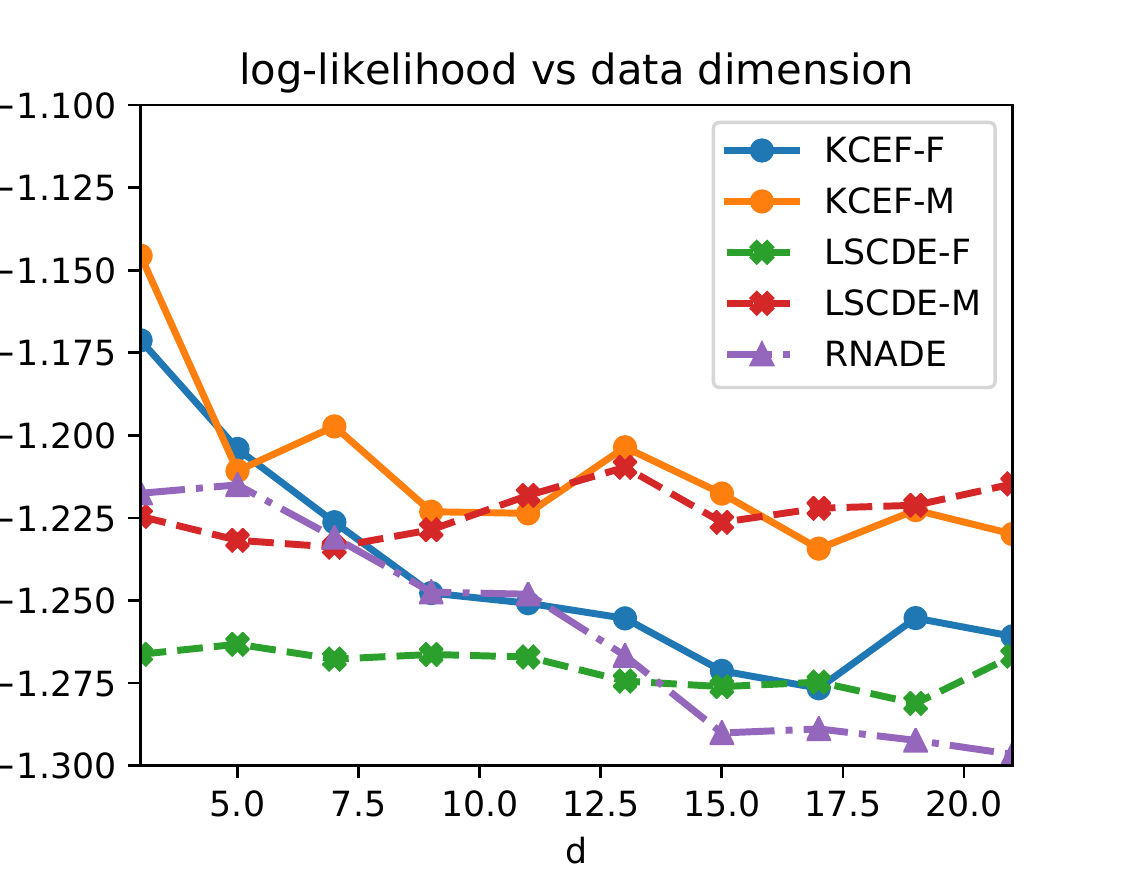}
\centering
\end{minipage}\hfill
\caption{Experimental comparison of proposed method KCEF and other methods (  LSCDE and NADE ) on synthetic \textit{grid} dataset. \textbf{LEFT}: log-likelihood vs  training samples size, $(d = 3)$. \textbf{RIGHT}:  log-likelihood per dimension vs dimension,  $N=2000$. The log-likelihood is evaluated on a separate test set of size $2000$.  }
\label{fig:exp_wave}
\end{figure*}\hfill

{\bf Real data:} We applied the proposed and existing methods to the  \textit{R} package benchmark datasets  \citep{Team:2008} (see \cref{tab:benchmark})  as well as three UCI datasets previously used to study the performance of other density estimators (see \cite{Uria:2013,Silva:2010,Tang:2012}). In all cases data are centered and normalized. 

First, the  \textit{R}  benchmark datasets are low dimensional with few samples, but with a relatively complex conditional dependence between the variables. This setting allows to compare the methods in terms of data efficiency and overfitting.
Each dataset was randomly split into a training and a test set of equal size. The models are trained to estimate the conditional density of a one dimensional variable $y$ knowing $x$  using samples $ (x_i,y_i)_{i=1}^n $ form the training set. The accuracy is measured by the negative log-likelihood for the test samples $ (\widetilde{x}_i, \widetilde{y}_i )_{i=1}^n $ averaged over $20$ random splits of data. 
We compared the proposed method with \textbf{NADE} and \textbf{LSCDE} on 14 datasets. 
 For \textbf{NADE} we used CV over the number of units per layer $\{2, 10, 100\}$ and number of mixture components $\{ 1, 2, 5, 10 \}$ for a 2 layer network. We also used CV to chose the hyper-parameters for \textbf{LSCDE} and the proposed method on a  $20\times 20$ grid (for $\lambda$ and $\sigma$).
 
 The experimental results are summmarized in \cref{tab:benchmark}. \textbf{LSCDE} worked well in general as claimed in the original paper, however the proposed method substantially improves the results. On the other hand, \textbf{NADE} performed rather poorly due to the small sample size of the training set, despite our attempts to improve its performance by reducing the number of parameters to train and by introducing early stopping.

 The UCI datasets (Red Wine, White Wine and Parkinsons) represent challenging %
 datasets with non-linear dependencies and abrupt transitions between high and low density regions. This makes the densities difficult to model using standard  tools such as mixtures of Gaussians or factor analysis. They also contain enough training sample points to allow a stronger performance by \textbf{NADE}. All discrete-valued variables were eliminated as well as one variable from every pair of variables that are highly correlated (Pearson correlation greater than 0.98). Following \cite{Uria:2013}, $90\%$ of the data were used for training while $10\%$ were held-out for testing. 
 Two different graph factorizations (\textbf{F}, \textbf{M} ) were used for the proposed method and for \textbf{LSCDE}.

In \cref{tab:UCI_dataset}, we report the performance of the different models. Our  method was among the statistically significant group of best models on Parkinsons dataset according to the two-sided paired $t-test$ at significance level of $5\%$. On the remaining datasets, it achieved the second best performance after \textbf{NADE}.
\begin{table}
\let\center\empty
\let\endcenter\relax
\centering
\resizebox{.8\width}{!}{\begin{tabular}{llll}
\toprule
{} &                       KCEF &              NADE &                      LSCDE \\
\midrule
caution       &  $\mathbf{ 0.99 \pm 0.01}$ &   $4.12 \pm 0.02$ &            $1.19 \pm 0.02$ \\
ftcollinssnow &   $\mathbf{ 1.46 \pm 0.0}$ &   $3.09 \pm 0.02$ &            $1.56 \pm 0.01$ \\
highway       &  $\mathbf{ 1.17 \pm 0.01}$ &  $11.02 \pm 1.05$ &            $1.98 \pm 0.04$ \\
heights       &   $\mathbf{ 1.27 \pm 0.0}$ &    $2.71 \pm 0.0$ &              $1.3 \pm 0.0$ \\
sniffer       &  $\mathbf{ 0.33 \pm 0.01}$ &   $1.51 \pm 0.04$ &            $0.48 \pm 0.01$ \\
snowgeese     &  $\mathbf{ 0.72 \pm 0.02}$ &    $2.9 \pm 0.15$ &            $1.39 \pm 0.05$ \\
GAGurine      &   $\mathbf{ 0.46 \pm 0.0}$ &   $1.66 \pm 0.02$ &             $0.7 \pm 0.01$ \\
geyser        &            $1.21 \pm 0.04$ &   $1.43 \pm 0.07$ &   $\mathbf{ 0.7 \pm 0.01}$ \\
topo          &  $\mathbf{ 0.67 \pm 0.01}$ &   $4.26 \pm 0.02$ &             $0.83 \pm 0.0$ \\
BostonHousing &    $\mathbf{ 0.3 \pm 0.0}$ &    $3.46 \pm 0.1$ &            $1.13 \pm 0.01$ \\
CobarOre      &            $3.42 \pm 0.03$ &    $4.7 \pm 0.02$ &  $\mathbf{ 1.61 \pm 0.02}$ \\
engel         &   $\mathbf{ 0.18 \pm 0.0}$ &   $1.46 \pm 0.02$ &            $0.76 \pm 0.01$ \\
mcycle        &  $\mathbf{ 0.56 \pm 0.01}$ &   $2.24 \pm 0.01$ &            $0.93 \pm 0.01$ \\
BigMac2003    &  $\mathbf{ 0.59 \pm 0.01}$ &   $13.8 \pm 0.13$ &            $1.63 \pm 0.03$ \\
\bottomrule
\end{tabular}
}
\caption{Mean and std. deviation of the negative log-likelihood on benchmark data over 20 runs, with different random splits. In all cases $d_y = 1$. Best method in boldface (two-sided paired \textit{t-test} at  $5\%$).}
\label{tab:benchmark}
\end{table}
\begin{table}
\let\center\empty
\let\endcenter\relax
\centering
\resizebox{.8\width}{!}{\begin{tabular}{llll}
\toprule
 &                 white -wine &                 parkinsons &                  red wine \\
\midrule
KCEF-F  &           $13.05 \pm 0.36$ &  $\mathbf{ 2.86 \pm 0.77}$ &           $11.8 \pm 0.93$ \\
KCEF-M  &           $14.36 \pm 0.37$ &            $5.53 \pm 0.79$ &          $13.31 \pm 0.88$ \\
LSCDE-F &            $13.59 \pm 0.6$ &           $15.89 \pm 1.48$ &           $14.43 \pm 1.5$ \\
LSCDE-M &           $14.42 \pm 0.66$ &           $10.22 \pm 1.45$ &          $14.06 \pm 1.36$ \\
NADE    &  $\mathbf{ 10.55 \pm 0.0}$ &             $3.63 \pm 0.0$ &  $\mathbf{ 9.98 \pm 0.0}$ \\
\bottomrule
\end{tabular}
}
\caption{UCI results: average and standard deviation of the negative log-likelihood over 5 runs with different random splits. Best method in boldface (two-sided paired \textit{t-test} at  $5\%$).}
\label{tab:UCI_dataset}
\end{table}

{\bf Sampling:} We compare samples generated from the approximate distribution obtained using different methods (\textbf{KEF}, \textbf{KCEF}, \textbf{NADE}). To get samples $(X_1,..., X_d)$ from the joint distribution of \textbf{KCEF} we performed ancestral sampling, where a sample from the parents $\pi(i)$ of node $i$ is first generated, and  then $X_i$ is sampled according to $ p(X_i | X_{\pi(i)}) $. We used the methodology and code in \cite{Strathmann:2015} to sample from each conditional distribution $p(X_i| X_{\pi(i)})$ using an HMC proposal,  since we have access to the gradient of the conditional densities and their un-normalized values. 
We trained the 3 models on Red Wine and Parkinsons datasets as described
previously,
and generated joint samples from two-dimensional slices of data (see \cref{fig:UCI_samples}).
Since each conditional distribution is low-dimensional, we assumed an idealized scenario where the burn-in is completed after $100$ iterations of the HMC sampler. We then run $20$ samplers for $1000$ and thin by a factor $10$, which results in $2000$ samples.
As shown in \cref{fig:UCI_samples}, \textbf{KCEF} is able to capture challenging properties of the target distribution, such as heteroscedasticity and sharp thresholds.
\begin{figure}
\centering
\includegraphics[width=\linewidth]{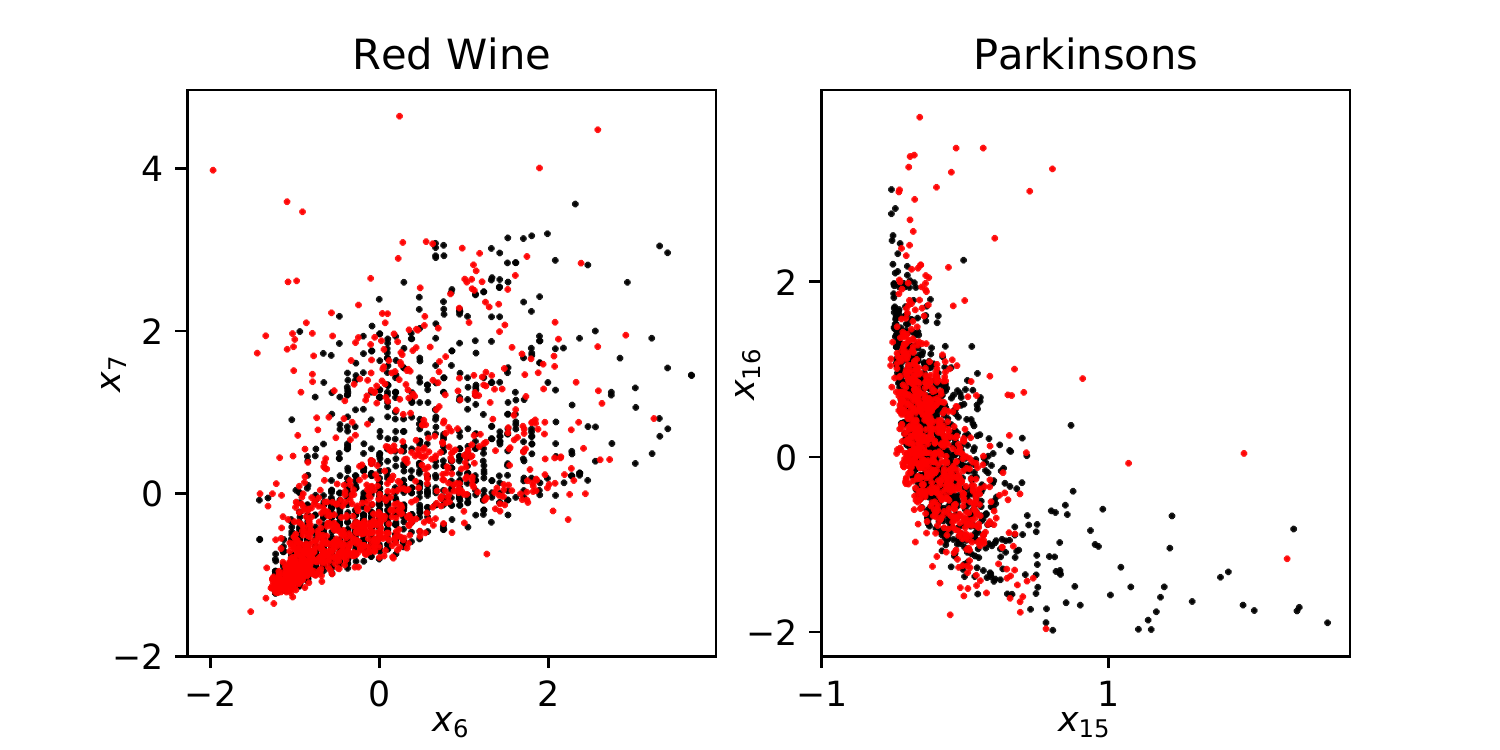}
\caption{Scatter plot of 2-d slices of \textit{red wine} and \textit{parkinsons} data sets.%
  Black points are real data, red are samples from the KCEF.}
\label{fig:UCI_samples}
\end{figure}
\begin{table}
\let\center\empty
\let\endcenter\relax
\centering
\resizebox{.8\width}{!}{\begin{tabular}{lrr}
\toprule
 &  $ \h_{KEF < KCEF} $  &  $ \h_{NADE < KCEF} $  \\
\midrule
parkinsons &                                         0.523506 &                                          0.011467 \\
red-wine   &                                         0.000791 &                                          0.326109 \\
\bottomrule
\end{tabular}
}
\caption{$p$-values for the relative similarity test. Columns represents the p-values for testing whether samples from KEF ( resp. KCEF) model are closer to the data than samples from the KCEF (resp. NADE).}
\label{tab:MMD}
\end{table}

We also performed a test of relative similarity between the generated samples and the ground truth data following the methodology and code of \cite{Bounliphone:2015}. Given samples from data $X_m$ and generated samples  $Y_n$ and $Z_r$ from two different methods, we test the  hypothesis that $ P_x $ is closer to  $P_z$ than $P_y$ according to the MMD metric.
The null hypothesis  
\begin{align}
	 \h_{y<z} : MMD(P_x, P_y) \leq  MMD(P_x, P_z)
\end{align}
  is tested against the alternative
  at a significance level $\alpha = 5\%$ (see \cite{Bounliphone:2015} for details). \cref{tab:MMD} shows the p-value for testing \textbf{KCEF} vs \textbf{KEF} and \textbf{NADE} vs \textbf{KCEF}.
  We see that \textbf{KCEF} significantly outperforms \textbf{NADE}  with high confidence for the \textit{parkinsons} dataset, consistent with \cref{tab:UCI_dataset}. Performance of the two methods is not statistically distinguishable for the \textit{red-wine} data.
See the scatter plots in \cref{fig:UCI_samples_all_methods} of \cref{app:additional_exp}, which visually confirm the result. 
  \textbf{KCEF} gives significantly better samples than \textbf{KEF} on \textit{red-wine}: indeed, \textbf{KCEF} generally outperforms   \textbf{KEF} on distributions where the densities exhibit  abrupt transitions, as is clear by inspection of the plots in  \cref{fig:UCI_samples_all_methods} of \cref{app:additional_exp}.

\printbibliography

\end{refsection}

\onecolumn
\begin{refsection}

\section*{Appendices}
\label{Appendix}

\begin{appendices}

\crefalias{section}{appsec}
\crefalias{subsection}{appsec}
\crefalias{subsubsection}{appsec}

In this section we prove \cref{thm:main_estimator} and \cref{thm:main_rates}. 

\section{Preliminaries}
\subsection{Notation}

We first introduce some relevant concepts from functional analysis. If $E$ is Hilbert space we denote by $\inner{.,.}_{E}$ and $\norm{.}_{E}$ its corresponding inner product and norm, respectively.
If $E $ and $F$ are two Hilbert spaces, we use $\Vert .\Vert $ to denote the operator norm  $ \Vert A \Vert  = \sup_{f: \Vert f \Vert \leq 1  } \Vert A f \Vert$, where $A$ is an operator from $E$ to $F$. We denote by $A^{*}$ the adjoint of $A$.

If $E$ is separable with an orthonormal basis $\{ e_k \}_k $, then $ \Vert . \Vert_1 $ and $ \Vert . \Vert_2 $ are the  trace norm and Hilbert-Schmidt norm on $E$ and are given by:
\begin{align}
	\Vert A\Vert_1 &= \sum_{k}\langle (A^{*}A)^{\frac{1}{2}}e_k,e_k \rangle\\
	\Vert A \Vert_2 &= \Vert A^{*}A \Vert_1    
.\end{align}
where $A$ is an operator from  $E$ to $E$. $ \lambda_{max}(A) $ is used to denote the algebraically largest eigenvalue of $A$.
 For $f$ in $E$  and $g$ in $F$ we denote by $ g \otimes f$ the tensor product viewed as an application from $E$ to $F$ with $ (g\otimes f)h = g \langle f,h \rangle_{E} $ for all $h$ in $E$. 
$C^1(\Omega )$ denotes the space of continuously differentiable functions on $\Omega$ and $ L^r(\Omega)$ the space of $r$-power Lebesgues-integrable function.
Finally for any vector $ \beta$ in $\R^{nd}$, we use the notation $ \beta_{(a,i)} = \beta_{(a-1)d + i} $ for $ a\in [n] $ and $ i\in [d] $. 

\subsection{Operator valued kernels and feature map derivatives}
Let $\x$ and $\y$ be two open subsets of $\R^p$ and $\R^d$ . $\h_{\y}$ is a reproducing kernel Hilbert space of functions $f: \y \rightarrow \R $ with kernel $k_{\y}$. We denote by  $\h$ a vector-valued reproducing kernel Hilbert space of functions $T: x\mapsto T_x$ from $\x$ to $\h_{\y}$ and we introduce the feature operator $\Gamma : x\mapsto \Gamma_x$ from $\x$ to $ \mathcal{L}(\h_{\y}, \h) $ where $\mathcal{L}(\h_{\y}, \h)$ is the set of bounded operators from $\h_{\y}$ to $\h$. For every $x\in \x$,  $\Gamma_x$ is an operator defined from $\h_{\y}$ to $\h$. 

The following reproducing properties will be used extensively:
\begin{itemize}
	\item Reproducing property of the derivatives of a function in $\h_{\y}$ ( \cite{Steinwart:2008a}, Lemma 4.34): provided that the kernel $k_{\y}$ is differentiable $m$-times with respect to each coordinate, then all $f \in \h_{\y} $ are differentiable for every multi-index $\alpha\in \n^d_0$ such that $\alpha \leq m$, and
	\begin{align}
		\partial^{\alpha} f(y) = \inner{f,  \partial^{\alpha} k(y, .)}_{\h_{\y}} && \forall y \in \y ,
	\end{align}
	where $ \partial^{\alpha} k_{y}(y, y^{'}) =  \frac{\partial^{\alpha} k(y,y^{'})}{\partial^{\alpha}y}$.
	In particular we will use the notation
	\begin{align}
		\partial_i k(y, y^{'}) =  \frac{\partial k(y,y^{'})}{\partial y_i}, && \partial_{i+d} k(y, y^{'}) =  \frac{\partial k(y,y^{'})}{\partial y^{'}_i} .
	\end{align}
	\item Reproducing property in the vector-valued space $\h$: For any $f\in \h_{\y}$ and any $T\in \h$ we have the following:
	\begin{align}
		\inner{T_x, f }_{\h_{\y}} = \inner{T, \Gamma_x f}_{\h}
	\end{align}
	In particular for every  $y\in \y$ we get:
	\begin{align}
		\inner{T_x, k(y,\cdot)}_{\h_{\y}} = \inner{T, \Gamma_x k(y,\cdot)}_{\h}
	\end{align}
	Using now the reproducing property in $\h_{\y}$ we get:
	\begin{align}
		T(x,y) := T_x(y) = \inner{T, \Gamma_x k(y,\cdot)}_{\h}
	\end{align}
\end{itemize}

\subsection{The conditional infinite dimensional exponential family}

Let $q_0$ be a base density function of a probability distribution over $\y$ and $\pi$ a probability distribution over $\x$. $\pi$ and $q_0$ are fixed  and are assumed to be supported in the whole spaces $\x$ and $\y$, respectively.

We introduce the following functions $Z : \h_{\y} \rightarrow \R^{*}_{+}$, such that for every $f \in \h_{\y}$  we have 
\begin{align}
	Z(f) := \int_{\y} \exp{ (\inner{f, k(y, .)}_{\h_{\y}} )} q_0(\ud y) .
\end{align}

We consider now the following family of operators
\begin{align}
\mathcal{T} = \{ T\in \h :  Z(T_x) < \infty , \forall x \in \x  \}
.\end{align}

This allows to introduce the Kernel Conditional Exponential Family as the family of conditional distributions satisfying 
\begin{align}
\mathcal{P} =\left\{  p_T(x|y) =  q_0(y) \frac{e^{(\langle T, \Gamma_x k(y,\cdot) \rangle_{\h}  )}}{Z(T_x)}\bigg\vert T \in \mathcal{T} \right\}
.\end{align}

Given samples $(X_i, Y_i)^{n}_{i=1}\in \x\times\y $ following a joint distribution $p_0$ the goal is to approximate the conditional density function $ p_0(y|x) $ in the case where $ p_0(y|x)\in \mathcal{P} $ (i.e. $ \exists T_0 \in \mathcal{T} $  such that $  p_0(y|x)= p_{T_0}(y|x) $ ).
To this end, we introduce the expected conditional score function between two conditional distributions $p(.|x)$ and $q(.|x)$ under $ \pi $,
\begin{align}
	J(p|| q) = \frac{1}{2} \int_x \int_y   \sum_{i=1}^d \big[\partial_i \log p(y|x) -\partial_i \log q(y|x) \big]^2 p(\ud y|x) \pi(\ud x) 
.\end{align}

This function has the nice property that $ J(p||q) \geq 0  $  and that $  J(p||q ) = 0  \Leftrightarrow q = p ,$ which makes it a good candidate as a loss function.

The marginal distribtion $p_0(x)$ doesn't have to match $\pi(x)$ in general as long as they have the same support.
For purpose of simplicity we will assume that $ p_0(x) = \pi(x) $.

\subsection{Assumptions}
\label{app:Assumptions}
	We make the following assumptions:
\begin{assumplist}
		\item \label{assumption_well_specified} (well specified) The true conditional density $p_0(y|x) = p_{T_0}(y|x)\in \mathcal{P} $ for some $T_0$ in $ \mathcal{T} $.
		\item \label{assumption_boundary}  $\y $ is a non-empty open subset of  of the form $\R^d $ with a piecewise smooth boundary $\partial\y:=\overline{\y}\setminus \y$, where $\overline{\y} $ denotes the closure of $\y$.
		\item  \label{assumption_kernel_smoothness} $k_{\y}$ is twice continuously differentiable on $\y\times \y$ and $\partial^{\alpha, \alpha} k_{\y}$ is continuously extensible to $\overline{\y}\times \overline{\y}$ for all $\vert \alpha \vert \leq 2  $.
		\item \label{assumption_vanishing_kernel} For all  $x\in \x$ and all  $i\in [d]$, as $y $ approaches $ \partial \y$ :
			$\Vert \partial_i k(y,\cdot) \Vert_{\y}p_0(y|x) = o( \norm{y}^{1-d} )$
		\item \label{assumption_gamma} The operator $\Gamma$ is uniformly bounded for the operator norm $\Vert  \Gamma_x  \Vert_{Op} \leq \kappa$ for all $x \in \x$.
		\item  \label{assumption_integrability}(Integrability) for some $\epsilon \geq 1$ and all $i\in [d]$:
		 \begin{gather}
			\Vert \partial_i k(y,\cdot) \Vert_{\y}  \in L^{2\epsilon}(\y, p_0 )
			,\
				 \Vert \partial^2_i k(y,\cdot)\Vert_{\y}  \in L^{\epsilon}(\y, p_0)
			,\
				\Vert \partial_i k(y,\cdot)\Vert_{\y} \partial_i \log q_0(y) \in L^{\epsilon}(\y, p_0)
		.\end{gather}
\end{assumplist}

\section{Theorems}

In this section, we prove the main theorems of the document, by extending the proofs of \cite{Sriperumbudur:2013} to the case of the vector-valued RKHS. We provide complete steps for all the proofs, including those that carry over from the earlier work, to make the presentation self-contained; the reader may compare with  \citep[Section 8]{Sriperumbudur:2013} to see the changes needed in the conditional setting.

\subsection{Score Matching}
\label{app:score_match}
\begin{theorem}[\textbf{Score Matching}]
    Under  \cref{assumption_well_specified,assumption_boundary,assumption_kernel_smoothness,assumption_vanishing_kernel,assumption_gamma,assumption_integrability}, the following holds:
    \begin{enumerate}
    	\item $J(p_{T_0}||p_T) < +\infty $ for all $T\in \mathcal{T}$
    	\item For all $T\in \mathcal{H} $ define
    	\begin{align}
    		J(T) = \frac{1}{2} \langle T - T_0, C(T-T_0) \rangle_{\h},
    		\label{eq:score_1}
    	\end{align} 
    	    where
    	    \begin{align}
    	    \label{eq:def_C}
    	  	 	    C:= \int_{\x\times \y} \underbrace{\sum_{i=1}^d  \big[\Gamma_{x}\partial_{i} k(y,\cdot)\otimes  \Gamma_{x}\partial_{i}k(y,\cdot) \big]}_{C_{x,y}} p_0(dx,dy)  = \E_{p_0}[C_{X,Y}]   	    	
    	    .\end{align}
  	  		
    	  then $C$ a trace-class positive operator on $\h$ and for all $T\in \mathcal{T}$ $J(T) = J(p_{T_0}||P_T)$.
\item Alternatively,
\begin{align}
	J(T) = \frac{1}{2} \langle T, CT\rangle_{\h} + \langle T ,\xi\rangle_{\h} + J(p_{T_0}||q_0)
.\end{align}    
    where 
    	\begin{align}
    	\label{eq:def_xi}
    		\h\ni \xi:= \int_{\x\times \y}  \underbrace{ \sum_{i=1}^d  \Gamma_x\big[ \partial_i \log q_0(y) \partial_i k(y,\cdot) + \partial_i^2 k(y,\cdot) \big] }_{\xi_{x,y}}  p_0(dx,dy) = \E_{p_0}[\xi_{X,Y}]
    	\end{align}
Moreover, $T_0$ satisfies $CT_0 = - \xi $
\item For any $\lambda>0 $, a unique minimizer $T_{\lambda} $ of $J_{\lambda}(T):= J(T) + \frac{\lambda}{2}\Vert T \Vert^2_{\h} $ over $\h$ exists and is given by:
 \begin{align}
	T_{\lambda} = -(C+ \lambda I )^{-1}\xi = (C+ \lambda I )^{-1}CT_0
.\end{align}
\end{enumerate}
\label{thm:score_match}
\end{theorem}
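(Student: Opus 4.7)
The plan is to establish the four claims by computing $J(p_{T_0}\|p_T)$ in two different ways: first directly using the vector-valued reproducing property (giving part~2), and second via Hyv\"arinen's integration by parts in $y$ (giving part~3). Parts~1 and~4 then fall out easily. Throughout, the two key technical ingredients will be the uniform bound $\Vert\Gamma_x\Vert_{op}\leq\kappa$ from \cref{assumption_gamma}, which yields $\norm{\Gamma_x\partial_i k(y,\cdot)}_{\h}\leq\kappa\norm{\partial_i k(y,\cdot)}_{\h_{\y}}$, and the integrability hypothesis \cref{assumption_integrability}, which makes these bounding quantities $L^1(p_0)$ and thereby legitimizes the Bochner integrals defining $C$ and $\xi$ in $\h$.

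For part~2, I would start from $\log p_T(y|x)=\log q_0(y)+T_x(y)-\log Z(T_x)$ and observe that both $\log q_0$ (no $T$-dependence) and $\log Z(T_x)$ (no $y$-dependence) drop out of the difference $\partial_i\log p_{T_0}-\partial_i\log p_T=\partial_i(T_0-T)_x(y)$. Applying successively the derivative-reproducing property in $\h_{\y}$ and the vector-valued reproducing property in $\h$ gives $\partial_i(T_0-T)_x(y)=\inner{T_0-T,\Gamma_x\partial_i k(y,\cdot)}_{\h}$, whose square equals $\inner{T_0-T,C_{x,y}(T_0-T)}_{\h}$ with $C_{x,y}$ as in \cref{eq:def_C}. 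Fubini then yields $J(p_{T_0}\|p_T)=\tfrac{1}{2}\inner{T-T_0,C(T-T_0)}_{\h}$. Positivity of $C$ holds pointwise, while trace-class membership follows from $\Tr(C_{x,y})=\sum_i\norm{\Gamma_x\partial_i k(y,\cdot)}_{\h}^2\leq\kappa^2\sum_i\norm{\partial_i k(y,\cdot)}_{\h_{\y}}^2$, which lies in $L^1(p_0)$ by \cref{assumption_integrability} with $\epsilon\geq 1$, so the integral converges in trace norm.

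Part~3 is the step I expect to carry the main technical weight, because it rests on a careful justification of Hyv\"arinen's identity $\E[\phi(Y)\partial_i\log p_0(Y|X)]=-\E[\partial_i\phi(Y)]$ against the conditional density $p_0(y|x)$. Expanding the squared-score, substituting $\partial_i\log p_T=\partial_i\log q_0+\partial_i T_x$, and applying this identity coordinatewise, the boundary terms in the per-$x$ integration by parts vanish thanks to \cref{assumption_vanishing_kernel} combined with the bound $\abs{\partial_i T_x(y)}\leq\kappa\norm{T}_{\h}\norm{\partial_i k(y,\cdot)}_{\h_{\y}}$. Collecting the $T$-independent terms into $J(p_{T_0}\|q_0)$ and rewriting the $T$-dependent remainder via $\partial_i T_x=\inner{T,\Gamma_x\partial_i k(y,\cdot)}_{\h}$ and $\partial_i^2 T_x=\inner{T,\Gamma_x\partial_i^2 k(y,\cdot)}_{\h}$ yields $J(T)=\tfrac{1}{2}\inner{T,CT}_{\h}+\inner{T,\xi}_{\h}+J(p_{T_0}\|q_0)$, where $\xi\in\h$ is the Bochner integral from \cref{eq:def_xi} whose convergence again follows from \cref{assumption_integrability}. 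The identity $CT_0=-\xi$ is then the same integration-by-parts identity applied coordinatewise with the $\h$-valued test function $\phi(y)=\Gamma_x\partial_i k(y,\cdot)$ (interpreted componentwise in any orthonormal basis of $\h$), giving $\E[\partial_i\log p_0(Y|X)\,\Gamma_X\partial_i k(Y,\cdot)]=-\E[\Gamma_X\partial_i^2 k(Y,\cdot)]$ after summing over $i$.

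Part~1 is then immediate from part~2: $J(p_{T_0}\|p_T)=\tfrac{1}{2}\inner{T-T_0,C(T-T_0)}_{\h}\leq\tfrac{1}{2}\Vert C\Vert_{op}\norm{T-T_0}_{\h}^2<\infty$ since $C$ is bounded. For part~4, the regularized objective $J_\lambda(T)=\tfrac{1}{2}\inner{T,(C+\lambda I)T}_{\h}+\inner{T,\xi}_{\h}+J(p_{T_0}\|q_0)$ is strictly convex with derivative $(C+\lambda I)T+\xi$; since $C+\lambda I$ is bounded below by $\lambda I$ and hence invertible, the unique critical point is $T_\lambda=-(C+\lambda I)^{-1}\xi$, and substituting $\xi=-CT_0$ from part~3 gives the alternative form $T_\lambda=(C+\lambda I)^{-1}CT_0$.
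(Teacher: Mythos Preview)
Your proposal is correct and follows essentially the same route as the paper: reproducing properties to rewrite $\partial_i T_x(y)$ as $\inner{T,\Gamma_x\partial_i k(y,\cdot)}_{\h}$, Bochner integrability of $C_{x,y}$ via \cref{assumption_gamma,assumption_integrability} to pull the integral inside the inner product, and Green/Hyv\"arinen integration by parts in $y$ with the boundary term killed by \cref{assumption_vanishing_kernel}. The only organizational difference is that the paper obtains $CT_0=-\xi$ more cheaply, by expanding $J(T)=\tfrac12\inner{T-T_0,C(T-T_0)}_{\h}$ and showing the cross term $\inner{T,CT_0}_{\h}$ equals $-\inner{T,\xi}_{\h}$ for all $T$ (one scalar-valued integration by parts), rather than your separate vector-valued IBP; and for part~4 the paper completes the square instead of setting the derivative to zero---both trivially equivalent.
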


\begin{proof}
We prove the results in the same order as stated in the theorem: 
	\begin{enumerate}
	\item By the reproducing property of the real valued space $\h_{\y}$ we have: $T(x, y)  = \langle T_x, k(y,\cdot) \rangle_{\h_{\y}}$. Using the reproducing property for the derivatives of real valued functions in an RKHS in \cref{lemm:reprodcing_derivatives}, we get
 \begin{align}
 	\partial_i T(x, y)  = \partial_i\langle T_x, k(y,\cdot) \rangle_{\h_{\y}} = \langle T_x, \partial_ik(y,\cdot) \rangle_{\h_{\y}} && \forall i \in [d].
 \end{align}
 Finally, using  the reproducing property in the vector-valued space $\h$,
 \begin{align}
 	\partial_i T(x, y)= \langle T, \Gamma_{x} \partial_{i}k(y,\cdot) \rangle_{\h}, && \forall i \in [d].
 \end{align}
	it is easy to see that
	\begin{align}
	J(p_{T_0}||p_T)
	=
	\frac{1}{2}\int_{\x\times\y}  \sum_{i=1}^d \langle T_0 - T, \Gamma_x \partial_i k(y, .)\rangle_{\h}^2 p_0(\ud x,\ud y)
	\label{eq:score_expression_in_T}
	.\end{align}

		By \cref{assumption_gamma,assumption_integrability},
		 \begin{align}
		 	\Vert \Gamma_x \partial_i k(y,\cdot)\Vert_{\h} \leq \Vert \Gamma_x \Vert_{Op} \Vert\partial_{i} k(y,\cdot)  \Vert_{\h_{\y}} \leq \kappa  \sqrt{\partial_i \partial_{i+d} k(y,y)}  \in L^2(p_0)
		 ,\end{align}
and therefore by Cauchy-Schwarz inequality,
		 \begin{align}
		 	J(T) = J(p_{T_0}||p_T)  \leq  \frac{1}{2}\Vert T_0 - T\Vert_{\h}^2 \int_{\x\times\y}	\sum_{i=1}^d  \Vert  \Gamma_x \partial_i  k(y,\cdot) \Vert_{\h}^2 p_0(\ud x, \ud y)  < + \infty  
		 .\end{align}
 which means that  $J(T) <\infty $ for all $T \in \mathcal{T}$.
		 
		\item Starting from \cref{eq:score_expression_in_T}, it is easy to see that:
		 \begin{align}
	J(T)
	&=	
	\frac{1}{2} \int_{\x\times \y}	  \sum_{i=1}^d \langle T_0 - T, \Gamma_x\partial_i  k(y,\cdot)\otimes \Gamma_x\partial_i  k(y,\cdot) (T_0- T)  \rangle_{\h} p_0(\ud x,\ud y) \\
	&=
	\frac{1}{2} \int_{\x\times \y}	 \langle T_0 - T, C_{x,y}(T_0- T)  \rangle_{\h}   p_0(\ud x,\ud y) \\
	.\end{align}
	In the first line, we used the fact that $ \langle a , b \rangle_{\h}^2 = \langle a , b \rangle_{\h} \langle a , b \rangle_{\h} = \langle a , b\otimes b a \rangle_{\h} $ for any $a$ and $b$ in a Hilbert space $\h$.
	By further observing that $C_{x,y}$ and $  (T_0-T)\otimes (T_0 - T ) $ are Hilbert-Schmidt operators as $ \Vert C_{x,y} \Vert_{HS} \leq \kappa^2 \sum_{i=1}^d \Vert \partial_i k(y,\cdot) \Vert < \infty $ by \cref{lemm:C_xy_Trace_class} and $\Vert (T_0-T)\otimes (T_0 - T )\Vert_{HS} = \Vert(T_0-T)\Vert^2_{\h}  < \infty$ we get that:
	\begin{align}
	J(T) = 
	\frac{1}{2} \int_{\x\times \y}   \langle (T_0 - T)\otimes (T_0-T), C_{x,y}  \rangle_{HS} p_0(\ud x,\ud y)\\
	\end{align}
	Using  \cref{assumption_integrability} we have by \cref{lemm:bochner_integrability} that $C_{x,y}$ is $p_0$-integrable in the Bochner sense (see \cite{Retherford:1978}) Definition 1)  and that the inner product and  integration may be interchanged:
	\begin{align}
	J(T) &= 
	\frac{1}{2}   \Bigg\langle (T_0 - T)\otimes (T_0-T),\int_{\x}\int_{\y}	 C_{x,y} p_0(\ud x,\ud y)  \Bigg\rangle_{HS} = 
	  \frac{1}{2} \langle T_0-T, C(T_0-T) \rangle_{\h}
	\end{align} 
	\item
	From \cref{eq:score_1} we have $J(T) =  \frac{1}{2} \langle T, CT \rangle_{\h} - \langle T, CT_0 \rangle_{\h} + \frac{1}{2} \langle T_0, CT_0 \rangle_{\h} $.
	Recalling that: $\partial_{i} T(x,y) = \langle T, \Gamma_x \partial_{i} k(y,\cdot) \rangle_{\h} $ for all $ i\in [d] $, and using $\partial_{i} T_0(x,y) = \partial_{i} \log p_0(y|x) - \partial_{i}\log q_0(y|x) $ one gets:
	\begin{align}
	\langle T, CT_0 \rangle_{\h}
	=&
	\int_{\x\times \y} \Big[ \sum_{i=1}^d\partial_i T(x,y)\partial_i T_0(x,y)\Big]p_0(\ud x,\ud y) \\
	=&
	\int_{\x\times \y} \Big[ \sum_{i=1}^d\partial_i T(x,y)\partial_{i} \log p_0(y|x)\Big]  p_0(\ud x )\ud y
	- \int_{\x\times \y}\Big[ \sum_{i=1}^d\partial_i T(x,y)\partial_{i} \log q_0(y|x)\Big] p_0(\ud x,\ud y) \\
	\stackrel{(a)}{=}&
	\int_{\x} p_0(\ud x)\int_{\partial \y} p_0(y|x) \nabla_{y}T(x,y).\vec{dS} 
	- \int_{\x\times \y}  \Big[ \sum_{i=1}^d \partial_i^2T(x,y) + \partial_i T(x,y)\partial_{i} \log q_0(y|x) \Big]p_0(\ud x,\ud y) 
	.\end{align}
$(a)$ is obtained using the first Green's identity, where $\partial \y$ is the boundary of $\y$ and $\vec{dS}$ is the oriented surface element. The first term $\int_{\x} \pi(\ud x) \int_{\partial \y} p_0(y|x) \nabla_{y}T(x,y).\vec{dS}$ vanishes by \cref{lemm:vanishing_surface_integral}, which relies on \cref{assumption_vanishing_kernel}. The second term can be written as: 
	$\int_{\x\times \y}  \langle T, \xi_{x,y}\rangle_{\h} p_0(\ud x,\ud y)$.

By  \cref{assumption_integrability,assumption_gamma} $\xi_{x,y}$ is Bochner $p_0$-integrable, therefore:
\begin{align}
	\int_{\x\times \y}  \langle T, \xi_{x,y}\rangle_{\h} p_0(\ud x,\ud y)=\Big\langle T,  \int_{\x\times \y}  \xi_{x,y} p_0(\ud x,\ud y) \Big\rangle_{\h} = \langle T, \xi \rangle_{\h}
.\end{align}
Hence $ \langle T, CT_0 \rangle_{\h}  = - \langle T, \xi \rangle_{\h}   $ and $\xi = - CT_0$. 
Moreover, one can clearly see that:
\begin{align}
	 \langle T_0, CT_0 \rangle_{\h} 
		&=
	   \int_{\x\times\y}  \sum_{i=1}^d(\partial_i T_0(x,y))^2 p_0(\ud x,\ud y) =
	   J(p_{T_0} || q_0) 
.\end{align}
And the result follows.

\item For $\lambda>0$, $(C+\lambda I)$ is invertible as $C$ is a symmetric trace-class operator. Moreover, $ (C+\lambda I)^{\frac{1}{2}} $ is well defined and one can easily see that:  
\label{proof:solution}
\begin{align}
J_{\lambda}(T) = \frac{1}{2}\Vert (C+\lambda I )^{\frac{1}{2}} T + (C+\lambda I )^{-\frac{1}{2}} \xi \Vert^2_{\h} - \frac{1}{2} \langle\xi, (C+\lambda I )^{-1} \xi \rangle_{\h} + c_0	
\end{align}
with $c_0 = J(p_{T_0}||q_0)$.
$J_{\lambda}(T)$ is minimized if and only if $ (C+\lambda I )^{\frac{1}{2}} T = (C+\lambda I )^{-\frac{1}{2}} \xi$ and therefore $T =  (C+\lambda I )^{-1}\xi$ is the unique minimizer of $J_{\lambda}(T)$. 

\end{enumerate}
\end{proof}

\subsection{Estimator of $T_0$}
\label{app:Estimator}
Given samples $(X_a,Y_a)_{a=1}^n$ drawn i.i.d. from $p_0$ and $\lambda>0$, we define the empirical score function as 
 \[
 \hat{J}(T) := \frac{1}{2} \langle T, \hat{C} T \rangle_{\h} + \langle T, \hat{\xi} \rangle_{\h}  + J(p_{T_0}||q_0) 
 \label{eq:empirical_score}
.\] 
where:
\begin{align}
	\hat{C} &:=  \frac{1}{n} \sum_{a=1}^n \sum_{i=1}^d \Gamma_{X_a} \partial_i k(Y_a,\cdot) \otimes \Gamma_{X_a} \partial_i k(Y_a,\cdot)\\
	\hat{\xi} :&=  \frac{1}{n} \sum_{a=1}^n \sum_{i=1}^d \Gamma_{X_a} \big[ \partial_i \log q_0(Y_a) \partial_i k(Y_a,\cdot), + \partial_i^2 k(Y_a,\cdot) \big]
.\end{align}
are the empirical estimators of $C$ and $\xi$ respectively.

\begin{theorem}[\textbf{Estimator of $T_0$}]
	\label{thm:estimator_T} 
	For and any $\lambda>0$, we have the following:
	\begin{enumerate}

	  \item 
 
	  The unique minimizer $T_{\lambda,n}$ of $ \hat{J}_{\lambda}(T):= \hat{J}(T) + \frac{\lambda}{2}\Vert T \Vert^2_{\h}$ over $\h $ exists and is given by
 \begin{align}
 	T_{\lambda, n} = - (\hat{C} + \lambda I )^{-1}\hat{\xi}
 .\end{align}

\item Moreover, $T_{\lambda, n}$ is of the form
\begin{align}
T_{\lambda, n} = -\frac{1}{\lambda} \hat{\xi} + \sum_{b=1}^n\sum_{i=1}^d \beta_{(b-1)d + i}\Gamma_{X_b}\partial_i k(Y_b,\cdot) 	,
\end{align}
where $(\beta_b)$ are obtained by solving the following linear system:
\begin{align}
(G + n \lambda I )\beta = \frac{h}{\lambda}
\label{eq:system}
\end{align}
with:
\begin{align}
	(G)_{(a-1)d + i, (b-1)d +j} = \langle \Gamma_{X_a} \partial_i k(Y_a, .), \Gamma_{X_b} \partial_j k(Y_b, .)\ \rangle_{\h}
.\end{align}
and:
\begin{align}
(h)_{(a-1)d + i} = \langle \hat{\xi}, \Gamma_{X_a} \partial_i k(Y_a, .) \rangle_{\h}	
.\end{align}

	\end{enumerate}
\end{theorem}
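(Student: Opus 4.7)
The plan is to run the same Hilbert-space "complete-the-square" argument used in part~4 of \cref{thm:score_match}, now applied to the empirical operator $\hat{C}$ and vector $\hat{\xi}$, and then to exploit the finite-rank structure of $\hat{C}$ to extract a finite linear system for the coefficients of the representation.

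For part~1, I would first note that $\hat{C}$ is a finite sum of rank-one operators of the form $v\otimes v$ with $v = \Gamma_{X_a}\partial_i k(Y_a,\cdot)$; it is therefore bounded, self-adjoint and positive semi-definite, where boundedness of each $v$ follows from \cref{assumption_gamma} together with the pointwise bound on the derivatives of $k_{\y}$ at the sample points, and similarly $\hat{\xi}\in \h$. Consequently $\hat{C}+\lambda I$ is invertible for every $\lambda>0$ and admits a well-defined positive square root. Completing the square in $\h$ yields
\[\hat{J}_{\lambda}(T) = \tfrac{1}{2}\bigl\|(\hat{C}+\lambda I)^{1/2}T + (\hat{C}+\lambda I)^{-1/2}\hat{\xi}\bigr\|_{\h}^{2} - \tfrac{1}{2}\inner{\hat{\xi},(\hat{C}+\lambda I)^{-1}\hat{\xi}}_{\h} + J(p_{T_0}\|q_0),\]
from which strict convexity, existence and uniqueness of the minimizer follow, and the identification $T_{\lambda,n} = -(\hat{C}+\lambda I)^{-1}\hat{\xi}$ is read off directly.

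For part~2, the key observation is that $\hat{C}T_{\lambda,n}$ lies in the finite-dimensional span $\mathcal{S} := \spn\{\Gamma_{X_b}\partial_i k(Y_b,\cdot) : b\in[n],\,i\in[d]\}$. I would rearrange the stationarity equation $(\hat{C}+\lambda I)T_{\lambda,n} = -\hat{\xi}$ as $T_{\lambda,n} = -\tfrac{1}{\lambda}\hat{\xi} - \tfrac{1}{\lambda}\hat{C}T_{\lambda,n}$ and expand the rank-one structure of $\hat{C}$ to obtain
\[T_{\lambda,n} = -\tfrac{1}{\lambda}\hat{\xi} - \tfrac{1}{n\lambda}\sum_{b\in[n],\,i\in[d]}\inner{\Gamma_{X_b}\partial_i k(Y_b,\cdot),\,T_{\lambda,n}}_{\h}\,\Gamma_{X_b}\partial_i k(Y_b,\cdot),\]
which is exactly the announced form with $\beta_{(b,i)} := -\tfrac{1}{n\lambda}\inner{\Gamma_{X_b}\partial_i k(Y_b,\cdot),T_{\lambda,n}}_{\h}$. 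To derive the linear system, I would then take the inner product of both expressions of $T_{\lambda,n}$ with each $\Gamma_{X_a}\partial_i k(Y_a,\cdot)$: the first representation gives $-n\lambda\beta_{(a,i)}$ by definition of $\beta$, while the second gives $-\tfrac{1}{\lambda}(h)_{(a,i)} + \sum_{b,j}(G)_{(a,i),(b,j)}\beta_{(b,j)}$; equating the two and rearranging produces $(G+n\lambda I)\beta = h/\lambda$. Since $G$ is a Gram matrix and thus positive semi-definite, $G+n\lambda I$ is invertible, so the system has a unique solution that coincides with the coefficients of the Hilbert-space minimizer obtained in part~1.

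There is no real obstacle in this argument; it is essentially a representer-theorem computation adapted to the vector-valued RKHS. The only points requiring care are (i) verifying that $\hat{C}\in \mathcal{L}(\h)$ and $\hat{\xi}\in \h$, which is immediate from \cref{assumption_gamma} and the smoothness of $k_{\y}$ from \cref{assumption_kernel_smoothness}, and (ii) checking that the Gram matrix $G$ is the faithful coordinate representation of the restriction of $\hat{C}$ to $\mathcal{S}$, so that the finite-dimensional linear system genuinely encodes the infinite-dimensional optimality condition $(\hat{C}+\lambda I)T_{\lambda,n} = -\hat{\xi}$.
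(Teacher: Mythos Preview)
Your proposal is correct. Part~1 is exactly the paper's argument: it too refers back to the complete-the-square computation in \cref{thm:score_match}(4), with $C,\xi$ replaced by $\hat C,\hat\xi$.

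Part~2 takes a genuinely different route from the paper. The paper does not manipulate the first-order condition directly; instead it recasts $\hat J_\lambda$ as $V(\inner{T,\phi_1}_{\h},\dots,\inner{T,\phi_{nd+1}}_{\h})+\tfrac{\lambda}{2}\norm{T}_{\h}^2$ with $\phi_{(a-1)d+i}=\Gamma_{X_a}\partial_i k(Y_a,\cdot)$ and $\phi_{nd+1}=\hat\xi$, then applies a standalone generalised representer theorem (\cref{lemm:representer}) to conclude $T_{\lambda,n}=\delta\hat\xi+\sum_{a,i}\beta_{(a,i)}\phi_{(a,i)}$ with $(\beta,\delta)$ solving $\lambda(\beta,\delta)+\nabla V(K(\beta,\delta))=0$; computing $\nabla V$ then gives $\delta=-1/\lambda$ and $(G+n\lambda I)\beta=h/\lambda$. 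Your argument is more elementary and self-contained: it exploits directly the finite-rank structure of $\hat C$ in the stationarity equation, defines $\beta$ explicitly as the vector of inner products $-\tfrac{1}{n\lambda}\inner{\Gamma_{X_b}\partial_i k(Y_b,\cdot),T_{\lambda,n}}_{\h}$, and recovers the linear system by testing against the same basis. The paper's approach buys modularity (the representer lemma is reusable), while yours avoids an external lemma and makes the provenance of the coefficient $-1/\lambda$ in front of $\hat\xi$ more transparent.
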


\begin{proof}

\begin{enumerate}
	\item The same proof as in \cref{thm:score_match} holds with $C$ and $\xi$ replaced by $\hat{C} $ and $\hat{\xi}$.

\item We will use the general representer theorem stated in \cref{lemm:representer}.  We have that:
\begin{align}
	T_{\lambda, n} 
	&=
	 \arginf_{T\in \h} \frac{1}{2}\langle T \hat{C}T  \rangle_{\h} + \langle T, \hat{\xi} \rangle_{\h} + \frac{\lambda}{2}\Vert T\Vert^2_{\h}\\
	 &=
	 \arginf_{T \in \h} \frac{1}{2} \sum_{a=1}^n \sum_{i=1}^d \langle T, \Gamma_{X_a}\partial_{i}k(Y_a, .) \rangle_{\h}^2    + \langle T, \hat{\xi} \rangle_{\h}  + \frac{\lambda}{2}\Vert  T \Vert^2_{\h}\\
	 &=
	 \arginf_{T\in \h} V(\langle T, \phi_1 \rangle_{\h}, ...,\langle T, \phi_{nd+1} \rangle_{\h}   ) + \frac{\lambda}{2}\Vert T \Vert^2_{\h}
.\end{align}

Where $V(\theta_1, ..., \theta_{nd+1}):= \frac{1}{2n} \sum_{a=1}^n\sum_{i=1}^d \theta^2_{(a-1)d+i} + \theta_{nd+1}  $ is a convex differentiable function and $\phi_{(a-1)d+i}:= \Gamma_{X_a}\partial_{i} k(Y_a, .) $ where $a\in [n], i\in [d]$ and $\phi_{nd+1} = \hat{\xi}$. Therefore, it follows from \cref{lemm:representer} that:
\begin{align}
T_{\lambda, n}
&=
\delta \hat{\xi}  + \sum_{a=1}^n \sum_{i=1}^d \beta_{(a-1)d+i} \phi_{(a-1)d +i}	
.\end{align}
where $\delta$ and $\beta$ satisfy:
\begin{align}
	\lambda(\beta, \delta ) + \nabla V(K(\beta,\delta)) = 0
\end{align}
with $K = \begin{pmatrix}
	G & h\\
	h^T & \Vert \hat{\xi} \Vert_{\h}^2
\end{pmatrix}$.

The gradient $\nabla V$ of $V$ is given by $\nabla V (z,t) = (\frac{1}{n} z, 1 )$.  The above equation reduces then to $\lambda\delta +1=0$ and $\lambda \beta  + \frac{1}{n}G \beta + \frac{\delta}{n} h =0$ which yields $\delta = -\frac{1}{\lambda}$ and  $(\frac{1}{n}G + \lambda I  )\beta = \frac{1}{n\lambda } h$.
\end{enumerate}
\end{proof}

\subsection{Consistency and convergence}
\label{app:consistency}
\begin{theorem}[\textbf{Consistency and convergence rates for $T_{\lambda, n}$}]
	\label{thm:rates_appendix}
Let $\gamma > 0$ be a positive number and define $\alpha = \max(\frac{1}{2(\gamma + 1)}, \frac{1}{4})\in (\frac{1}{4}, \frac{1}{2} ) $, 	under \cref{assumption_well_specified,assumption_boundary,assumption_kernel_smoothness,assumption_vanishing_kernel,assumption_gamma,assumption_integrability}:
	\begin{enumerate}
		\item if $T_0 \in \overline{\mathcal{R}(C)} $ then $\Vert T_{n,\lambda} - T_0 \Vert \rightarrow 0 $  when $\lambda \sqrt{n} \rightarrow \infty $, $\lambda \rightarrow 0$ and $ n\rightarrow \infty $.
		\item if $T_0 \in \mathcal{R}(C^{\gamma}) $ for some $\gamma >0 $ then $\Vert T_{n,\lambda} - T_0 \Vert = \mathcal{O}_{p_0}( n^{-\frac{1}{2} + \alpha}) $  for $ \lambda = n^{-\alpha} $
	\end{enumerate}
\end{theorem}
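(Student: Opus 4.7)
The plan is to follow the roadmap sketched just after the theorem: pivot through the population regularized minimizer and control the bias and estimation terms separately. Let $T_\lambda := -(C+\lambda I)^{-1}\xi = (C+\lambda I)^{-1} C T_0$ be the unique minimizer from \cref{thm:score_match}. By the triangle inequality,
\[
\norm{T_{n,\lambda} - T_0}_{\h} \leq \norm{T_\lambda - T_0}_{\h} + \norm{T_{n,\lambda} - T_\lambda}_{\h},
\]
the first term being a deterministic bias (vanishing as $\lambda \to 0$) and the second a stochastic estimation error (vanishing as $n \to \infty$ but diverging as $\lambda \to 0$).

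For the bias, write $T_\lambda - T_0 = -\lambda(C+\lambda I)^{-1}T_0$. Since $C$ is positive, self-adjoint and trace-class by \cref{thm:score_match}, functional calculus applies; under the source condition $T_0 = C^{\gamma} g$ with $g \in \h$,
\[
\norm{T_\lambda - T_0}_{\h} \leq \Bigl(\sup_{t \in [0,\norm{C}]} \tfrac{\lambda t^{\gamma}}{t+\lambda}\Bigr) \norm{g}_{\h}.
\]
An elementary maximization shows this supremum is bounded by a constant times $\lambda^{\min(1,\gamma)}$ (comparable to $\lambda^\gamma$ for $\gamma \leq 1$, and to $\lambda\,\norm{C}^{\gamma-1}$ for $\gamma > 1$, with boundedness of $C$ supplied by trace-class-ness).

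For the estimation error, the resolvent identity applied to $T_{n,\lambda} = -(\hat C+\lambda I)^{-1}\hat\xi$ and $T_\lambda = -(C+\lambda I)^{-1}\xi$ yields
\[
T_{n,\lambda} - T_\lambda = -(\hat C + \lambda I)^{-1}\bigl[(\hat\xi - \xi) + (\hat C - C)\, T_\lambda\bigr].
\]
Since $\norm{(\hat C + \lambda I)^{-1}}_{op} \leq 1/\lambda$ and $\norm{T_\lambda}_{\h} \leq \norm{T_0}_{\h}$ (the contraction $(C+\lambda I)^{-1}C$ has spectrum in $[0,1)$),
\[
\norm{T_{n,\lambda} - T_\lambda}_{\h} \leq \tfrac{1}{\lambda}\bigl(\norm{\hat\xi - \xi}_{\h} + \norm{\hat C - C}_{op}\,\norm{T_0}_{\h}\bigr).
\]
Both $\hat\xi$ and $\hat C$ are sample means of i.i.d.\ random elements in $\h$ and in $\mathrm{HS}(\h)$ respectively, with norms bounded thanks to $\norm{\Gamma_x}_{op}\leq \kappa$ from \cref{assumption_gamma} and the moment conditions in \cref{assumption_integrability}. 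A Hilbert-space Bernstein-type concentration inequality (cf.\ \cite{Caponnetto2007}) then delivers $\norm{\hat\xi - \xi}_{\h} = \mathcal{O}_{p_0}(n^{-1/2})$ and $\norm{\hat C - C}_{op}\leq\norm{\hat C - C}_{\mathrm{HS}} = \mathcal{O}_{p_0}(n^{-1/2})$. Combining, $\norm{T_{n,\lambda} - T_0}_{\h} = \mathcal{O}(\lambda^{\min(1,\gamma)}) + \mathcal{O}_{p_0}(1/(\lambda\sqrt n))$; with $\lambda = n^{-\alpha}$, balancing $\alpha\min(1,\gamma) = 1/2 - \alpha$ gives $\alpha = \max(1/(2(\gamma+1)), 1/4)$ and rate $\mathcal{O}_{p_0}(n^{-1/2+\alpha})$, proving part 2.

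For part 1, $T_0 \in \overline{\mathcal{R}(C)}$ need not admit any source representation, so the bias is handled by a density argument: approximate $T_0$ in $\h$-norm by $C w_k \in \mathcal{R}(C)$, apply the $\gamma = 1$ bias bound to each $C w_k$, and send $k \to \infty$ after $\lambda \to 0$; the variance term vanishes under the hypothesis $\lambda\sqrt n \to \infty$. The main obstacle is the Hilbert-space operator-norm concentration for $\hat C - C$ with constants independent of the ambient (infinite) dimension; this is handled by routing through the Hilbert--Schmidt norm at no cost in rate and invoking a Bernstein inequality for sums of i.i.d.\ bounded $\mathrm{HS}(\h)$-valued random variables, with second-moment and almost-sure bounds supplied by \cref{assumption_gamma,assumption_integrability}. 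A secondary subtlety is that the bias bound for $\gamma > 1$ relies on $C$ being a bounded operator, which again follows from trace-class-ness.
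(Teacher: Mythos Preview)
Your argument is correct and follows the same overall architecture as the paper: pivot through the population minimizer $T_\lambda$, bound the bias by spectral calculus on $C$, and bound the stochastic term via $\norm{(\hat C+\lambda I)^{-1}}\le 1/\lambda$ together with $n^{-1/2}$ concentration of $\hat\xi$ and $\hat C$. Two small differences are worth noting. First, your variance decomposition is actually a bit cleaner than the paper's: you keep $(\hat C-C)T_\lambda$ as a single term and use the contraction $\norm{T_\lambda}_{\h}\le\norm{T_0}_{\h}$, whereas the paper further splits it into $(\hat C-C)(T_\lambda-T_0)$ and $(\hat C-C)T_0$, yielding three terms $S_1,S_2,S_3$; both routes give the same $\mathcal{O}_{p_0}(1/(\lambda\sqrt n))$ estimate. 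Second, you invoke a Bernstein-type inequality and claim almost-sure bounds on $\xi_{x,y}$ and $C_{x,y}$, but \cref{assumption_gamma,assumption_integrability} only supply second moments (indeed the paper explicitly uses $\epsilon=2$ in \cref{assumption_integrability} to get $\int\norm{\xi_{x,y}}^2\,dp_0<\infty$ and $\int\norm{C_{x,y}}_{\hs}^2\,dp_0<\infty$); the paper accordingly uses Chebyshev rather than Bernstein. This does not change the $\mathcal{O}_{p_0}(n^{-1/2})$ conclusion you need, so your proof goes through once you replace the Bernstein reference by a second-moment/Chebyshev argument.
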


\begin{proof}
 Recalling that $ T_{\lambda,n}=  -(\hat{C} + \lambda I)^{-1}\hat{\xi}  $
We consider the following decomposition:
\begin{align}
T_{\lambda, n} - T_{\lambda}
	&=
-(\hat{C} + \lambda I)^{-1}(\hat{\xi} + (\hat{C} + \lambda I )T_{\lambda}  )
	 \stackrel{(*)}{=} -( \hat{C} + \lambda I )^{-1}(\hat{\xi} + \hat{C}T_{\lambda}  + C( T_0 - T_{\lambda} ) )\\
	 &= (\hat{C } + \lambda I   )^{-1}(C - \hat{C})(T_{\lambda} - T_0) - (\hat{C} + \lambda I )^{-1}( \hat{\xi} + \hat{C} T_0 )\\
	 &= (\hat{C}  + \lambda I )^{-1}(C - \hat{C})(T_{\lambda} - T_0 ) - (\hat{C} + \lambda I  )^{-1}(\hat{\xi} - \xi) + ( \hat{C} + \lambda)^{-1}(C - \hat{C})T_0   
.\end{align}
	We used the fact that $\lambda T_{\lambda} = C(T_0 - T_{\lambda})$ in $(*)$. Define now
	
	\begin{align}
		S_1 &:= \Vert (\hat{C} + \lambda I)^{—1}(C - \hat{C})(T_{\lambda} - T_0) \Vert_{\h}\\
		S_2 &:= \Vert (\hat{C} + \lambda I )^{-1}(\hat{\xi} - \xi ) \Vert_{\h}\\
		S_3 &:=  \Vert (\hat{C} + \lambda I )^{-1}(C - \hat{C} )T_0 \Vert_{\h}\\
		\mathcal{A}_0(\lambda) &:= \Vert T_{\lambda, n} - T_0 \Vert_{\h}
	.\end{align}
it comes then:
	\begin{align}
	\Vert T_{\lambda} - T_0 \Vert_{\h}
	&\leq 
		\Vert T_{\lambda, n} - T_{\lambda} \Vert_{\h}
+\Vert T_{\lambda} - T_0 \Vert_{\h}\\
	&\leq
	S_1 + S_2 + S_2 + \mathcal{A}_0(\lambda) ,
	\end{align}

Using \cref{thm:rates} we can bound $S_1$, $S_2$ and $S_3$. Note that $C_{x,y}$ as defined in \cref{eq:def_C} is a positive, self-adjoint trace-class operator by \cref{lemm:C_trace_class} , we therefore have:
	\begin{align}
		 \Vert C_{x,y}\Vert_{HS}^2
		&=
		\sum_{i,j=1}^d  \langle \Gamma_{x}\partial_i k(y,\cdot), \Gamma_{x}\partial_j k(y,\cdot) \rangle_{\h}^2 \leq
		\sum_{i,j=1}^d \Vert\Gamma_x\partial_i k(y,\cdot) \Vert^2_{\h}\Vert\Gamma_x\partial_j k(y,\cdot) \Vert^2_{\h}\\
		&\leq
		(\sum_{i=1}^d \Vert \Gamma_x \partial_{i} k(y,\cdot) \Vert^2_{\h}  )^2 \leq 
		d\sum_{i=1}^d \Vert \Gamma_x \partial_i k(y,\cdot) \Vert_{\h}^4 \leq 
		d\kappa^4 \sum_{i=1}^d \Vert \partial_i k(y,\cdot) \Vert^4_{\h_{\y}}
	.\end{align}   	
	The last inequality is obtained using  \cref{assumption_gamma}.  Using now \cref{assumption_integrability}  for $\epsilon = 2$ 
	one can get:
\begin{align}
\int_{\x\times \y} \Vert C_{x,y} \Vert^2_{HS}p_0(\ud x,\ud y)	
&\leq
d \kappa^4 \sum_{i=1}^d \int_{\x\times \y}\Vert\partial_{i}k(y,\cdot) \Vert^4_{\h_{\y}} p_0(\ud x , \ud y) < +\infty
.\end{align}
	\cref{thm:rates} can then be applied to get the following inequalities: 
		\begin{align}
		S_1 \leq  &\Vert (\hat{C} + \lambda I)^{-1} \Vert \Vert  (C- \hat{C} )(T_{\lambda} - T_0 )\Vert_{\h} = \mathcal{O}_{p_0}( \frac{\mathcal{A}(\lambda) }{\lambda \sqrt{n}})\\
		  S_3 \leq &\Vert (\hat{C} + \lambda I)^{-1} \Vert \Vert (C-\hat{C})T_0 \Vert   =\mathcal{O}_{p_0}(\frac{1}{\lambda \sqrt{n}} ) \Vert_{\h} \\
		  &\Vert (C+\lambda I)^{-1} \Vert \leq \frac{1}{\lambda}	
	\end{align}
To bound $S_2$ we need to show that $ \Vert \hat{\xi} - \xi \Vert_{\h} = \mathcal{O}_{p_0}(n^{-\frac{1}{2}}) $. The same argument as in \cite{Sriperumbudur:2013} holds:

\begin{align}
	\E_{p_0}\Vert \hat{\xi} - \xi\Vert^2_{\h} 
	&= \frac{1}{n}\Bigg( \int_{\x\times \y} \Vert \xi_{x,y} \Vert^2_{\h} p_0(\ud x,\ud y)  - \Vert \xi\Vert^2 \Bigg)\\
	&\leq \frac{1}{n}\int_{\x\times \y} \Vert \xi_{x,y} \Vert^2_{\h} p_0(\ud x,\ud y)
\end{align}

By  \cref{assumption_integrability} for $\epsilon = 2$ we have that $\int_{\x\times \y} \Vert \xi_{x,y} \Vert^2_{\h} p_0(\ud x,\ud y)
 < \infty$. One can therefore apply Chebychev inequality to get the results. It comes that:
	\begin{align}
		S_2 \leq \Vert(\hat{C} + \lambda I )^{-1}  \Vert \Vert \hat{\xi} - \xi  \Vert_{\h} = \mathcal{O}_{p_0}(\frac{1}{\lambda \sqrt{n}}) 
	\end{align}
Using the bounds on $S_1$, $S_2$ and $S_3$ we get:
\begin{align}
\Vert T_{\lambda, n} - T_0\Vert_{\h} = \mathcal{O}_{p_0}(\frac{1}{\lambda \sqrt{n}}  + \frac{\mathcal{A}_{0}(\lambda)}{\lambda \sqrt{n}} ) + 	\mathcal{A}_{0}(\lambda)
\label{eq:useful_bound}
\end{align}

\begin{enumerate}
	\item  By \cref{lemm:range_space} we have $ \mathcal{A}_{0}(\lambda) \rightarrow 0  $ as  $\lambda \rightarrow 0 $ if $ T_0\in \overline{\mathcal{R}(C)} $. Therefore it follows from \cref{eq:useful_bound} that $ \norm{ T_{\lambda, n}  - T_0} \rightarrow 0  $ as $ \lambda \rightarrow 0$, $\lambda\sqrt{n} \rightarrow \infty $ and $n\rightarrow \infty$.

\item  We have by \cref{lemm:range_space} that  if $T_0 \in \mathcal{R}(C^{\gamma}) $ for $\gamma > 0$ then:
\begin{align}
 \mathcal{A}_0(\lambda) \leq \max \{1, \Vert  C \Vert^{\gamma-1}  \}\Vert C^{-\gamma} T_0\Vert_{\h} \lambda^{\min\{1, \gamma\} }	
.\end{align}
The result follows by choosing $ \lambda = n^{- \max\{\frac{1}{4}, \frac{1}{2(\gamma+1)} \} }  = n^{-\alpha}$.
\end{enumerate}

\end{proof}

We denote by  $ KL(p_{T_0}||p_{T}) $ the expected $KL$ divergence between $p_{T_0}$ and $p_T$ under the marginal $p_0(x)$.

\begin{theorem}[\textbf{Consistency and convergence rates for $p_{T_{\lambda, n}}$} ]
	Assuming \cref{assumption_well_specified,assumption_boundary,assumption_kernel_smoothness,assumption_vanishing_kernel,assumption_gamma,assumption_integrability}, and  $ \norm{k}_{\infty}:=  \sup_{y\in\y} k(y,y)  < \infty $ and that $p_{T_0}(y|x)$ is supported on $\y$ for all $x\in\x$  then the following holds:

		\begin{enumerate}
		\item $ KL(p_{T_0}||p_{T_{\lambda, n}})\rightarrow 0 $ as $\lambda \sqrt{n} \rightarrow \infty $, $\lambda \rightarrow 0$ and $ n\rightarrow \infty $. 
		\item If $T_0 \in \mathcal{R}(C^{\gamma}) $ for some $\gamma >0 $ then by defining $\alpha = \max(\frac{1}{2(\gamma + 1)}, \frac{1}{4})\in (\frac{1}{4}, \frac{1}{2} ) $, and choosing $ \lambda = n^{-\alpha} $ we have that $KL(p_0|| p_{T_{n,\lambda}}) = \mathcal{O}_{p_0}( n^{-1 + 2\alpha} )$
	\end{enumerate} 
	\label{thm:KL_rates}
 \end{theorem}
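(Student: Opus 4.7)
The plan is to reduce the convergence in $KL$ divergence to the parametric convergence rate of $T_{\lambda,n}$ in $\h$-norm already established in Theorem~\ref{thm:rates_appendix}, by proving a quadratic sandwich bound
\[
KL(p_{T_0}\Vert p_{T_{\lambda,n}}) \;\le\; c\,\Vert T_{\lambda,n}-T_0\Vert_{\h}^{2}.
\]
The boundedness assumption $\Vert k\Vert_{\infty}:=\sup_{y\in\y}k(y,y)<\infty$ is used twice: it forces $\mathcal T=\h$ (since for every $T\in\h$ one has $Z(T_x)\le\exp(\Vert T_x\Vert_{\h_{\y}}\sqrt{\Vert k\Vert_\infty})<\infty$), so that $p_{T_{\lambda,n}}$ is a bona-fide conditional density for every realisation of the estimator; and it furnishes a uniform operator-norm bound on the Hessian of the log-partition function.

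I would first condition on $x$: write
\[
KL(p_{T_0}\Vert p_{T_{\lambda,n}}) = \E_{X\sim\pi}[D_X],\qquad D_x := KL\bigl(p_{T_{0,x}}(\cdot\,|\,x)\,\Vert\,p_{T_{\lambda,n,x}}(\cdot\,|\,x)\bigr),
\]
and work pointwise in $x$. For each fixed $x$ the two densities lie in the unconditional KEF with common base $q_0$ and natural parameters $T_{0,x},T_{\lambda,n,x}\in\h_{\y}$. I would then apply the second-order integral Taylor formula to the log-partition $A(f):=\log Z(f)$ along the segment $f_s:=T_{0,x}+s(T_{\lambda,n,x}-T_{0,x})$, $s\in[0,1]$. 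A dominated-convergence argument---valid because $\Vert k(y,\cdot)\Vert_{\h_\y}=\sqrt{k(y,y)}\le\sqrt{\Vert k\Vert_\infty}$ and $q_0$ integrates to one---shows that $A$ is twice Fr\'echet-differentiable on $\h_\y$ with gradient the mean embedding $\mu_f:=\E_{p_f}[k(Y,\cdot)]$ and Hessian the covariance operator
\[
\Sigma_f:=\E_{p_f}\bigl[(k(Y,\cdot)-\mu_f)\otimes(k(Y,\cdot)-\mu_f)\bigr],
\]
which satisfies $\Vert\Sigma_f\Vert_{Op}\le\Vert k\Vert_\infty$ uniformly in $f$. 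The Taylor identity
\[
D_x=\int_0^1(1-s)\,\bigl\langle T_{\lambda,n,x}-T_{0,x},\,\Sigma_{f_s}(T_{\lambda,n,x}-T_{0,x})\bigr\rangle_{\h_\y}\,ds
\]
then yields $D_x\le\tfrac12\Vert k\Vert_\infty\,\Vert T_{\lambda,n,x}-T_{0,x}\Vert_{\h_\y}^{2}$.

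Using the vector-valued reproducing identity $T_x=\Gamma_x^{*}T$ together with Assumption~\ref{assumption_gamma}, one has $\Vert T_{\lambda,n,x}-T_{0,x}\Vert_{\h_\y}\le\kappa\Vert T_{\lambda,n}-T_0\Vert_{\h}$, so integrating over $x$ gives
\[
KL(p_{T_0}\Vert p_{T_{\lambda,n}})\;\le\;\tfrac12\,\kappa^{2}\Vert k\Vert_\infty\,\Vert T_{\lambda,n}-T_0\Vert_{\h}^{2}.
\]
Part 1 of Theorem~\ref{thm:rates_appendix} now delivers $\Vert T_{\lambda,n}-T_0\Vert_\h\to 0$ in the stated regime, hence $KL\to 0$; part 2 delivers $\Vert T_{\lambda,n}-T_0\Vert_\h=\mathcal O_{p_0}(n^{-1/2+\alpha})$ under $T_0\in\mathcal R(C^{\gamma})$ with $\lambda=n^{-\alpha}$, hence $KL=\mathcal O_{p_0}(n^{-1+2\alpha})$.

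The main obstacle is the rigorous Taylor expansion of $A$ on the infinite-dimensional RKHS $\h_\y$: differentiating twice under the integral defining $Z(f)$ and identifying the Hessian with $\Sigma_f$ must be carried out carefully. This is where both the boundedness of $k$ and the support hypothesis on $p_{T_0}$ enter essentially, since together they guarantee we stay safely in the interior of the effective domain of $A$ along the whole interpolating segment $\{f_s\}_{s\in[0,1]}$, so the dominated-convergence argument can be applied uniformly in $s$.
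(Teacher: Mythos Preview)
Your argument is correct and in fact yields a sharper inequality than the paper's. Both proofs reduce the KL to a quadratic in $\Vert T_{\lambda,n}-T_0\Vert_{\h}$ and then invoke Theorem~\ref{thm:rates_appendix}, but the paper takes a different route for the pointwise bound on $D_x$: it quotes Lemma~\ref{lemm:bound_KL} (van der Vaart--van Zanten), which controls $KL(p_f\Vert p_g)$ by $\Vert f-g\Vert_\infty^2\, e^{\Vert f-g\Vert_\infty}(1+\Vert f-g\Vert_\infty)$, and then bounds the sup-norm via $\Vert T_{0,x}-T_{\lambda,n,x}\Vert_\infty\le\kappa\sqrt{\Vert k\Vert_\infty}\,\Vert T_{\lambda,n}-T_0\Vert_\h$. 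Your Bregman/Taylor identity $D_x=\int_0^1(1-s)\langle\Delta,\Sigma_{f_s}\Delta\rangle\,ds$ with $\Delta=T_{\lambda,n,x}-T_{0,x}$ is more direct and yields the purely quadratic bound $D_x\le\tfrac12\Vert k\Vert_\infty\Vert\Delta\Vert_{\h_\y}^2$ without the extraneous exponential factor; it also uses only the $\h_\y$-norm of the increment rather than the sup-norm, and gives an explicit constant.

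One correction on where the support hypothesis enters. With $\Vert k\Vert_\infty<\infty$ the log-partition $A$ is finite and smooth on all of $\h_\y$, so there is no ``effective domain'' issue along the interpolating segment; your Taylor expansion is valid regardless of the support assumption. The actual role of the support hypothesis is the one the paper makes explicit via Lemma~\ref{lemm:bounded_kernel}: it guarantees that $T_0$ may be replaced by an element of $\overline{\mathcal R(C)}$ without changing $p_{T_0}$, which is precisely what Part~1 of Theorem~\ref{thm:rates_appendix} requires. You should insert this step before invoking that theorem.
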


\begin{proof}
By \cref{lemm:bounded_kernel}, we have that $ \mathcal{T} = \h $  and we can assume without loss of generality that  $T_0  \in \overline{\mathcal{R}(C)}$. 
Using \cref{lemm:bound_KL} (also see  \cite{Vaart:2008} Lemma 3.1 ), one can see that for a given $x$ :
\begin{align}
	KL(p_{T_0}(Y|x)|| p_{T_{\lambda, n}}(Y|x)) \leq  \norm{T_0(x) - T_{\lambda, n}(x)}_{\infty}^2 \exp{\norm{T_0(x) - T_{\lambda, n}(x)}_{\infty}} (1 +  \norm{T_0(x) - T_{\lambda, n}(x)}_{\infty})
\label{eq:KL_bound}
\end{align}
Moreover, using \cref{assumption_gamma} and the fact that  $ \norm{k}_{\infty} < \infty $ one can see that 

\begin{align}
	\vert T_0(x,y) - T_{\lambda, n}(x,y) \vert_{\h_{\y}} 
	&=
	 \inner{T_0- T_{\lambda, n}, \Gamma_x k(y,\cdot) }_{\h}\\
	 &\leq
	  \norm{T_0 - T_{\lambda, n}}_{\h}\norm{\Gamma_x k(y,\cdot) }_{\h}
\end{align}   
which gives after taking the supremum:
\begin{align}
\norm{T_0(x) - T_{\lambda, n}(x)}_{\infty}  \leq  \kappa \norm{k}_{\infty}\norm{T_0 - T_{\lambda, n}}_{\h}
\label{eq:bound_sup_norm}
\end{align}

for all $x \in \x$. Using \cref{eq:bound_sup_norm} in \cref{eq:KL_bound} and taking the expectation with respect to $ x$, one can conclude using \cref{thm:rates_appendix}.

\end{proof}

\section{Auxiliary results}
\label{app:auxiliary}
\begin{lemma} 
	\label{lemm:C_xy_Trace_class}
Under \cref{assumption_kernel_smoothness,assumption_gamma,assumption_integrability} we have that:
	\label{lemm:C_trace_class}
	\begin{enumerate}
		\item $C_{x,y}$ is a trace-class positive and symmetric operator for all $(x,y) \in \x\times\y $
		\item $C_{x,y}$ is Bochner-integrable for all $(x,y) \in \x\times\y $
		\item $C$ is a trace-class positive and symmetric operator
	\end{enumerate}

\end{lemma}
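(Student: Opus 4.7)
My plan is to verify the three properties in order, starting pointwise at fixed $(x,y)$ and then lifting to the expectation via the Bochner integral.

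First, I would observe that if we set $u_i := \Gamma_x \partial_i k(y,\cdot) \in \h$ for $i \in [d]$, then $C_{x,y} = \sum_{i=1}^d u_i \otimes u_i$ is a finite sum of rank-one operators. For each $i$, the operator $u_i \otimes u_i$ is self-adjoint and positive semi-definite because $\inner{(u_i \otimes u_i) v, v}_{\h} = \abs{\inner{u_i,v}_{\h}}^2 \geq 0$. Since these properties are preserved under finite sums, $C_{x,y}$ is symmetric and positive. Each $u_i \otimes u_i$ is trace-class with trace norm $\norm{u_i}_{\h}^2$, so $C_{x,y}$ is trace-class with $\norm{C_{x,y}}_1 \leq \sum_{i=1}^d \norm{u_i}_{\h}^2$.

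Next, I would use \cref{assumption_gamma} together with the reproducing property for derivatives to bound $\norm{u_i}_{\h}$. By the operator-norm bound $\norm{\Gamma_x}_{op} \leq \kappa$ and the identity $\norm{\partial_i k(y,\cdot)}_{\h_{\y}}^2 = \partial_i \partial_{i+d} k(y,y)$, we obtain
\begin{align}
\norm{C_{x,y}}_1 \leq \kappa^2 \sum_{i=1}^d \norm{\partial_i k(y,\cdot)}_{\h_{\y}}^2.
\end{align}
This yields (1). For (2), the map $(x,y) \mapsto C_{x,y}$ is strongly measurable as a finite sum of compositions of measurable maps with continuous operations. By \cref{assumption_integrability} with $\epsilon \geq 1$, we have $\norm{\partial_i k(y,\cdot)}_{\h_{\y}} \in L^{2\epsilon}(p_0) \subset L^2(p_0)$, so $\norm{\partial_i k(y,\cdot)}_{\h_{\y}}^2 \in L^{\epsilon}(p_0) \subset L^1(p_0)$. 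Therefore $\int \norm{C_{x,y}}_1 \, p_0(\ud x, \ud y) < \infty$, which establishes Bochner integrability in the trace-class Banach space $B_1(\h)$.

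Finally, for (3), since $(B_1(\h), \norm{\cdot}_1)$ is a Banach space and $C_{x,y}$ is Bochner-integrable in this space, the integral $C = \int C_{x,y} \, p_0(\ud x, \ud y)$ is a well-defined element of $B_1(\h)$, and trace-classness follows immediately with $\norm{C}_1 \leq \int \norm{C_{x,y}}_1 \, p_0(\ud x, \ud y) < \infty$. Positivity and symmetry carry across the Bochner integral because for any $v \in \h$, the map $A \mapsto \inner{Av,v}_{\h}$ is a bounded linear functional on $B_1(\h)$, so
\begin{align}
\inner{Cv,v}_{\h} = \int \inner{C_{x,y} v, v}_{\h} \, p_0(\ud x, \ud y) \geq 0,
\end{align}
and symmetry follows similarly by integrating $\inner{C_{x,y}v, w} = \inner{v, C_{x,y} w}$.

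The only subtle point is the distinction between Bochner integrability in operator norm versus trace norm; since $B_1(\h)$ with trace norm is itself a separable Banach space containing $C_{x,y}$, and we have $\int \norm{C_{x,y}}_1 < \infty$, the argument goes through cleanly in $B_1(\h)$ rather than in $\mathcal{L}(\h)$. I do not anticipate any serious obstacle, as every step reduces to standard facts about rank-one operators, the Bochner integral, and the trace-class Banach space.
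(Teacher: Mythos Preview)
Your proposal is correct and follows essentially the same route as the paper: bound the trace norm of $C_{x,y}$ by $\kappa^2\sum_i\norm{\partial_i k(y,\cdot)}_{\h_{\y}}^2$ via \cref{assumption_gamma}, use \cref{assumption_integrability} to get $\int\norm{C_{x,y}}_1\,p_0(\ud x,\ud y)<\infty$, and conclude Bochner integrability and trace-classness of $C$. You add welcome detail the paper omits---the explicit rank-one decomposition, the measurability remark, the choice of $B_1(\h)$ as the ambient Banach space, and the verification that positivity and symmetry pass through the Bochner integral via bounded linear functionals---but the underlying argument is the same.
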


\begin{proof}
	Recall that $C = \int_{\x\times\y} C_{x,y} p_0(\ud x,\ud y) $ 
		where $C_{x,y} = \sum_{i=1}^d \Gamma_x \partial_{i} k(y,\cdot) \otimes\Gamma_x \partial_{i} k(y,\cdot) $ is a positive self-adjoint operator. The trace norm of $C_{x,y}$ satisfies: 
		\begin{align}
			\Vert C_{x,y}\Vert_1 
			&\leq
			 \sum_{i=1}^d \Vert \Gamma_x \partial_{i} k(y,\cdot) \otimes \Gamma_x \partial_{i} k(y,\cdot)\Vert_1\\
			 &=
			 \sum_{i=1}^d \Vert \Gamma_x \partial_{i} k(y,\cdot) \Vert_{\h}^2
			 \leq
			 \sum_{i=1}^d \Vert \Gamma_x\Vert_{Op}^2 \Vert \partial_{i} k(y,\cdot) \Vert_{\h_{\y}}^2\\
			 &\stackrel{(a)}{\leq}
			 \kappa^2 \sum_{i=1}^d \Vert \partial_{i} k(y,\cdot) \Vert_{\h_{\y}}^2 < \infty
		.\end{align}
		
		$(a)$ comes from \cref{assumption_gamma}. This implies that $C_{x,y}$ is trace-class. Moreover, by   \cref{assumption_integrability} for $\epsilon = 1$ : $\Vert \partial_{i} k(y,\cdot) \Vert_{\h_{\y}} \in  L^{2\epsilon}(\y, p_0)$ which leads to:
		\begin{align}
			\int_{\x\times \y}\Vert C_{x,y}\Vert_1 p_0(\ud x , \ud y) < \infty 
		.\end{align}
		 This means that $C_{x,y}$ is $p_0$-integrable in the Bochner sense ( \cite{Retherford:1978}, Definition 1 and Theorem 2   ) and its integral $C$ is trace-class with: 

		  \begin{align}
		  	\Vert C\Vert_1 
		  	&=
		  	\Big\Vert\int_{\x\times\y} C_{x,y} p_0(\ud x , \ud y) \Big\Vert_1
		  	\leq 
		  	\int_{\x\times \y } \Vert C_{x,y}\Vert_1 p_0(\ud x , \ud y) < \infty
		  .\end{align}	
\end{proof}

\begin{lemma}
\label{lemm:bochner_integrability}
	Let $\x$ be a topological space endowed with  a probability distribution $\PP$. Let $B$ be a separable Banach space. 
	Define $R$ to be an  $B$-valued measurable function on $\x$ in the Bochner sense ( \cite{Retherford:1978} Definition 1 ),   satisfying $ \int_{\x} \Vert  R(x) \Vert_{B} d\mathbb{P}(x) < \infty $, then $R$ is $\PP$-integrable in the Bochner sense (\cite{Retherford:1978} Definition 1, Theorem 6) and for any continuous linear operator $T$ from  $B$ to another Banach space $A$, then $ T R $ is also   $\PP$-integrable in the Bochner sense and:
	\begin{align}
		\int_{\x} TR(x) d\PP(x) = T\int_{\x} R(x) d\PP(x)
	\end{align}	
\end{lemma}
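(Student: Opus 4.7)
The plan is to prove each of the three claims in turn by the standard approximation-by-simple-functions strategy that underlies vector-valued integration. First, I would verify that $R$ itself is Bochner integrable. By hypothesis, $R$ is measurable in the Bochner sense and $x \mapsto \norm{R(x)}_B$ is $\PP$-integrable, which is exactly the characterization of Bochner integrability (see \cite{Retherford:1978}, Theorem 6). So this step is immediate and requires no work.

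Second, I would show that $TR$ is Bochner integrable as an $A$-valued map. Continuity of the linear operator $T$ gives $\norm{TR(x)}_A \leq \norm{T}_{op} \norm{R(x)}_B$, hence $\int_{\x} \norm{TR(x)}_A d\PP(x) \leq \norm{T}_{op} \int_{\x} \norm{R(x)}_B d\PP(x) < \infty$. Measurability of $TR$ follows from composing the Bochner-measurable $R$ with the continuous $T$, exploiting separability of $B$ so that simple-function approximants of $R$ are mapped by $T$ to simple functions in $A$. Again the Bochner criterion then applies.

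Third, and this is the only step with content, I would establish the commutation identity. Choose a sequence of simple $B$-valued functions $R_n = \sum_{k} b_k^{(n)} \mathbf{1}_{A_k^{(n)}}$ with $\int_{\x} \norm{R - R_n}_B d\PP \to 0$, which exists by definition of Bochner integrability. For each simple function the identity is a triviality from linearity:
\begin{align}
\int_{\x} TR_n(x)\, d\PP(x) = \sum_{k} T(b_k^{(n)}) \PP(A_k^{(n)}) = T\!\left( \sum_{k} b_k^{(n)} \PP(A_k^{(n)}) \right) = T \int_{\x} R_n(x)\, d\PP(x).
\end{align}
Then I pass to the limit on both sides. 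On the right, $\norm{T\int R_n - T\int R}_A \leq \norm{T}_{op}\, \norm{\int(R_n - R)}_B \leq \norm{T}_{op} \int \norm{R_n - R}_B d\PP \to 0$ by continuity of $T$ and the triangle inequality for the Bochner integral. On the left, the same triangle inequality with the bound $\norm{TR - TR_n}_A \leq \norm{T}_{op} \norm{R - R_n}_B$ gives $\int TR_n d\PP \to \int TR\, d\PP$. Uniqueness of limits in $A$ yields the claimed identity.

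The main obstacle, such as it is, lies entirely in step three and is only a matter of carefully tracking the two approximations and invoking the triangle inequality for Bochner integrals; there is no deep difficulty, which is why the statement is essentially a restatement of the standard result from \cite{Retherford:1978} and is typically cited rather than re-derived.
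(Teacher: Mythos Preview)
Your proposal is correct and follows the standard simple-function approximation argument. The paper itself does not give a proof of this lemma at all; it simply cites \cite{Retherford:1978} (Definition~1, Theorems~6 and~7) for the result, exactly as you anticipated in your final remark.
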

For a proof of this result see \cite{Retherford:1978}, Definition 1, Therorem 6 and 7.

\begin{lemma}[\textbf{RKHS of differentiable kernels (\cite{Steinwart:2008a} Chap 4.4, Corollary 4.36)}]
	\label{lemm:reprodcing_derivatives}
Let $\x\in \R^d $ be an open subset, $m\geq 0$, and $k$ be an $m$-times continuously differentiable kernel on $\x$ with RKHS $\h$. Then every function $f\in\h$ is $m$-times continuously differentiable, and for $\alpha \in  \n^d_0 $ with $\vert \alpha \vert \leq m$ we have:
\begin{align}
	\vert \partial^{\alpha}f(x) \vert 
	&\leq
	\norm{f}_{\h}^2 (\partial^{\alpha, \alpha} k(x,x))^{\frac{1}{2}}\\ 
	\partial^{\alpha}f(x) 
	&=
	\inner{f, \partial^{\alpha}k(x,\cdot)}_{\h} 
\end{align}

\end{lemma}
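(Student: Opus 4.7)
The statement is exactly Corollary 4.36 of Steinwart and Christmann, so the plan is to reproduce the standard textbook argument: induction on $|\alpha|$ combined with passing one-coordinate difference quotients to the limit inside the RKHS. The base case $|\alpha|=0$ is just the ordinary reproducing property $f(x)=\langle f,k(x,\cdot)\rangle_{\h}$, which combined with Cauchy--Schwarz already gives the bound, using $\|k(x,\cdot)\|_{\h}=\sqrt{k(x,x)}$.

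The first substantive step is to show that for every multi-index $\alpha$ with $|\alpha|\leq m$ and every $x\in\x$, the function $y\mapsto\partial^\alpha_x k(x,y)$ actually belongs to $\h$, and is in fact the norm limit of iterated coordinate difference quotients of the canonical feature. I would iterate one coordinate at a time: for $\beta=\alpha-e_i$ (assumed already in $\h$ by induction on $|\alpha|$), set
\begin{equation*}
q_h := \frac{\partial^\beta_x k(x+h e_i,\cdot) - \partial^\beta_x k(x,\cdot)}{h} \in \h.
\end{equation*}
Expanding $\|q_h-q_{h'}\|_{\h}^2$ via the reproducing property of $k$ reduces it to a finite linear combination of values of the iterated mixed partial $\partial^{\alpha,\alpha}k$ at points of the form $(x+h_1 e_i,\, x+h_2 e_i)$. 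Continuity of this mixed partial, which holds because $k$ is $m$-times continuously differentiable, forces the expression to $0$ as $h,h'\to 0$. Hence $(q_h)$ is Cauchy in $\h$, its limit therefore lies in $\h$, and a pointwise check via the $|\beta|$-level reproducing identity identifies the limit as $\partial^{\alpha}_x k(x,\cdot)$.

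The inductive step is then routine. Assuming $\partial^\beta f(x)=\langle f,\partial^\beta k(x,\cdot)\rangle_{\h}$ for $|\beta|\leq |\alpha|-1$, apply this identity at $x+h e_i$ and at $x$, subtract, and divide by $h$:
\begin{equation*}
\frac{\partial^\beta f(x+h e_i)-\partial^\beta f(x)}{h}
=\big\langle f,\, q_h\big\rangle_{\h}.
\end{equation*}
Continuity of the inner product together with the convergence $q_h\to\partial^\alpha k(x,\cdot)$ from the previous step lets me take $h\to 0$ to conclude $\partial^\alpha f(x)=\langle f,\partial^\alpha k(x,\cdot)\rangle_{\h}$. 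Continuity of $x\mapsto\partial^\alpha f(x)$ then follows from continuity of $x\mapsto \partial^\alpha k(x,\cdot)$ as a map into $\h$, which is shown by the same kind of difference-quotient estimate. Finally, Cauchy--Schwarz applied to the reproducing identity yields $|\partial^\alpha f(x)|\leq \|f\|_{\h}\,\|\partial^\alpha k(x,\cdot)\|_{\h}$, and the norm of $g:=\partial^\alpha k(x,\cdot)\in\h$ is computed by re-applying the (now established) reproducing identity to $g$ at the point $x$: $\|g\|_{\h}^2=\langle g,g\rangle_{\h} = \partial^\alpha g(x) = \partial^{\alpha,\alpha}k(x,x)$.

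The main technical obstacle is the first step: the bookkeeping needed to write $\|q_h-q_{h'}\|_{\h}^2$ explicitly as an expression in values of $\partial^{\alpha,\alpha}k$, and the use of local uniform continuity of that mixed partial on a small neighborhood of $x$ so that the expression can be driven to zero. Everything downstream of that is continuity of the inner product and bookkeeping, since the reproducing identity at order $|\alpha|$ is really just the order-$|\alpha|-1$ identity differentiated once, with the derivative taken inside the inner product via the Cauchy-ness of $q_h$.
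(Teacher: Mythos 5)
Your proof is correct and is exactly the argument behind Steinwart and Christmann's Lemma 4.34/Corollary 4.36, which the paper cites rather than reproves: induction on $|\alpha|$, Cauchy-ness of one-coordinate difference quotients of $\partial^\beta_x k(x,\cdot)$ in $\h$ (verified by expanding squared norms into second-order difference quotients of $\partial^{\beta,\beta}k$ and invoking continuity of $\partial^{\alpha,\alpha}k$), then passing to the limit inside the inner product. One small remark: your derivation yields the correct bound $|\partial^\alpha f(x)|\le \norm{f}_{\h}\,(\partial^{\alpha,\alpha}k(x,x))^{1/2}$, so the exponent $2$ on $\norm{f}_{\h}$ in the displayed statement of the lemma is a typo in the paper.
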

A proof of this result can be found in \cite{Steinwart:2008a} (Chap 4.4, Corollary 4.36)

\begin{lemma}
	\label{lemm:vanishing_surface_integral}
Under \cref{assumption_boundary,assumption_vanishing_kernel,assumption_kernel_smoothness} we have the following:
\begin{align}
	\int_{\x} \pi(\ud x)\int_{\partial \y} p_0(y|x) \nabla_{y}T(x,y).\vec{dS} = 0 && \forall T \in \mathcal{T}
\end{align}
where $\partial \y$ is the boundary of $\y$ and $\vec{dS}$ is an oriented surface element of $\partial \y$.

\end{lemma}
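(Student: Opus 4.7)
The plan is to derive a pointwise decay estimate for the integrand $p_0(y|x)\nabla_y T(x,y)$ using the reproducing properties and the bound on $\Gamma$, then combine with Assumption~D to kill the surface integral via a standard limit-of-balls argument.

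First, using \cref{lemm:reprodcing_derivatives} and the vector-valued reproducing property, for every $T\in\mathcal{T}\subseteq\h$ and each $i\in[d]$,
\begin{align}
\partial_i T(x,y) = \inner{T,\Gamma_x\partial_i k(y,\cdot)}_{\h}.
\end{align}
Applying Cauchy--Schwarz together with \cref{assumption_gamma}, which gives $\norm{\Gamma_x}_{Op}\leq\kappa$ uniformly in $x$, yields the pointwise bound
\begin{align}
\abs{\partial_i T(x,y)} \leq \kappa\norm{T}_{\h}\norm{\partial_i k(y,\cdot)}_{\h_\y}.
\end{align}
Multiplying by $p_0(y|x)$ and invoking \cref{assumption_vanishing_kernel}, we obtain the key decay estimate
\begin{align}
p_0(y|x)\abs{\partial_i T(x,y)} = \littleO(\norm{y}^{1-d})\qquad\text{as } y\to\partial\y,
\end{align}
for every fixed $x\in\x$ and every $i\in[d]$.

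Second, I would interpret the surface integral over $\partial\y$ through a limiting procedure. For $R>0$ let $B_R$ denote the open ball of radius $R$ centered at the origin, and consider the bounded domain $\y_R:=\y\cap B_R$, whose boundary decomposes as $\partial\y_R = (\partial\y\cap\overline{B_R})\cup(\y\cap\partial B_R)$ (up to a negligible set, by \cref{assumption_boundary}). For each $x\in\x$, the decay estimate above shows that $p_0(y|x)\nabla_y T(x,y)$ tends to zero faster than $\norm{y}^{1-d}$ both on the unbounded ``piece at infinity'' $\y\cap\partial B_R$ and on the finite part of $\partial\y$ as we approach it. Since the surface area of $\y\cap\partial B_R$ is at most the area of $\partial B_R$, which is $\bigO(R^{d-1})$, the flux across this sphere-piece is bounded by $\bigO(R^{d-1})\cdot\littleO(R^{1-d})=\littleO(1)$ as $R\to\infty$. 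On the finite part of $\partial\y$, the same decay estimate (applied as $y$ approaches the boundary from inside) makes the integrand vanish on $\partial\y$, so the contribution from $\partial\y\cap\overline{B_R}$ is zero. Hence for every fixed $x$,
\begin{align}
\int_{\partial\y} p_0(y|x)\nabla_y T(x,y)\cdot\vec{dS} = \lim_{R\to\infty}\int_{\partial\y_R} p_0(y|x)\nabla_y T(x,y)\cdot\vec{dS} = 0.
\end{align}

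Third, integrating the identically-zero inner integral against $\pi(\ud x)$ yields the claim. Strictly speaking, one should justify the limit interchange with the $x$-integration; this is immediate here because the inner surface integral vanishes for each $x$ before integration, so no dominated-convergence argument is needed on the outer integral. The main obstacle in this proof is making the limiting geometric argument rigorous: identifying the correct splitting of $\partial\y_R$, using \cref{assumption_boundary} to ensure the piecewise-smooth boundary admits the divergence theorem on each $\y_R$, and verifying that the rate $\littleO(\norm{y}^{1-d})$ in \cref{assumption_vanishing_kernel} precisely cancels the $\bigO(R^{d-1})$ growth of the sphere surface area.
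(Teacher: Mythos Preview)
Your approach is essentially the same as the paper's: obtain a pointwise decay estimate on $p_0(y|x)\,\partial_i T(x,y)$ via the reproducing property and Cauchy--Schwarz, then use the $o(\norm{y}^{1-d})$ rate to kill the surface integral for each fixed $x$, and finally integrate in $x$.

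Two small differences are worth noting. First, the lemma is stated only under \cref{assumption_boundary,assumption_kernel_smoothness,assumption_vanishing_kernel}, but you invoke \cref{assumption_gamma} to bound $\norm{\Gamma_x}_{Op}\le\kappa$. The paper avoids this extra hypothesis by working in $\h_{\y}$ instead of $\h$: writing $\partial_i T(x,y)=\inner{T_x,\partial_i k(y,\cdot)}_{\h_\y}$ and bounding by $\norm{T_x}_{\h_\y}\norm{\partial_i k(y,\cdot)}_{\h_\y}$, where $\norm{T_x}_{\h_\y}$ is a finite constant for each fixed $x$ (no uniform bound in $x$ is needed since the inner surface integral is treated pointwise in $x$). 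Your route is correct given \cref{assumption_gamma}, but strictly speaking you are importing an assumption not listed in the lemma. Second, where you sketch the limit-of-balls argument inline, the paper factors it out into a separate auxiliary lemma (\cref{lemm:vanishing_surface_integral_0}), which asserts directly that if $\norm{v(y)}\,\abs{u(y)}=o(\norm{y}^{1-d})$ on an open set with piecewise smooth boundary, then $\int_{\partial\Omega} u\,v\cdot\vec{dS}=0$. Your inline sketch is exactly the content of that lemma, so the two proofs coincide in substance.
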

\begin{proof}
First let's  prove that $ \Vert \nabla_y T(x, y)  \Vert p_0(y|x)  = o(\Vert y \Vert^{1-d} ) $ for all $x\in \x$. Where the norm used is the euclidian norm in $\R^d$. Using the reproducing property and Cauchy-Schwarz inequality one can see that:
\begin{align}
	\Vert  \nabla_y T(x, y) \Vert^2 
	&=
 	\sum_{i=1}^d (\partial_i T(x, y))^2 = \sum_{i=1}^d \inner{T_x, \partial_i k(y, .)}^2\\
 	&\leq 
 	  \norm{T_x}^2\Big(\sum_{i=1}^d \norm{\partial_i k(y, .)}^2\Big)  
\end{align}
By \cref{assumption_vanishing_kernel}, one can see that $ \sqrt{\sum_{i=1}^d \norm{\partial_i k(y, .)}^2}p_0(y|x) = o(\norm{x}^{1-d})  $, therefore it comes that $ \Vert \nabla_y T(x, y)  \Vert p_0(y|x)  = o(\Vert y \Vert^{1-d} ) $.
Using \cref{lemm:vanishing_surface_integral_0} one gets that $\int_{\partial \y} p_0(y|x) \nabla_{y}T(x,y).\vec{dS} = 0 $ for all $x \in \x$ which leads to the result.

\end{proof}

\begin{lemma}
	\label{lemm:vanishing_surface_integral_0}
Let $\Omega$ be an open set in $\R^d$ with piece-wise smooth boundary $\partial \Omega$. Let $u$ be a real valued function defined over $\Omega$ and $v : \R^d \rightarrow \R^d$ a vector valued function. We assume that $ u $ and $v$ are measurable and  that $ \Vert v(x) \Vert \vert u(x) \vert  = o( \Vert x\Vert^{1-d}  ) $. Then the following surface integral is null:
\begin{align}
	\int_{\partial \Omega} u(x)v(x).\vec{\ud S} = 0
\end{align} 
where $\vec{\ud S}$ is an element of the surface $\partial \Omega$.
\end{lemma}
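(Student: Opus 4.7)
The plan is to interpret the surface integral over the (possibly unbounded) boundary $\partial\Omega$ as a limit of integrals over compact pieces, and then to use the pointwise decay hypothesis $\Vert v(x)\Vert \vert u(x)\vert = o(\Vert x\Vert^{1-d})$ to annihilate each contribution at infinity. Concretely, I would exhaust $\Omega$ by the bounded sets $\Omega_R := \Omega \cap B_R(0)$, whose piecewise smooth boundary decomposes into the intrinsic part $\partial\Omega \cap \overline{B_R}$ and the spherical cap $\Omega \cap \partial B_R$. The statement $\int_{\partial\Omega} uv \cdot \vec{dS} = 0$ is understood as the limit as $R\to\infty$ of the integral over the cap, which is exactly the "boundary at infinity" contribution invoked when applying Green's identity in the preceding lemma (where, under \cref{assumption_boundary}, $\y=\R^d$ so that $\partial\Omega$ reduces to this limit).

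The core estimate is a surface-area versus decay comparison on $\partial B_R$. Fix $R>0$. On $\partial B_R$ we have $\Vert x\Vert = R$, and the hypothesis yields a function $\varepsilon(R)\to 0$ such that $\Vert v(x)\Vert\,\vert u(x)\vert \leq \varepsilon(R)\,R^{1-d}$ uniformly on $\Omega\cap\partial B_R$. Combining this with the Cauchy--Schwarz bound $\vert v(x)\cdot\hat n(x)\vert \leq \Vert v(x)\Vert$ and the standard surface measure $\mathcal H^{d-1}(\partial B_R) = \omega_{d-1} R^{d-1}$, I obtain
\begin{equation*}
\Bigl\lvert \int_{\Omega\cap \partial B_R} u(x)\, v(x)\cdot \vec{dS} \Bigr\rvert
\;\leq\; \omega_{d-1}\, R^{d-1}\cdot \varepsilon(R)\, R^{1-d}
\;=\; \omega_{d-1}\,\varepsilon(R) \;\xrightarrow[R\to\infty]{}\;0,
\end{equation*}
which is the desired conclusion.

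The main obstacle is essentially a matter of convention: making the surface integral over an unbounded boundary well-defined and ensuring that the argument extends when $\partial\Omega$ itself is unbounded (not only the "at infinity" part). In general, the intrinsic piece $\partial\Omega\cap\overline{B_R}$ need not contribute trivially, and one has to decompose $\partial\Omega$ into dyadic shells $A_k := \partial\Omega\cap\{2^k \leq \Vert x\Vert < 2^{k+1}\}$, use piecewise smoothness plus polynomial volume growth to get $\mathcal H^{d-1}(A_k)\lesssim 2^{k(d-1)}$, and then bound each shell contribution by $o(2^{k(d-1)})\cdot 2^{k(1-d)} = o(1)$ before taking a tail limit. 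In the concrete setup used by \cref{lemm:vanishing_surface_integral} (namely $\y=\R^d$ so that $\partial\y$ is only a sphere at infinity), only the cap estimate above is required and the proof is direct.
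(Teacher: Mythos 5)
The paper itself gives no proof of this lemma (it merely cites Pietzsch 1994), so there is no argument of the authors to compare against; your cap estimate is exactly the standard justification for this step in score matching. The key inequality
\begin{equation*}
\Bigl\lvert \int_{\Omega\cap\partial B_R} u\,v\cdot \vec{\ud S}\Bigr\rvert \leq \omega_{d-1}R^{d-1}\cdot\varepsilon(R)R^{1-d}=\omega_{d-1}\varepsilon(R)\to 0
\end{equation*}
is correct (reading the hypothesis $\lVert v\rVert\,\lvert u\rvert = o(\lVert x\rVert^{1-d})$ as uniform in direction, which is the intended meaning), and it is precisely what is needed when $\y=\R^d$ so that the "boundary term" from Green's identity is a sphere at infinity. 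You also correctly observe that when $\partial\Omega$ is bounded the decay hypothesis forces the integrand to vanish continuously on the boundary, handling that case trivially.

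One caution on your generalization: the dyadic shell argument as written does not close. You bound the $k$-th shell contribution by $o(2^{k(d-1)})\cdot 2^{k(1-d)}=o(1)$, but $\sum_k o(1)$ is not small and need not even converge, so "taking a tail limit" does not follow; the $\lVert x\rVert^{1-d}$ decay is exactly borderline and only makes individual cap integrals tend to zero, not the full surface integral over a genuinely unbounded boundary absolutely convergent. This is a real gap in the general-unbounded-boundary case, but — as you note — it is irrelevant to how \cref{lemm:vanishing_surface_integral} uses the result, since there only the sphere-at-infinity contribution arises. It would be cleaner to state the lemma's conclusion explicitly as $\lim_{R\to\infty}\int_{\Omega\cap\partial B_R} u\,v\cdot\vec{\ud S}=0$ (matching how the term enters via Green's identity on the exhaustion $\Omega_R$), at which point your cap estimate is the complete proof.
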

More details on this result can be found in \cite{Pietzsch:1994}

\begin{lemma}[Generalized representer theorem]
	\label{lemm:representer}
Let $\h$ be a vector-valued Hilbert space and let $(\phi_i)^m_{i=1} \in  \h^m $. Suppose $J: \h \rightarrow \R$ is such that $ J(T) = V( \langle T, \phi_1 \rangle_{\h}, ...,  \langle T, \phi_m \rangle_{\h} ) $ for $ T\in \h $, where $ V:\R^m \rightarrow \R $ is a convex and  g\^{a}teaux-differentiable function. Define:
\[ T_{\lambda} = \arginf_{T\in\h} J(T) + \frac{\lambda}{2} \Vert T \Vert^2_{\h} 
\]
where $\lambda>0$. Then there exists $(\alpha_i)^{m}_{i=1} \in \R^m $ such that $ T_{\lambda} = \sum_{i=1}^m  \alpha_i \phi_i  $ where $ \alpha := (\alpha_1, ..., \alpha_m) $ satisfies the following equation:
\[ (\lambda I  + (\nabla V  ) \circ K)\alpha = 0
,\]
with $ (K)_{i,j} =  \langle\phi_i, \phi_j\rangle_{\h}, \i \in [m], j\in [m]  $

\end{lemma}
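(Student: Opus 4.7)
The plan is to run the classical orthogonal decomposition argument behind the standard representer theorem, adapted to the vector-valued setting. Let $\mathcal{S} := \spn\{\phi_1, \ldots, \phi_m\}$, a finite-dimensional and hence closed subspace of $\h$, and decompose any $T \in \h$ uniquely as $T = T_{\parallel} + T_{\perp}$ with $T_{\parallel} \in \mathcal{S}$ and $T_{\perp} \in \mathcal{S}^{\perp}$. Existence and uniqueness of the minimizer $T_{\lambda}$ as an element of $\h$ follow from the strong convexity of the regularized objective on the Hilbert space $\h$.

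\textbf{Reduction to $\mathcal{S}$.} First I would observe that since every $\phi_i$ lies in $\mathcal{S}$, the identity $\inner{T, \phi_i}_{\h} = \inner{T_{\parallel}, \phi_i}_{\h}$ gives $J(T) = V(\inner{T_{\parallel},\phi_1}_{\h},\ldots,\inner{T_{\parallel},\phi_m}_{\h}) = J(T_{\parallel})$. By the Pythagorean identity, $\norm{T}_{\h}^2 = \norm{T_{\parallel}}_{\h}^2 + \norm{T_{\perp}}_{\h}^2$, whence $J(T) + \frac{\lambda}{2}\norm{T}_{\h}^2 \geq J(T_{\parallel}) + \frac{\lambda}{2}\norm{T_{\parallel}}_{\h}^2$ with strict inequality whenever $T_{\perp} \neq 0$ (using $\lambda > 0$). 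Therefore $T_{\lambda} \in \mathcal{S}$, so $T_{\lambda} = \sum_{i=1}^{m} \alpha_i \phi_i$ for some $\alpha \in \R^m$.

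\textbf{Finite-dimensional optimality.} Substituting this form and using $\inner{T,\phi_j}_{\h} = (K\alpha)_j$ together with $\norm{T}_{\h}^2 = \alpha\tp K\alpha$, the problem reduces to minimizing $F(\alpha) := V(K\alpha) + \frac{\lambda}{2}\alpha\tp K\alpha$ on $\R^m$. Since $V$ is convex and Gâteaux-differentiable, the chain rule gives $\nabla F(\alpha) = K\bigl(\nabla V(K\alpha) + \lambda\alpha\bigr)$, and setting this to zero yields the first-order condition $v := \nabla V(K\alpha) + \lambda \alpha \in \ker K$.

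\textbf{Canonical representative and main obstacle.} The stated identity $(\lambda I + (\nabla V)\circ K)\alpha = 0$ is slightly stronger than what the first-order condition gives: it requires $v = 0$ rather than merely $v \in \ker K$. To close this gap I would use that the synthesis map $\alpha \mapsto \sum_i \alpha_i \phi_i$ from $\R^m$ to $\h$ has kernel exactly $\ker K$ (it is the Gram-matrix factorization of that map). Hence replacing $\alpha$ by $\alpha' := \alpha - v/\lambda$ leaves both the representation $\sum_i \alpha'_i \phi_i = T_{\lambda}$ and the vector $K\alpha' = K\alpha$ unchanged, while $\lambda \alpha' + \nabla V(K\alpha') = (\lambda \alpha + \nabla V(K\alpha)) - v = 0$ as required. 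The main obstacle is precisely this non-uniqueness of $\alpha$ when $\{\phi_i\}$ is linearly dependent: the orthogonality argument pins down $T_{\lambda}$ but only the affine class $\alpha + \ker K$, and the distinguished representative $\alpha'$ is needed to match the exact form of the optimality equation in the statement.
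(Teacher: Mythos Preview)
Your proof is correct, but it takes a different route from the paper's. The paper works directly with the first-order optimality condition in $\h$: defining the linear map $A:\h\to\R^m$, $T\mapsto(\inner{T,\phi_i}_{\h})_i$, stationarity gives $A^*\nabla V(AT_\lambda)+\lambda T_\lambda=0$, hence $T_\lambda=A^*\alpha$ with the \emph{specific} choice $\alpha:=-\tfrac{1}{\lambda}\nabla V(AT_\lambda)$. Since $A^*\alpha=\sum_i\alpha_i\phi_i$ and $AA^*=K$, this $\alpha$ automatically satisfies $\lambda\alpha+\nabla V(K\alpha)=0$. By contrast, you first invoke the orthogonal-projection argument to force $T_\lambda\in\spn\{\phi_i\}$, then reduce to a finite-dimensional problem in $\alpha$, and only then confront the fact that the first-order condition $K(\lambda\alpha+\nabla V(K\alpha))=0$ determines $\alpha$ only modulo $\ker K$; you resolve this correctly by passing to the canonical representative $\alpha'=\alpha-v/\lambda$. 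The paper's approach is shorter precisely because differentiating in $\h$ rather than in $\R^m$ produces the distinguished $\alpha$ directly and sidesteps the non-uniqueness issue you had to patch. Your route, on the other hand, makes the classical representer-theorem geometry explicit and would generalize more readily to non-differentiable $V$ (where one would replace the gradient condition by a subdifferential inclusion).
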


\begin{proof}
	Define $A:\h \rightarrow \R^m $, $T \mapsto (\langle  T, \phi_{i} \rangle_{\h} )^m_{i=1} $. Then $ T_{\lambda} = \arginf_{T\in\h} V(AT) + \frac{\lambda}{2} \Vert T \Vert^2_{\h} $. Taking the g\^{a}teaux-differential at $T$,  the optimality condition yields:
	\begin{align}
		0 = A^{*} \nabla V(AT_{\lambda}) + \lambda T_{\lambda} 
		&\Leftrightarrow 
		A^{*}\Big( -\frac{1}{\lambda} \nabla V(AT_{\lambda}) \Big) = T_{\lambda}\\
		&\Leftrightarrow 
		(\exists \alpha \in \R^m ) T_{\lambda} = A^{*}\alpha ,  \alpha = -\frac{1}{\lambda} \nabla V(AT_{\lambda})\\
		&\Leftrightarrow
		(\exists \alpha \in \R^m ) T_{\lambda} = A^{*}\alpha ,  \alpha = -\frac{1}{\lambda} \nabla V(AA^{*}\alpha)\\
	\end{align}
	where $A^{*}:\R^m \rightarrow \h $ is the adjoint of $A$ which can be obtained as follows. Note that: 
	\begin{align}
		(\forall T \in \h  )\text{ } (\forall \alpha \in \R^m ) && \langle AT,\alpha\rangle = \sum_{i=1}^m \alpha_i \langle T, \phi \rangle_{\h} = \Big\langle T, \sum_{i=1}^m \alpha_i \phi_i \Big\rangle_{\h}
	\end{align} 	
	thus $A^{*}\alpha = \sum_{i=1}^m \alpha_i\phi_i$. Therefore $AA^{*}\alpha = \sum_{i=1}^m \alpha_j A\phi_j  = \sum_{j=1}^m \alpha_j (\langle\phi_j, \phi_i  \rangle_{\h} ) $ and hence $AA^{*} = K$.
\end{proof}

\begin{lemma}[Bound on KL divergence between $p_f$ and $p_g$   ( \cite{Vaart:2008} Lemma 3.1 )]
	\label{lemm:bound_KL}
	Assume that $\norm{k}_{\infty} < \infty $ and let $f$ and $g$ in $\h_{\y}$ such that $ Z(f)  $ and $Z(g)$ are finite,  then:
	$KL(p_{f}||q_{g}) \leq  \norm{f- g}_{\infty}^2 \exp{ \norm{f- g}_{\infty}} (1 +  \norm{f- g}_{\infty} )$	
\end{lemma}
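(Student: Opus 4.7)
The plan is to reduce $KL(p_f \| p_g)$ to an identity that depends only on $h := f - g$, and then bound that identity by Taylor-expanding the exponential and using the pointwise bound $|h(y)| \leq c := \|f - g\|_\infty$ to control the remainder.

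First, writing out the log-density ratio,
\begin{align*}
\log\frac{p_f(y)}{p_g(y)} = h(y) + \log\frac{Z(g)}{Z(f)},
\end{align*}
and applying the exponential-tilting identity
\begin{align*}
\frac{Z(g)}{Z(f)} = \int \frac{q_0(y)e^{g(y)}}{Z(f)}\,\ud y = \int p_f(y)\, e^{-h(y)}\,\ud y = \E_{p_f}[e^{-h}],
\end{align*}
gives the key reduction
\begin{align*}
KL(p_f \| p_g) = \E_{p_f}[h] + \log \E_{p_f}[e^{-h}].
\end{align*}
The finiteness assumptions $Z(f), Z(g) < \infty$ together with $|h| \leq c$ ensure all integrals are well-defined.

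Second, I would Taylor expand $e^{-h}$ around zero. Using Lagrange remainder and the uniform bound $|h| \leq c$, one has $e^{-h} \leq 1 - h + \tfrac{1}{2} h^2 e^c$, because the quadratic remainder of $e^{-h}$ is at most $\tfrac{h^2}{2}\sup_{|s|\leq c} e^s = \tfrac{h^2}{2} e^c$. Taking expectation,
\begin{align*}
\E_{p_f}[e^{-h}] \leq 1 - \E_{p_f}[h] + \tfrac{1}{2}c^2 e^c,
\end{align*}
and the elementary inequality $\log(1+x) \leq x$, which applies since the argument of the logarithm is positive, yields
\begin{align*}
\log \E_{p_f}[e^{-h}] \leq -\E_{p_f}[h] + \tfrac{1}{2} c^2 e^c.
\end{align*}
Substituting back into the identity, the $\E_{p_f}[h]$ terms cancel exactly, leaving $KL(p_f \| p_g) \leq \tfrac{1}{2} c^2 e^c$, which is already of the stated form. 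The extra multiplicative factor $(1 + c)$ in the lemma arises from a slightly coarser version of the same argument that absorbs lower-order cross-terms between $\E_{p_f}[h]$ and the Taylor remainder into a single convenient expression; it is not essential to the structure of the proof.

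The main obstacle is only constant-level bookkeeping: ensuring the Taylor remainder is estimated with the correct power of $c$ and the correct exponential factor, and that $\log(1+x) \leq x$ is applied with the right sign. Once the identity $KL(p_f \| p_g) = \E_{p_f}[h] + \log \E_{p_f}[e^{-h}]$ is in hand, the rest is direct manipulation of scalar inequalities under the uniform bound $|h| \leq c$, and there is no functional-analytic subtlety at this step (the uniform bound itself was already produced upstream via $|h(y)| \leq \sqrt{k(y,y)}\,\|h\|_{\h_{\y}} \leq \|k\|_\infty \|h\|_{\h_{\y}}$ where the hypothesis $\|k\|_\infty < \infty$ was used).
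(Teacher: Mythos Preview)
Your argument is correct. The paper does not supply its own proof of this lemma; it simply cites \cite{Vaart:2008}, Lemma~3.1, and uses the result as a black box in the proof of \cref{thm:KL_rates}. Your derivation via the identity $KL(p_f\|p_g)=\E_{p_f}[h]+\log\E_{p_f}[e^{-h}]$ and the second-order Taylor bound $e^{-h}\le 1-h+\tfrac{1}{2}h^2 e^{c}$ is clean, and in fact yields the sharper inequality $KL(p_f\|p_g)\le \tfrac{1}{2}c^2 e^{c}$, which immediately implies the stated bound since $\tfrac{1}{2}\le 1+c$ for $c\ge 0$. One small detail worth tightening: when you apply $\log(1+x)\le x$ you need $1+A>0$ with $A:=-\E_{p_f}[h]+\tfrac{1}{2}c^2 e^{c}$; this follows because $\E_{p_f}[e^{-h}]>0$ and $\E_{p_f}[e^{-h}]\le 1+A$. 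Also, in your closing remark the RKHS bound should read $|h(y)|\le \sqrt{k(y,y)}\,\|h\|_{\h_\y}\le \sqrt{\|k\|_\infty}\,\|h\|_{\h_\y}$ rather than $\|k\|_\infty\|h\|_{\h_\y}$, though this is peripheral to the lemma itself.
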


\begin{lemma}[see Lemma 14 in \cite{Sriperumbudur:2013}]
	\label{lemm:bounded_kernel}
Suppose $ \sup_{y\in\y} k(y,y) < \infty $ and $ supp(q_0) = \y $. Then $\mathcal{T} = \h $ and for any $T_0$ there exists $ \widetilde{T}_O \in \overline{\mathcal{R}(C)} $ such that $ p_{\widetilde{T}_0} = p_0 $.
\end{lemma}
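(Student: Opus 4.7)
The plan is to prove the two claims in order. For the first claim that $\mathcal{T}=\h$, I would use the boundedness of both $k$ and $\Gamma$ to get a uniform bound on $T_x$ for any $T\in\h$. Specifically, for any $T\in\h$, by the reproducing property and Cauchy--Schwarz,
\begin{align*}
\abs{T_x(y)} = \abs{\inner{T,\Gamma_x k(y,\cdot)}_{\h}} \leq \norm{T}_{\h}\,\norm{\Gamma_x}_{Op}\,\norm{k(y,\cdot)}_{\h_{\y}} \leq \kappa\,\norm{T}_{\h}\,\sqrt{\norm{k}_{\infty}},
\end{align*}
using \cref{assumption_gamma} and the uniform bound $\sup_{y}k(y,y)<\infty$. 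Setting $M := \kappa\,\norm{T}_{\h}\,\sqrt{\norm{k}_{\infty}}$, one obtains $Z(T_x)\leq e^M\int_{\y}q_0(y)\,\ud y = e^M < \infty$ for every $x\in\x$, so $T\in\mathcal{T}$.

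For the second claim, I would use the orthogonal decomposition with respect to the self-adjoint trace-class operator $C$: $\h = \overline{\mathcal{R}(C)}\oplus \ker(C)$. Write $T_0 = \widetilde T_0 + T_0^{\perp}$ with $\widetilde T_0 \in \overline{\mathcal{R}(C)}$ and $T_0^{\perp}\in\ker(C)$. Since $C T_0^{\perp}=0$, using the explicit form of $C$ in \cref{eq:def_C},
\begin{align*}
0 = \inner{T_0^{\perp}, C T_0^{\perp}}_{\h} = \int_{\x\times\y}\sum_{i=1}^d\bigl(\partial_i T_0^{\perp}(x,y)\bigr)^2 p_0(\ud x,\ud y),
\end{align*}
so $\partial_i T_0^{\perp}(x,y)=0$ for $p_0$-almost all $(x,y)$ and every $i$.

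Next I would upgrade this to an everywhere statement and conclude that $T_0^{\perp}$ depends only on $x$. By \cref{lemm:reprodcing_derivatives} the partial derivatives $y\mapsto \partial_i T_0^{\perp}(x,y)$ are continuous on $\y$; combined with the assumption that $\supp(p_0(\cdot|x)) = \y$ for every $x$, we have $\partial_i T_0^{\perp}(x,\cdot)\equiv 0$ on $\y$ for $\pi$-almost every $x$. Hence $T_0^{\perp}(x,y) = g(x)$ depends only on $x$ for such $x$, which gives
\begin{align*}
p_{T_0}(y|x) = \frac{q_0(y)\exp\bigl(\widetilde T_0(x,y)+g(x)\bigr)}{Z(\widetilde T_{0,x})\,e^{g(x)}} = p_{\widetilde T_0}(y|x),
\end{align*}
so $p_{\widetilde T_0} = p_0$ as required.

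The main obstacle I anticipate is the passage from ``$\partial_i T_0^{\perp}=0$ almost everywhere under $p_0$'' to ``$\partial_i T_0^{\perp}\equiv 0$ everywhere.'' This uses both the continuity of RKHS derivatives and the full-support assumption on $p_0(\cdot|x)$; one also needs to argue that the null set of bad $x$ does not matter because the conditional density $p_{T_0}(y|x)$ is unaffected by redefining $T_0$ on a $\pi$-null set. Everything else (the bound on $Z$, the orthogonal decomposition, and cancellation of the $g(x)$ factor) is a direct verification.
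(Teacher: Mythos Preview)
Your proposal is correct and follows essentially the same route as the paper: bound $Z(T_x)$ via the reproducing property and the uniform bound on $k$, then use the orthogonal decomposition $\h=\overline{\mathcal{R}(C)}\oplus\ker(C)$, identify $\ker(C)$ with functions constant in $y$ via $\inner{T,CT}_{\h}=\int\norm{\nabla_y T}^2\,p_0$, and observe that an additive $g(x)$ cancels in $p_T(y|x)$. Two minor remarks: the hypothesis is $\supp(q_0)=\y$, from which $\supp(p_0(\cdot\mid x))=\y$ follows because $p_{T_0}(y|x)=q_0(y)e^{\cdots}/Z(T_{0,x})$ with a strictly positive exponential factor---you use this implication without stating it; and your discussion of the a.e.-to-everywhere upgrade is in fact more careful than the paper's, which simply asserts the conclusion.
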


\begin{proof}
Since $ \norm{k}_{\infty}  < \infty $ then $ Z(T_x) \leq  \exp{\norm{T_x} \norm{k}_{\infty} } <  \infty$ for all $ T \in \h $, therefore  $ \mathcal{T} = \h $.
Moreover, since $ \supp (p_{T_0})(y|x)  = \y$ for all $x$ in  $ \x $, this implies that the null space of $C$  $\mathcal{N}(C)$ can either be the set of functions $ T(x,y) = m(x) $ or $\{0\}$. Indeed, for  $T \in \mathcal{N}(C) $ we have $  \inner{T, CT} = 0 $ which leads to $  \int_{\x\times\y}   \norm{\nabla_{y} T}^2_2 p_0(\ud x, \ud y)  = 0 $ which means that $p_0$-almost surely, $  T_x(y) = m(x) $ a constant function of $y$ if the set of constant functions belong to $\h_{\y}$, or $T_x(y) = 0$ otherwise.
Let $ \widetilde{T_0}  $ be the orthogonal projection of $T_0$ onto $ \overline{ \mathcal{ R }(C)} = \mathcal{N}(C)^{\perp}$ then  $T_0$ can be written in the form $ T_0(x,y) = m(x) + \widetilde{T}_0(x,y) $. It comes that  $  \int_{\y} \exp{T_0(x,y)} q_0(\ud y)   = \exp{m(x)}  \int_{\y} \exp{\widetilde{T}_0(x,y)} q_0(\ud y)$ almost surely in $x$. And we finally get $p_0$-almost surely:
\begin{align}
	p_{T_0}(y|x) = \frac{\exp{T_0(x,y)}}{Z(T_0(x))} = \frac{\exp{T_0(x,y) + m(x)}}{\exp{m(x)}Z(T_0(x))} = p_{T_0}(y|x) 
\end{align} 

\end{proof}

\begin{lemma}[Proposition A.3 in \cite{Sriperumbudur:2013}]
\label{lemm:range_space}
	Let $C$ be a bounded, positive self-adjoint compact operator on a separable Hilbert space $\h$. For $\lambda >0 $ and $ T \in \h$, define $ T_{\lambda} := (C + \lambda I )^{-1}CT $ and $ \mathcal{A}_{\theta}(\lambda) := \Vert C^{\theta} (T_{\lambda} - T)  \Vert_{\h}  $ for $\theta \geq 0 $. Then the following hold.
	\begin{enumerate}
		\item For any $\theta>0$,  $\mathcal{A}_{\theta}(\lambda) \rightarrow 0$ as $ \lambda \rightarrow 0$ and if $ T \in \overline{\mathcal{R}(C)} $, then $\mathcal{A}_{0}(\lambda) \rightarrow 0$ as $ \lambda \rightarrow 0$.
		\item If $T \in  \mathcal{R}(C^{\beta})$ for $\beta \geq 0 $ and $ \beta + \theta >0 $, then 
		 \begin{align}
		 	 \mathcal{A}_{\theta}(\lambda ) \leq \max \{  1, \Vert C  \Vert^{\beta + \theta -1} \}\lambda^{\min \{ 1, \beta + \theta \} }\Vert C^{-\beta}T\Vert_{\h} 
		 \end{align}
	\end{enumerate}	

\end{lemma}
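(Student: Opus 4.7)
The plan is to reduce everything to a scalar computation via the spectral theorem. Since $C$ is compact, positive, and self-adjoint on the separable Hilbert space $\h$, it admits a spectral decomposition $C = \sum_i \lambda_i\, e_i \otimes e_i$ with $\lambda_i \geq 0$ and $\{e_i\}$ an orthonormal eigenbasis. Writing $T = \sum_i t_i e_i$ with $t_i = \inner{T,e_i}_{\h}$, the identity $(C+\lambda I)^{-1}C = I - \lambda (C+\lambda I)^{-1}$ gives
\begin{align*}
T_\lambda - T = -\lambda (C+\lambda I)^{-1} T,
\qquad
C^\theta (T_\lambda - T) = -\lambda \sum_i \frac{\lambda_i^\theta}{\lambda_i + \lambda}\, t_i\, e_i,
\end{align*}
so
\begin{align*}
\mathcal{A}_\theta(\lambda)^2 \;=\; \sum_i \left(\frac{\lambda\, \lambda_i^\theta}{\lambda_i + \lambda}\right)^{\!2} t_i^2.
\end{align*}
Everything then reduces to analyzing the scalar multiplier $g_\theta(\lambda_i,\lambda) := \lambda\lambda_i^\theta/(\lambda_i+\lambda)$.

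For part (1), I would apply dominated convergence on the counting measure. For $\theta > 0$ and any eigenvalue $\lambda_i \geq 0$, $g_\theta(\lambda_i,\lambda)^2 \to 0$ as $\lambda\to 0$ (treating $\lambda_i = 0$ separately where the expression is identically zero), while $g_\theta(\lambda_i,\lambda)^2 \leq \lambda_i^{2\theta} \leq \norm{C}^{2\theta}$ uniformly in $\lambda$, so the summand is dominated by $\norm{C}^{2\theta} t_i^2$, whose sum is $\norm{C}^{2\theta}\norm{T}_{\h}^2<\infty$. For the $\theta = 0$ case, $g_0(\lambda_i,\lambda)^2 = \lambda^2/(\lambda_i+\lambda)^2$ tends to $0$ only when $\lambda_i > 0$; it stays equal to $1$ when $\lambda_i = 0$. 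Here I would invoke the assumption $T\in \overline{\mathcal{R}(C)} = \mathcal{N}(C)^\perp$, which forces $t_i = 0$ whenever $\lambda_i = 0$, so the problematic terms disappear and DCT again yields $\mathcal{A}_0(\lambda) \to 0$.

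For part (2), the hypothesis $T \in \mathcal{R}(C^\beta)$ means $T = C^\beta S$ with $S = C^{-\beta}T$, i.e.\ $t_i = \lambda_i^\beta s_i$. Setting $s := \theta + \beta > 0$, the summand becomes $(\lambda\lambda_i^{s}/(\lambda_i+\lambda))^2 s_i^2$, so the key is a uniform bound
\begin{align*}
\frac{\lambda\,\lambda_i^{s}}{\lambda_i + \lambda} \;\leq\; \max\bigl\{1,\;\norm{C}^{s-1}\bigr\}\,\lambda^{\min\{1,s\}}.
\end{align*}
I would split into two cases. If $s \geq 1$, write $\lambda_i^s = \lambda_i^{s-1}\lambda_i \leq \norm{C}^{s-1}\lambda_i$, then bound $\lambda\lambda_i/(\lambda_i+\lambda) \leq \lambda$. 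If $s \in (0,1)$, use weighted AM--GM, $\lambda^{1-s}\lambda_i^s \leq (1-s)\lambda + s\lambda_i \leq \lambda + \lambda_i$, to conclude $\lambda\lambda_i^s/(\lambda_i+\lambda) \leq \lambda^s$. Summing $(\cdot)^2 s_i^2$ over $i$ and taking square roots gives the claimed bound with $\norm{C^{-\beta}T}_{\h} = \norm{S}_{\h}$.

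The only subtle step is the weighted AM--GM inequality for the case $s < 1$; everything else is bookkeeping. A minor annoyance is the mild abuse in writing $C^{-\beta}T$, which should be interpreted as ``the pre-image in $\mathcal{N}(C)^\perp$,'' but since we only use its norm this causes no issue.
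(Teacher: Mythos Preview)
Your proposal is correct and follows essentially the same approach as the paper's proof: spectral decomposition of $C$, reduction to the scalar multiplier $\lambda\lambda_i^{\theta}/(\lambda_i+\lambda)$, dominated convergence for part~(1), and a case split on whether $\beta+\theta$ exceeds $1$ for part~(2). The only noteworthy difference is in the sub-case $s=\beta+\theta<1$: you invoke weighted AM--GM to get $\lambda^{1-s}\lambda_i^{s}\le\lambda_i+\lambda$, whereas the paper writes the same quantity as $\bigl(\tfrac{\lambda_i}{\lambda_i+\lambda}\bigr)^{s}\bigl(\tfrac{\lambda}{\lambda_i+\lambda}\bigr)^{1-s}\lambda^{s}$ and bounds each factor by $1$; both are one-line arguments yielding the identical bound.
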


\begin{proof}
	
	\begin{enumerate}
		\item Since $C$ is bounded, compact and positive self-adjoint, Hilbert-Shmidt  and $\h$ is a separable Hilbert space then $C$ admits an Eigen-decomposition of the form $ C = \sum_l \alpha_l \phi_l\langle \phi_l \rangle_{\h} $ where $ ( \alpha_l )_{l\in \n} $ are positive eigenvalues and $(\phi_l)_{l\in \n} $ are the corresponding unit eigenvectors that form an ONB for $\mathcal{R}(C)$. Let $\theta = 0$. Since $T  \in \overline{\mathcal{R}(C)}$, 
		\begin{align}
			\mathcal{A}^{2}_0(\lambda) 			
			&=
			\Vert  (C+\lambda I)^{-1}C T - T \Vert^2_{\h} 
			= 
			\Big\Vert  \sum_i \frac{\alpha_i}{\alpha_i + \lambda}  \langle T, \phi_i \rangle_{\h}\phi_i - \sum_i \langle T, \phi_i \rangle_{\h}\phi_i \Big\Vert^{2}_{\h}\\
			&=
			\Big\Vert  \sum_i \frac{\lambda}{\alpha_i + \lambda}  \langle T, \phi_i \rangle_{\h}\phi_i \Big\Vert^{2}_{\h} 
			= 
			\sum_i \Big(\frac{\lambda}{\alpha_i + \lambda} \Big)^2 \langle T, \phi_i \rangle_{\h}^2 \rightarrow 0 \text{ as } \lambda \rightarrow 0 
			\end{align}
		by the dominated convergence theorem. For any $\theta >0$, we have:
		\begin{align}
			\mathcal{A}^{2}_0(\lambda) 
			&=
			\Vert  C^{\theta}(C+\lambda I)^{-1}C T - C^{\theta}T \Vert^2_{\h} = \Big\Vert  \sum_i \frac{\alpha_i}{\alpha_i + \lambda}  \langle T, \phi_i \rangle_{\h}\phi_i - \sum_i \langle T, \phi_i \rangle_{\h}\phi_i \Big\Vert^{2}_{\h}.
		\end{align}
		Let $ T = T_{R} + T_N $ where $ T_R \in  \overline{\mathcal{R}(C^{\theta})}$, $T_N \in \overline{\mathcal{R}(C^{\theta})}^{\perp} $ if $  0<\theta\leq 1 $ and $T_N \in \overline{\mathcal{R}(C)}^{\perp} $ if $\theta \geq 1  $. Then 
		\begin{align}
			\mathcal{A}^{2}_0(\lambda) 
			&=
			\Vert  C^{\theta}(C+\lambda I)^{-1}C T - C^{\theta}T \Vert^2_{\h}
			=
			\Vert  C^{\theta}(C+\lambda I)^{-1}C T_R - C^{\theta}T_R \Vert^2_{\h}\\
			&=
			\Big\Vert  \sum_i \frac{\alpha_i^{1+\theta}}{\alpha_i + \lambda}  \langle T_R, \phi_i \rangle_{\h}\phi_i - \sum_i \alpha_i^{\theta}\langle T_R, \phi_i \rangle_{\h}\phi_i \Big\Vert^{2}_{\h}\\
			&=
			\Big\Vert  \sum_i \frac{\lambda \alpha_i^{\theta}}{\alpha_i + \lambda}  \langle T_R, \phi_i \rangle_{\h}\phi_i \Big\Vert^{2}_{\h} 
			=
			 \sum_i \Big(\frac{\lambda\alpha_i^{\theta}}{\alpha_i + \lambda} \Big)^2 \langle T_R, \phi_i \rangle_{\h}^2 \rightarrow 0 \text{ as } \lambda \rightarrow 0
		\end{align}
	\item
	
	If $T\in \mathcal{R}(C^{\beta}) $, then there exists $g \in \h$ such that $ T = C^{\beta} g$.  This yields 
	\begin{align}
		\mathcal{A}^{2}_0(\lambda) 
			&=
			\Vert  C^{\theta}(C+\lambda I)^{-1}C T - C^{\theta}T \Vert^2_{\h}
			=
			\Vert  C^{\theta}(C+\lambda I)^{-1}C^{1+\beta} g - C^{\beta + \theta}g \Vert^2_{\h}\\
			&=
			\Big\Vert  \sum_i \frac{\lambda \alpha_i^{\beta + \theta}}{\alpha_i + \lambda}  \langle g, \phi_i \rangle_{\h}\phi_i \Big\Vert^{2}_{\h} 
			=
			 \sum_i \Big(\frac{\lambda\alpha_i^{\beta + \theta}}{\alpha_i + \lambda} \Big)^2 \langle g, \phi_i \rangle_{\h}^2			
	\end{align}
	Suppose $ 0 < \beta + \theta < 1$. Then
	\begin{align}
		\frac{\lambda \alpha_i^{\beta + \theta}}{\alpha_i + \lambda} = \Big(  \frac{\alpha_i}{\alpha_i + \lambda} \Big)^{\beta + \theta}\Big(  \frac{\lambda}{\alpha_i + \lambda} \Big)^{1 - \beta - \theta} \lambda^{\beta + \theta} \leq \lambda^{\beta + \theta} 
	\end{align}
	On the other hand, for $ \beta + \theta \geq 1 $, we have:
	\begin{align}
			\frac{\lambda \alpha_i^{\beta + \theta}}{\alpha_i + \lambda} 
			=
			 \Big(  \frac{\alpha_i}{\alpha_i + \lambda} \Big)\alpha_i^{\beta + \theta -1} \lambda \leq \Vert \Vert^{\beta + \theta - 1} \lambda.
			\end{align}
			Using the above bounds yields the result.
	\end{enumerate}
\end{proof}

\begin{lemma}[Proposition A.4 in \cite{Sriperumbudur:2013}]
	\label{thm:rates}
	Let $\x$ be a topological space, $\h$ be a separable Hilbert space and $\mathcal{L}_2^{+}(\h) $ be the space of positive, self-adjoint Hilbert-Schmidt operators on $\h$. Define $ R := \int_{\x} r(x)d\mathbb{P}(x) $ and $ \hat{R} := \frac{1}{n} \sum_{a= 1}^m r(X_a) $ where $\mathbb{P}\in M_{+}^1(\x)$ is a positive measure with finite mean,  $  (X_a)_{a=1}^m \sim \mathbb{P}$ and $r$ is an  $\mathcal{L}_2^{+}(\h) $-valued measurable function on $\x$ satisfying $ \int_{\x} \Vert  r(x) \Vert_{HS}^2 d\mathbb{P}(x) < \infty $. Define $ g_{\lambda} := (R+\lambda I )^{-1}Rg  $ for $g\in \h$, $\lambda >0$ and 
		$ \mathcal{A}_0(\lambda) :=  \Vert g_{\lambda} - g \Vert_{\h} $. Let $\alpha \geq 0 $ and $\theta \geq 0$. Then the following hold:
	\begin{enumerate}
		\item $\Vert (\hat{R} - R )(g_{\lambda} - g)  \Vert_{\h} =  O_{\mathbb{P}} (\frac{\mathcal{A}_0(\lambda)}{\sqrt{m}} )$
		\item  $\Vert R^{\alpha}(R + \lambda I )^{-\theta}  \Vert \leq \lambda^{\alpha- \theta}$.
		\item  $\Vert \hat{R}^{\alpha}(\hat{R} + \lambda I )^{-\theta}  \Vert \leq \lambda^{\alpha- \theta}$.
		\item $\Vert (R + \lambda I )^{-\theta}(\hat{R} - R ) \Vert =  O_{\mathbb{P}} (\frac{1}{\sqrt{m\lambda^{2\theta}}} )$.
	\end{enumerate}

\end{lemma}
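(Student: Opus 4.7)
The plan is to treat the four claims separately, since (2) and (3) are deterministic operator inequalities provable by functional calculus, while (1) and (4) are Chebyshev-type concentration bounds in the separable Hilbert space $\h$ (respectively the Hilbert space of Hilbert--Schmidt operators on $\h$).

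For (2), I would exploit that $R$ is positive, self-adjoint and compact on the separable Hilbert space $\h$, so by the spectral theorem it admits an eigendecomposition $R = \sum_i \alpha_i \phi_i \otimes \phi_i$ with $\alpha_i \geq 0$. Functional calculus then gives
\begin{equation*}
\norm{R^{\alpha}(R+\lambda I)^{-\theta}}_{\mathrm{op}} = \sup_i \frac{\alpha_i^{\alpha}}{(\alpha_i + \lambda)^{\theta}}.
\end{equation*}
In the regime $0 \leq \alpha \leq \theta$ the scalar inequality
$s^{\alpha}(s+\lambda)^{-\theta} = \bigl(s/(s+\lambda)\bigr)^{\alpha} (s+\lambda)^{\alpha-\theta} \leq \lambda^{\alpha-\theta}$
holds for all $s \geq 0$ (the first factor is at most $1$, and the second is decreasing in $s$ since $\alpha - \theta \leq 0$, attaining its maximum $\lambda^{\alpha-\theta}$ at $s=0$). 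This yields (2), and (3) is immediate by replacing $R$ with $\hat R$, which is positive self-adjoint compact by construction as a finite average of rank-one operators $r(X_a) \in \mathcal{L}_2^{+}(\h)$.

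For (1), I would set $Z_a := (r(X_a) - R)(g_{\lambda} - g) \in \h$, so that $(\hat R - R)(g_{\lambda} - g) = \frac{1}{m}\sum_a Z_a$ with $\E Z_a = 0$ and the $Z_a$ i.i.d. Because $\Vert (r(X_a) - R)u\Vert_{\h} \leq \Vert r(X_a) - R\Vert_{HS}\Vert u\Vert_{\h}$, and $\int \Vert r(x)\Vert_{HS}^2 \,\ud\PP < \infty$ by hypothesis, the second moment $\E\Vert Z_a\Vert_{\h}^2 \leq \mathcal{A}_0(\lambda)^2 \cdot \E\Vert r(X_1) - R\Vert_{HS}^2$ is finite. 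A direct variance computation in $\h$ then gives
\begin{equation*}
\E\Bigl\Vert \tfrac{1}{m}\sum_{a=1}^m Z_a \Bigr\Vert_{\h}^2 = \frac{1}{m}\E\Vert Z_1\Vert_{\h}^2 = O(\mathcal{A}_0(\lambda)^2/m),
\end{equation*}
and Markov's (Chebyshev's) inequality in the form $\PP(\Vert \frac{1}{m}\sum Z_a\Vert \geq t) \leq E\Vert \frac{1}{m}\sum Z_a\Vert^2/t^2$ yields the $O_{\PP}(\mathcal{A}_0(\lambda)/\sqrt m)$ rate.

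For (4), the key step is to combine (2) with a Hilbert--Schmidt concentration bound. First, by (2) applied with $\alpha = 0$ we have $\Vert(R+\lambda I)^{-\theta}\Vert_{\mathrm{op}} \leq \lambda^{-\theta}$, so
\begin{equation*}
\Vert (R+\lambda I)^{-\theta}(\hat R - R)\Vert_{\mathrm{op}} \leq \lambda^{-\theta}\,\Vert \hat R - R\Vert_{\mathrm{op}} \leq \lambda^{-\theta}\,\Vert \hat R - R\Vert_{HS}.
\end{equation*}
It then suffices to show $\Vert \hat R - R\Vert_{HS} = O_{\PP}(1/\sqrt m)$. This is once again a Chebyshev argument, now carried out in the separable Hilbert space $\bigl(\mathcal{L}_{HS}(\h), \Vert\cdot\Vert_{HS}\bigr)$: the centred i.i.d.\ $HS$-valued variables $r(X_a) - R$ have finite second moment $\E\Vert r(X_1) - R\Vert_{HS}^2 \leq \int \Vert r(x)\Vert_{HS}^2 \ud\PP(x) < \infty$, so $\E\Vert \hat R - R\Vert_{HS}^2 \leq \frac{1}{m}\int \Vert r\Vert_{HS}^2 \ud\PP$ and Chebyshev closes the argument, giving $O_{\PP}(1/\sqrt{m\lambda^{2\theta}})$.

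The only delicate point is the functional-calculus estimate underlying (2)--(3): one must track the scalar inequality $s^\alpha(s+\lambda)^{-\theta} \leq \lambda^{\alpha-\theta}$ carefully and note that this is the regime in which the statement is to be interpreted. The concentration parts (1) and (4) are routine Chebyshev bounds in the respective Hilbert spaces, with the only subtlety being to pass through the $HS$-norm to control the operator norm of $\hat R - R$.
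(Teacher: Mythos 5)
Your proposal is correct and follows essentially the same route as the paper: parts (2) and (3) are established via the spectral calculus on the positive compact operators $R$ and $\hat R$, and parts (1) and (4) are Chebyshev bounds in $\h$ and in the Hilbert--Schmidt space respectively, after a second-moment computation for i.i.d.\ centred Bochner-integrable variables. Two small differences are worth noting. In part (1) you center the summands as $Z_a = (r(X_a)-R)(g_\lambda - g)$ and use $\E\norm{\bar Z}^2 = \tfrac1m\E\norm{Z_1}^2$, while the paper expands $\E\norm{(\hat R - R)f}^2$ directly; these are the same calculation in different clothing. In part (4) you factor $\norm{(R+\lambda I)^{-\theta}}_{\mathrm{op}} \le \lambda^{-\theta}$ out first and then bound $\norm{\hat R - R}_{HS} = O_{\PP}(m^{-1/2})$, whereas the paper keeps the resolvent inside the Hilbert--Schmidt norm and pulls out $\norm{(R+\lambda I)^{-2\theta}}_{\mathrm{op}}$ after the trace estimate; both give the same $O_{\PP}(1/\sqrt{m\lambda^{2\theta}})$ rate (the paper's intermediate display $\norm{\cdot}_{\mathrm{op}} \le \norm{\cdot}_{HS}^2$ has a stray square that your version avoids). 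Finally, your remark that the scalar inequality behind (2)--(3) requires $\alpha \le \theta$ is a correct and useful clarification: the paper states $\alpha, \theta \ge 0$ without making this restriction explicit, but its own proof step $\sup_i(\gamma_i+\lambda)^{-(\theta-\alpha)} \le \lambda^{\alpha-\theta}$ silently assumes $\theta \ge \alpha$, and all later invocations respect it.
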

\begin{proof}
	\begin{enumerate}
		\item Not that for any $f\in \h $,
		\begin{align}
			\E_{\PP}\norm{(\hat{R} - R)f}^2_{\h} = \E_{\PP} \norm{\hat{R}f}_{\h}^2  + \norm{Rf}^2_{\h} - 2\E_{\PP} \inner{\hat{R}f, Rf}_{\h}
		\end{align} 
		where $\E_{\PP} \inner{\hat{R}f,Rf }_{\h}  =  \frac{1}{n} \sum_{a=1}^n \E_{\PP} \inner{r(X_a)f, Rf )}_{\h}  = \frac{1}{n} \sum_{a=1}^n \E_{\PP}\inner{r(X_a), f\otimes Rf}_{HS}   $. Since $ \int_{\x} \norm{r(x)}^2_{HS} d\PP(x)  < \infty  $, $r(x)$ is $\PP$-integrable in the Bochner sense (see  \cite{Retherford:1978} ), and therefore it follows $ \E_{\PP}\inner{r(X_a), f\otimes Rf}_{HS}  =  \inner{ \int_{\x} r(x) d \PP(x), f\otimes Rf  }_{HS} = \norm{Rf}_{HS}^2 $. Therefore, 
		\begin{align}
			\E_{\PP} \norm{(\hat{R} - R)f}_{\h}^2 =  \E_{\PP} \norm{\hat{R}f}^2_{\h} - \norm{Rf}^2_{\h}
 		\end{align}  
 		where 
 		\begin{align}
 			\E_{\PP} \norm{\frac{1}{m} \sum_{a=1}^m  r(X_a)f}_{\h}^2 =  \frac{1}{m^2} \sum_{a,b=1}^m \E_{\PP}  \inner{r(X_A)f , r(X_b)f }_{\h}.
  		\end{align}

  		Splitting the sum into two parts (one with $a = b$ and the other with $a \neq b$), it is easy to verify that $\E_{\PP} \norm{\hat{R}f}_{\h}^2  =\frac{1}{m} \int_{\x} \norm{r(x)f}^2_{\h} d \PP(x)  +  \frac{m-1}{m} \norm{Rf}_{\h}^2 $, therefore yielding

  		\begin{align}
  			\E_{\PP}\norm{(\hat{R} - R)f}^2_{\h} = \frac{1}{m} \Big( \int_{\x} \norm{r(x)f}_{\h}^2  d\PP(x)  - \norm{Rf}_{\h}^2 \Big) 
  			&\leq 
  			\frac{1}{m} \int_{\x}\norm{r(x)f}^2_{\h} d\PP(x))\\
  			&\leq
  			\frac{\norm{f}_{\h}^2}{m } \int_{\x} \norm{r(x)}_{HS}^2d\PP(x)
  		\end{align}
 
 		Using $f = g_{\lambda} - g$, an application of Chebyshev's inequality yields the result.
 		 
  		\item $\norm{R^{\alpha}( R+\lambda I )^{-\theta}}  = \sup_{i} \frac{\gamma_{i}^{\alpha}}{(\gamma_i + \lambda)^{\theta}}    = \sup_{i} \big[ (\frac{\gamma_i}{\gamma_i  +\lambda} )^{\alpha} \frac{1}{( \gamma_i + \lambda  )^{\theta  - \alpha}} \big]   				\leq \sup_i  \frac{1}{( \gamma_i + \lambda )^{\theta - \alpha }} \leq \lambda^{\alpha - \theta}$, where $(\gamma_i )_{i\in n}$ are the eigenvalues of $R$. 
  		\item Same as above, after replacing $(\gamma_i )_{i\in \n}$ by the eigenvalues of $\hat{R}$
  		\item Since $ \norm{(R+\lambda I  )^{-\theta}(\hat{R} - R )}  \leq  \norm{(R+\lambda I  )^{-\theta}(\hat{R} - R )}_{HS}^2  $, consider  $\E_{\PP}   \norm{(R+\lambda I  )^{-\theta}(\hat{R} - R )}_{HS}^2$, which using the technique in the proof of $ (1)$, can be shown to be bounded as 
  		\begin{align}
  			\E_{\PP}  \norm{(R+\lambda I  )^{-\theta}(\hat{R} - R )}_{HS}^2 \leq
  			\frac{1}{m}  \int_{\x}  \norm{(R+\lambda I  )^{-\theta} r(x)}_{HS}^2 d \PP(x)
  			\label{eq:appendix_eq_1}
  		\end{align}
  		Note that 
  		\begin{align}
  			\norm{(R+\lambda I  )^{-\theta} r(x)}_{HS}^2 
  			&=
  			 \inner{ R+\lambda I  )^{-\theta} r(x), R+\lambda I  )^{-\theta} r(x) }_{HS}\\
  			 &=
  			 \norm{(R+\lambda I  )^{-2\theta}}Tr( r(x) r(x)) =  \norm{(R+\lambda I  )^{-2\theta}} \norm{r(x)}_{HS}^2\\
  			 &\leq
  			 \lambda^{-2\theta} \norm{r(x)}_{HS}^2
  			 \label{eq:appendix_eq_2}
  		\end{align}
  		where the inequality follows from (3). Using \cref{eq:appendix_eq_1}  and \cref{eq:appendix_eq_2}, we obtain
  		\begin{align}
  			\E_{\PP}  \norm{(R+\lambda I  )^{-\theta} r(x)}_{HS}^2  \leq  \frac{1}{m\lambda^{2\theta}}  \int_{\x}  \norm{(R+\lambda I  )^{-\theta} r(x)}_{HS}^2 d \PP(x)
  		\end{align}
  		The result follows by an application of Chebyshev's inequality.
	\end{enumerate}
\end{proof}

\section{Failure case for the score-matching approach}
\label{app:failure_case}
We first recall the expressions of the score and expected conditional score for convenience. If $r$ and $s$ are two densities that are differentiable and positive, then the score objective as introduced in  \cite{Hyvarinen:2005} is given by:
\begin{align}\label{eq:app_score}
	\mathcal{J}(r||s) := \frac{1}{2} \int_{\x} r(x)\Vert \nabla_x \log  r(x) - \nabla_x \log s(x) \Vert^2 dx
\end{align}

If $p_0(y|x)$ and $q(y|x)$ are two conditional densities, then the expected conditional score under some marginal distribution $\pi(x)$ is given by:

\begin{align}\label{eq:app_cond_score}
	J(p_0|q) = \int_{\x} \mathcal{J}(p_0(.|x) q(.|x))\pi(x)dx
\end{align}

The positivity condition of the target density $r$ is crucial to get a well-behaved divergence between $r$ and $s$ in \cref{eq:app_score}. When this condition fails, the score becomes degenerate. For instance, if $r$ is supported on two disjoint sets $A$ and $B$ of $\x$ it can be written in the form:
 \[
 r(x) = \alpha_A p_A(x) + \alpha_B p_B(x)
 \]
where $\alpha_A$ and $\alpha_B$ are non-negative and sum to $1$, and $p_A$ and $p_B$ are two distributions supported on $A$ and $B$ respectively.
In this case, any mixture $s(x) =   \beta_A p_A(x) +  \beta_B p_B(x) $ satisfies $J(r||s) = 0$.

Similarly, for the conditional expected score in \cref{eq:app_cond_score} to be well behaved, the conditional density $p_0(y|x)$ needs to be positive on $\y$ for all $x$ in $\x$. When this condition fails to hold, the same degeneracy happens. Indeed, as shown in \cref{fig:failure_case}, consider $p_0$ of the form:
\[p_0(y|x) =  p_A(y)H(x)  + (1-H(x))p_B(y)\]
where $p_A$ and $p_B$ are supported on two disjoint sets $A$ and $B$ respectively and $H$ denotes the Heaviside step function. For this choice of $p_0$ any mixture $q(y) = \beta_A p_A(y)   + \beta_B p_B(y)$ of $p_A$ and $p_B$ satisfies $J(p_0||q) = 0$. This is because their scores match exactly: $\nabla_y \log p_0(y|x) = \nabla_y \log q(y) $ whenever $p_0(y|x)>0$. Note that in this case $q$ doesn't depend on $x$, which means that this approach might learn a model where $x$ and $y$ are independent while a simple investigation of the joint samples $(X_i,Y_i)$ would suggest the opposite.    

\begin{figure}
\begin{center}
\begin{tikzpicture}
\begin{axis}[%
    ,axis line style = thick
    ,domain=-6:6
    ,samples=100
    ,axis lines=middle
    ,enlargelimits=true
    ,xtick=\empty,ytick=\empty
    ,yticklabels={}
    ] 
    \addplot[scale=0.5,line width = 0.5mm,domain=0:6,color = red] {8*exp(-(\x-3)^2)};
    \addplot[scale=0.5,line width = 0.5mm,domain=-6:0,color = blue] {8*exp(-(\x+3)^2)};
    \addplot[scale=0.5,line width = 0.5mm,domain=-6:6 , color = green] {4*(exp(-(\x+3)^2) + exp(-(\x-3)^2))}; 
\end{axis} 
\end{tikzpicture}
\end{center}
\caption{ A Failure case for the expected conditional score-matching. Here a conditional density of the form $p_0(y|x) =  p_A(y)H(x)  + (1-H(x))p_B(y)$ is considered, where $p_A$ and $p_B$ are supported on two disjoint sets $A \subset \R_{-}^{*} $ and $B \subset \R_{+}^{*}$  and $H$ denotes the Heaviside step function. The red curve and blue curve represent $p_0(y|x>0) = p_A$ and  and $p_0(y|x<=0) = p_B$ respectively, while the green curve represent the mixture $q(y) = \frac{1}{2}(p_A(y)   + p_B(y))$. This is a case where the expected conditional score  fails to separate the two conditional distributions $p_0(y|x)$ and $q(y)$. }
\label{fig:failure_case}
\end{figure}
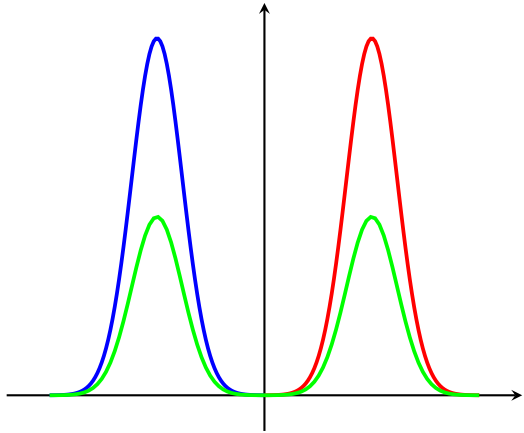
\section{Additional experimental results}
\label{app:additional_exp}
Additional experimental results are shown in \cref{fig:UCI_samples_all_methods} on the Red Wine and Parkinsons datasets. 
\begin{figure}
\centering
\includegraphics[width=\linewidth]{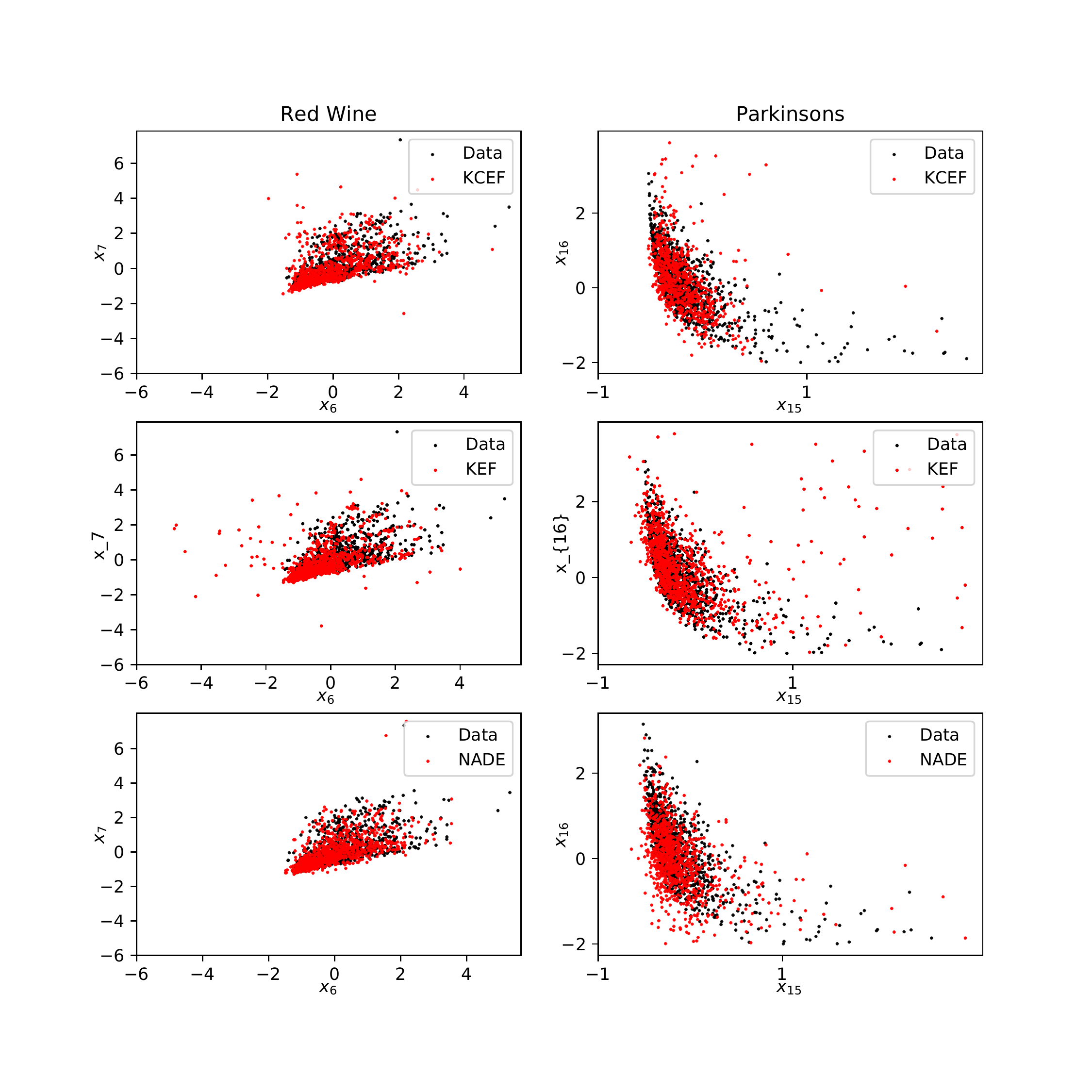}
\caption{Scatter plot of 2-d slices of \textit{red wine} and \textit{parkinsons} data sets, the dimensions are $(x_6,x_7)$ for \textit{red wine} and $(x_{15},x_{16})$ for \textit{parkinsons}. The black points represent 1000 data points from the data sets. In red, 1000 samples from each of the three models KEF, KCEF and NADE.  }
\label{fig:UCI_samples_all_methods}
\end{figure}

Experimental results on the synthetic grid dataset are shown in \cref{fig:old_exp_wave} in the case where an isotropic RBF kernel is used.
  \begin{figure}
\includegraphics[width=0.5\linewidth]{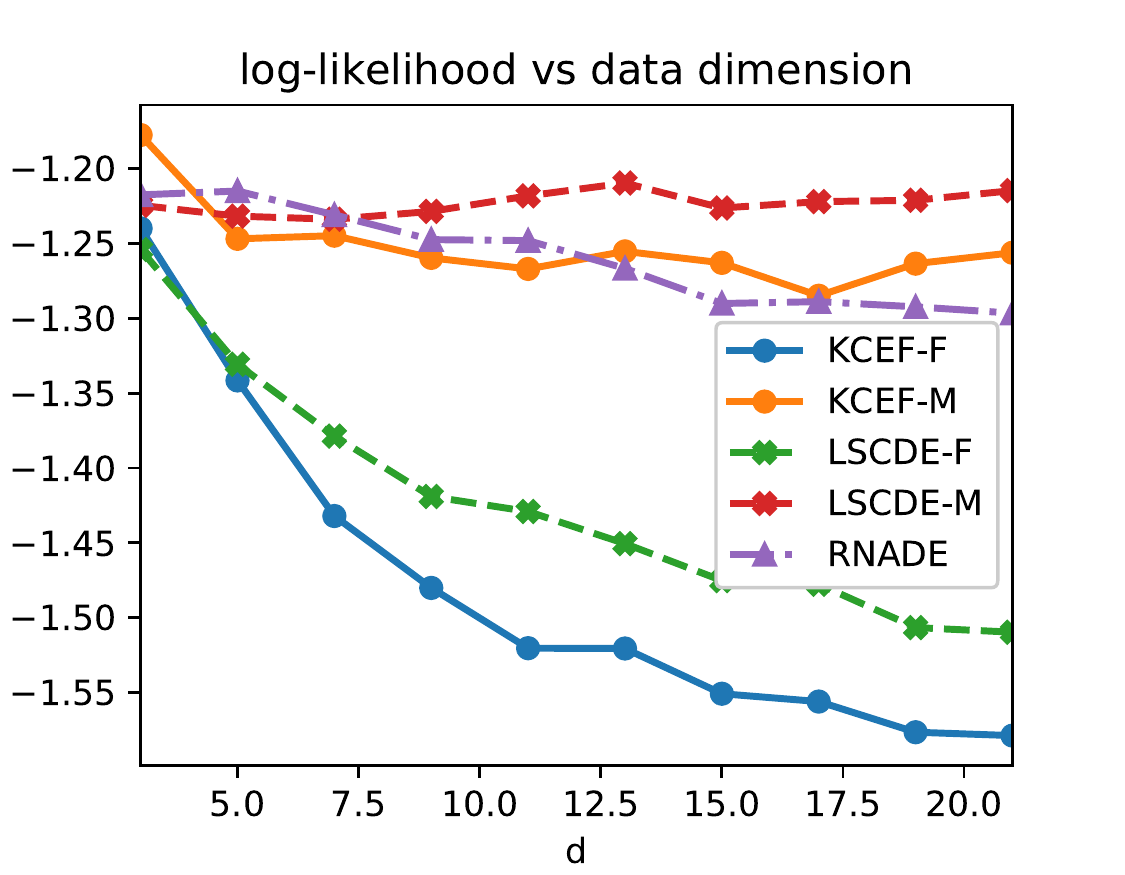}
\centering
\caption{Experimental comparison of proposed method KCEF and other methods (  LSCDE and NADE ) on synthetic \textit{grid} dataset.   log-likelihood per dimension vs dimension,  $N=2000$. The log-likelihood is evaluated on a separate test set of size $2000$.  }
\label{fig:old_exp_wave}
\end{figure}

\end{appendices}

\printbibliography
\end{refsection}

\end{document}